\DeclareMathOperator*{\argmax}{argmax}
\DeclareMathOperator*{\argmin}{argmin}
\newtheorem{theorem}{Theorem}[section]
\newtheorem{lemma}[theorem]{Lemma}
\newtheorem{corollary}[theorem]{Corollary}
\newtheorem{proposition}[theorem]{Proposition}
\theoremstyle{definition}
\newtheorem{definition}[theorem]{Definition}
\newtheorem{example}[theorem]{Example}
\newtheorem{assumption}[theorem]{Assumption}
\theoremstyle{remark}
\newtheorem{remark}[theorem]{Remark}
\numberwithin{equation}{section}
\begin{document}

\title{PPO in the Fisher-Rao geometry}
\author{Razvan-Andrei Lascu}
\address{Center for Advanced Intelligence Project, RIKEN AIP, Japan}
\email{razvan-andrei.lascu@riken.jp}

\author{David \v{S}i\v{s}ka}
\address{School of Mathematics, University of Edinburgh, UK}
\email{d.siska@ed.ac.uk}

\author{\L ukasz Szpruch}
\address{School of Mathematics, University of Edinburgh, UK, The Alan Turing Institute, UK}
\email{l.szpruch@ed.ac.uk}

\keywords{Proximal Policy Optimization, Trust Region Policy Optimization, Markov Decision Processes, Fisher-Rao geometry, Global convergence}

\dedicatory{}

\begin{abstract}
Proximal Policy Optimization (PPO) is widely used in reinforcement learning due to its strong empirical performance, yet it lacks formal guarantees for policy improvement and convergence.
PPO's clipped surrogate objective is motivated by a lower bound on linearization of the value function in flat geometry setting.
We derive a tighter surrogate objective and introduce Fisher--Rao PPO (FR-PPO) by leveraging the Fisher--Rao (FR) geometry. Our scheme provides strong theoretical guarantees, including monotonic policy improvement. In the direct parametrization setting, we show that FR-PPO achieves sub-linear convergence with no dependence on action or state space dimensions, and for parametrized policies we further obtain sub-linear convergence up to the compatible function approximation error. Finally, although our primary focus is theoretical, we also demonstrate empirically that FR-PPO performs well across a range of standard reinforcement learning tasks.
\end{abstract}

\maketitle
\section{Introduction}
\label{sec:Introduction}
Reinforcement learning (RL) has achieved remarkable success across a wide range of applications, including video games, robotics, and large-scale decision-making systems \cite{mnih2015humanlevel,silver,Vinyals2019GrandmasterLI,rudin2021learning,ouyang,black2024training,makoviychuk2021isaac}. Among RL methods, policy gradient (PG) algorithms have proven particularly effective for policy optimization, relying on gradient estimators of the expected return to iteratively update the policy \cite{PETERS2008682,HU}. However, a major challenge in PG methods is selecting an appropriate step size for policy updates. An inaccurate step size can lead to performance degradation, as the training data depends on the current policy \cite{kakade,schulman-15}. As such, balancing learning stability and efficiency remains a critical concern.

To address this, Trust Region Policy Optimization (TRPO) introduced a trust region constraint that explicitly limits the Kullback-Leibler (KL) divergence between successive policies, ensuring stable learning and monotonic performance improvement \cite{schulman-15}.
The theoretical foundation of TRPO lies in optimizing a surrogate objective that includes a KL divergence constraint.
This comes at the additional computational cost due to the need to perform backtracking line search at each step.

Proximal Policy Optimization (PPO) builds on the ideas of TRPO and offers a more practical alternative. By replacing TRPO's hard KL constraint or KL penalty with a clipping mechanism, PPO transforms the constrained optimization into an unconstrained first-order method \cite{schulman2017}. The clipping restricts the probability ratio between the current and previous policies, effectively restraining excessive policy updates and enabling scalable training in high-dimensional, complex environments \cite{ye,makoviychuk2021isaac}.

Despite its empirical robustness, PPO's theoretical foundations remain less clear. While its design is intended to restrict large deviations between successive policies, studies have shown that PPO often fails to keep probability ratios within the intended bounds \cite{Ilyas2018AreDP,Engstrom2020Implementation,wang20}. This illustrates a gap between PPO's heuristic clipping and the theoretically sound trust-region approach of TRPO.

\subsection{Our contribution}
In this work, we argue that the geometric framework underlying the surrogate objective is crucial for proximal policy optimization algorithms.
Recent research, e.g., \cite{heess2017emergencelocomotionbehavioursrich,queeney2021generalized,zhuang2023behavior,gan2024reflective,xie2025simple,huang2025correcting}, highlights the role of the divergence used in the surrogate objective.
In particular, the objective function in PPO can be interpreted as a surrogate which uses a total variation (TV) distance penalty between the old and new policy.
In fact, the ``clip ratio'' objective penalizes large TV deviations by limiting probability ratio deviations.
However, two key challenges have impeded a rigorous convergence theory for such surrogate: (1) the TV distance is \textit{non-smooth}, and (2) TV is \textit{not} a \textit{Bregman divergence}.
This is a major obstacle as convergence analysis for policy mirror descent algorithms is either typically derived from first-order optimality condition for the update scheme or from the three-point inequality (Bregman proximal inequality) \cite{lan2022policymirrordescentreinforcement,kerimkulov2024fisherraogradientflowentropyregularised}.

In this paper, we present a novel theoretical analysis of a PPO-like method that overcomes these challenges by adopting a \textit{Fisher--Rao (Hellinger)} penalty for policy updates. The key contributions of this work are the following:
\begin{itemize}
\item We derive a novel, surrogate bound in a tighter metric on the difference of value functions (Theorem \ref{thm lower bound perf diff}). The improved estimate replaces the known uniform-in-state TV bound established by \cite{schulman-15} with a bound featuring $\operatorname{TV}^2$ integrated with respect to the occupancy measure at the old policy. 
\item We show the $\operatorname{TV^2}$ term is bounded above by the Bregman divergence generated by the chi-squared $(\chi^2)$ divergence, which in turn is equivalent to the Fisher--Rao metric on the space of squared densities. Hence, we demonstrate smoothness of the value function relative to the $\operatorname{FR}^2$ distance between squared densities (Corollary \ref{cor lower bound FR^2}). This enables reformulating PPO's surrogate in a mirror descent framework under the Fisher--Rao geometry.
\item By applying the three-point lemma, we establish sub-linear convergence of the FR--PPO scheme in the direct parameterization setting (Theorem \ref{thm:convergence_fr_steps}). Moreover, for linearly parameterized policies, we prove sub-linear convergence under compatible function approximation (Theorem \ref{thm:sublinear_convergence_proj_NPG} and \ref{thm:sublinear_convergence_approx_proj_NPG}).
\end{itemize}
These theoretical advances provide the first rigorous convergence guarantees for a PPO-style algorithm and open new avenues for developing more efficient and theoretically grounded policy optimization methods.

\subsection{Related works}
PG algorithms represent a core approach in RL, dating back to Williams' REINFORCE and the first PG theorem for function approximation \cite{williams}. \cite{SUTTON1999181} introduced PG methods with function approximation and baseline subtraction, laying the groundwork for actor-critic algorithms. Natural Policy Gradient (NPG) was proposed by \cite{kakadenatural} and further explored by \cite{PETERS2008682} as Natural Actor-Critic, aiming to improve PG by using the Fisher information metric to pre-condition updates. These works showed that using the geometry of the policy space can greatly speed up convergence and stability. Building on this insight, TRPO introduced a constrained update rule that guarantees monotonic improvement \cite{schulman-15}. TRPO's theoretical analysis provides a performance bound based on the uniform-in-state KL divergence between successive policies, ensuring that each update does not deviate too far from the previous policy. However, TRPO's reliance on second-order optimization (to enforce the KL constraint) makes it computationally expensive.

\cite{schulman2017} proposed PPO, which replaces the hard KL constraint with a clipped surrogate objective (or an adaptive KL penalty) that is easier to implement and tune. This surrogate objective uses the probability ratio $r(a|s)=\pi_{\text{new}}(a|s)/\pi_{\text{old}}(a|s)$ and penalizes deviations of $r$ from 1 beyond a clipping threshold. By design, this limits the change in policy at each update, effectively capping the $\operatorname{TV}$ distance between $\pi_{\text{new}}$ and $\pi_{\text{old}}$ across states. Similar to the TV distance, the clipping operator is non-smooth, which makes theoretical analysis of the exact PPO objective challenging. While PPO has become a standard go-to algorithm due to its robust empirical performance, its theoretical underpinnings are not as well-developed. Several works have attempted to understand PPO's convergence, e.g., \cite{liuCaiYangWang,Holzleitner2021,KOBAYASHI,zhong2023a,Huang2023PPOClipAG,jin2024on}. 

For instance, \cite{Huang2023PPOClipAG} analyzed PPO with the clipped surrogate objective (PPO--clip) and proved asymptotic convergence in the direct parametrization setting, without a rate. Furthermore, they prove $\mathcal{O}(1/\sqrt{n})$ min-iterate convergence for a variant, Neural--PPO--clip, which employs softmax parameterized policies with a one-hidden-layer neural network that only trains the weights inside the ReLU activation. Their analysis assumes compatible function approximation, i.e., that the $Q$-function of any policy can be represented by such a network. In contrast, we prove last-iterate convergence with $\mathcal{O}(1/n)$ rate in the direct parametrization setting. Moreover, under compatible function approximation, we improve the convergence rate of \cite{Huang2023PPOClipAG} from $\mathcal{O}(1/\sqrt{n})$ to $\mathcal{O}(1/n)$ for linearly parameterized policies. 

Another line of research has explored variants of PPO or TRPO with alternative divergence measures. For example, Simple Policy Optimization (SPO) employs $\chi^2$-divergence instead of KL in the surrogate objective \cite{xie2025simple}. However, unlike KL, the $\chi^2$-divergence is not a Bregman divergence, which prevents the use of the three-point lemma that is essential for establishing convergence rates. We show that the Fisher--Rao penalty offers a theoretically justified alternative to the KL divergence. It penalizes the ratio between successive policies, is smooth, and constitutes a Bregman divergence. This modification allows us to establish strong convergence guarantees.

There is a rich body of recent work on the convergence properties of policy optimization algorithms. For example, \cite{agarwal} proved convergence and optimality of PG methods in tabular MDPs, highlighting the role of regularization and exploration in obtaining global guarantees. They also studied unregularized NPG with softmax parameterization and compatible function approximation, establishing an $\mathcal{O}(1/\sqrt{n})$ convergence rate up to the compatible function approximation error. We show that FR--PPO improves the convergence rate in this context from $\mathcal{O}(1/\sqrt{n})$ to $\mathcal{O}(1/n)$. \cite{mei20b} showed that vanilla policy gradient descent for softmax parametrized policies achieves linear convergence in the tabular setting by exploiting a non-uniform \L ojasiewicz inequality and smoothness of the objective. 

Despite this progress, unregularized objectives like PPO have resisted comprehensive analysis due to the lack of smoothness. Our work contributes to this by showing that one can recover smoothness relative to a suitable distance without explicit regularization of the objective. The $\operatorname{FR}^2$ penalty in FR--PPO does not come from introducing an external or ad-hoc regularizer into the value function, as is common in prior works (e.g., \cite{mei20b,kerimkulov2024fisherraogradientflowentropyregularised}, which add a KL term directly to the value function). Instead, it arises naturally from deriving a lower bound on the difference between the unregularized value function evaluated at two different policies.  

A further key distinction between FR--PPO and the approaches of \cite{agarwal,mei20b} lies in the update mechanism. In \cite{mei20b}, the policy parameters are updated via gradient ascent on the value function (i.e., the vanilla policy gradient). When combined with function approximation, such updates may induce large or unstable shifts in the policy, since the step is only small in parameter space and not necessarily in policy space. In contrast, FR--PPO incorporates the Fisher--Rao penalty, which explicitly constrains the ratio between successive policies. This encourages updates to remain close in policy space, thereby improving stability. 

Our techniques build on concepts from convex optimization over measure spaces, notably Bregman divergences and mirror descent. A Bregman divergence generalizes the notion of distance to non-Euclidean geometries. Prominent examples in the context of probability distributions include the KL divergence (generated by the entropy) and the Fisher--Rao distance between squared densities (generated by the $\chi^2$-divergence). Mirror descent is a first-order method that uses Bregman divergence instead of the Euclidean distance to constrain updates \cite{korba}. In RL, policy mirror descent (PMD) algorithms have emerged as powerful methods that recast policy iteration as a form of mirror descent in the policy space \cite{lan2022policymirrordescentreinforcement,zhan2023policymirrordescentregularized,tomar2021mirrordescentpolicyoptimization}. For example, \cite{lan2022policymirrordescentreinforcement,zhan2023policymirrordescentregularized} independently proved linear convergence rates of PMD for entropy-regularized MDPs. 

Our approach differs in that we do not add an external regularization term to the objective. Instead, we identify a Bregman divergence in the PPO surrogate, which allows us to apply mirror descent analysis to FR--PPO. \cite{korba} showed that sub-linear convergence of mirror descent on the measure space can be obtained by assuming relative smoothness and flat convexity of the objective. We leverage these ideas to prove FR--PPO's convergence by showing the policy performance objective is relatively smooth and linear under the Fisher--Rao geometry. In contrast to PMD methods, which generally employ the KL divergence as penalty, the FR penalty acts directly on the policy ratio. As noted by \cite{wang20}, it is possible to construct examples where the policy ratio remains bounded while the KL divergence diverges, highlighting a key advantage of ratio-based penalties like the FR metric. Our analysis demonstrates that the FR geometry not only provides practical benefits, such as implicit clipping behavior, but also yields robust theoretical guarantees.

\section{Preliminaries}
This section outlines the geometric notions employed in the paper, most notably geodesic convexity in flat and Fisher--Rao geometries, followed by standard Markov Decision Processes (MDPs) notation and a review of key literature results that support our main contributions. 
\subsection{Geodesic convexity in the flat and Fisher-Rao geometries}
\label{subsection:geod-convexity}
For a Polish space $A$ we denote by $\mathcal{M}^+_\lambda(A)\subset \mathcal{M}^+(A)$ the subset of the space of non-negative measures $\mathcal{M}^+(A)$ that are absolutely continuous with respect to a fixed reference measure $\lambda$ and by $\mathcal{P}_\lambda(A) \subset\mathcal{M}^+_\lambda(A)$ the subset of probability measures.
\begin{definition}[The squared Fisher--Rao (FR) distance]
\label{def:FR2}
Let $\lambda$ be a reference measure. For all $\mu, \mu' \in \mathcal{M}^+_\lambda(A),$ the squared Fisher--Rao (FR) distance is defined by 
\begin{equation*}
\textnormal{FR}^2(\mu, \mu') \coloneqq 
4 \int_A\bigg|\sqrt{\tfrac{\mathrm{d}\mu}{\mathrm{d}\lambda}} -\sqrt{\tfrac{\mathrm{d}\mu'}{\mathrm{d}\lambda}}\bigg|^2\mathrm{d}\lambda.
\end{equation*} 
\end{definition}
If we restrict to $\mathcal{P}_\lambda(A)$, then  $\big\|\sqrt{\frac{\mathrm{d}\mu}{\mathrm{d}\lambda}}\big\|_{L^2_{\lambda}(A)} = 1,$ so the $\textnormal{FR}^2$ metric therefore measures distances between points lying on the positive orthant of the $L^2_\lambda$-unit sphere.
\begin{definition}[Geodesic convexity in the flat and Fisher--Rao geometry]
\label{def:geod-conv}
A function $F:\mathcal{M}^+(A) \to \mathbb R$ is \textit{geodesically convex} if, for all $\mu,\mu' \in \mathcal{M}^+(A),$ and $\eta \in [0,1],$ there exists $(\mu_\eta)_{\eta \in [0,1]}$ such that
\begin{equation*}
F\left(\mu_\eta\right) \leq (1-\eta)F(\mu) + \eta F(\mu').
\end{equation*}
If $\mu_\eta^{\text{flat}} = (1-\eta)\mu + \eta \mu'$ and $\mu_\eta^{\text{FR}} = \left((1-\eta)\sqrt{\mu}+\eta\sqrt{\mu'}\right)^2$, then $F$ is called flat and FR geodesically convex, respectively. We may restrict to $\mathcal{P}_\lambda(A)$ by normalizing $\mu_\eta^{\text{FR}}$.
\end{definition}
Since $\operatorname{FR}^2(\mu,\mu')$ is symmetric, it is FR geodesically convex in both arguments.\footnote{Other examples of FR geodesically convex functionals include $\int_A f(a)\sqrt{\frac{\mathrm{d}\mu}{\mathrm{d}\lambda}}(a)\lambda(\mathrm{d}a)$ and the reversed KL divergence $\operatorname{KL}(\pi|\mu)$, for fixed $\pi \in \mathcal{P}_\mu(\mathbb R^d)$.}
After applying the transformation $\frac{\mathrm{d}\mu}{\mathrm{d}\lambda} \mapsto \big(\frac{\mathrm{d}\mu}{\mathrm{d}\lambda}\big)^2,$ the lifted $\operatorname{FR}^2(\mu^2,{\mu'}^2)$ distance becomes flat geodesically convex.
To ensure this distance is well-defined, we impose $\frac{\mathrm{d}\mu}{\mathrm{d}\lambda} \in L_\lambda^2(A),$ restricting our measures to $\mathfrak C_\lambda \coloneqq \big\{\mu \in \mathcal{P}_\lambda(A): \frac{\mathrm{d}\mu}{\mathrm{d}\lambda} \in L_\lambda^2(A)\big\}$.

\subsection{Markov Decision Processes}
\label{sec:MDP}
Let $(S, A, P, r, \gamma)$ be an infinite horizon MDP, with state and action spaces $S$ and $A$ (which may be arbitrary Polish spaces), $P$ is the transition kernel, $r$ is a bounded reward and $\gamma\in [0,1)$ is a discount factor. For full details on our assumptions and notation, we refer to Appendix \ref{appendix:more-notation-MDP}. 

For a given policy $\pi\in\mathcal{P}(A|S)$, we define the value function $V_{\pi}:S\rightarrow \mathbb{R}$  by
\begin{equation*}
V_{\pi}(s)= \mathbb{E}_{s}^{\pi}\bigg[\sum_{n=0}^{\infty}\gamma^{n}r(s_n,a_n)\bigg],
\end{equation*}
where $\mathbb E_s^{\pi}$ denotes the expectation over the state-action trajectory $(s_0,a_0,s_1,a_1,...)$ generated by policy $\pi$ and kernel $P$ such that $s_0 \coloneqq s,$ $a_n \sim \pi(\cdot|s_n)$ and $s_{n+1} \sim P(\cdot|s_n,a_n),$ for all $n \geq 0$.

For fixed $\rho \in \mathcal{P}(S),$ we consider the maximization problem
\begin{equation}
\label{eq:objective-for-pi}
\max_{\pi \in \mathcal{P}(A|S)} V_{\pi}(\rho), \text{ with } V_{\pi}(\rho) \coloneqq \int_{S}V_{\pi}(s)\rho(\mathrm{d}s).
\end{equation}
With the on-policy Q-function denoted by $Q_\pi$ and defined in~\eqref{def:Qpi} we have the advantage function $A_\pi = Q_\pi - V_{\pi}$.
The occupation measure under policy $\pi$ with initial state distribution $\rho$ is denoted $d^\pi_\rho$ and defined in~\eqref{occupancy kernel}.

We now recall one of the key tools in the theoretical analysis of policy optimization algorithms: the performance difference lemma. 
Originally due to~\cite[Ch. 7, p. 87]{howard1960dynamic}, it has since become fundamental in the study of policy gradient methods for both tabular and general state-action spaces (see, e.g., \cite{kakade,lan2022policymirrordescentreinforcement, kerimkulov2024fisherraogradientflowentropyregularised}).
We introduce the following compact notation: given $\nu \in \mathcal{P}(S)$, define $\langle\cdot,\cdot \rangle_\nu: B_b(S\times A) \times b\mathcal{M}(A|S) \to \mathbb R$ by
\begin{equation*}
\langle f,\mu \rangle_\nu = \frac{1}{1-\gamma}\int_S \int_A f(s,a)\mu(\mathrm{d}a|s)\nu(\mathrm{d}s).
\end{equation*}

\begin{lemma}[Performance difference]
\label{lemma perf diff}
Let $\rho \in \mathcal{P}(S)$. For all $\pi', \pi \in \mathcal{P}(A|S),$ 
\begin{equation}
\label{eqn perf diff}
(V_{\pi'}-V_{\pi})(\rho) = \left\langle A_\pi, \pi'-\pi \right\rangle_{d^{\pi'}_{\rho}}.
\end{equation}
\end{lemma}
For completeness, we provide a proof in Appendix \ref{proofs perf diff and tight bound}.
Since $\int_A A_{\pi}(s,a)\pi(\mathrm{d}a|s) = 0,$ for every $s \in S,$ Lemma \ref{lemma perf diff} implies that any update to a policy $\pi'$ satisfying $\int_A A_{\pi}(s,a)\pi'(\mathrm{d}a|s) \geq 0$ will either increase the value function or keep it unchanged when the expected advantage is zero at all states.

From Lemma~\ref{lemma perf diff} one gets a policy gradient theorem for Polish spaces (see, e.g., \cite[Proposition A.1]{kerimkulov2024fisherraogradientflowentropyregularised}), which complements the classical policy gradient result of \cite{sutton1999policy}.
\begin{theorem}[Policy gradient without parametrization]
\label{thm:policy-grad-theorem}
Let $\rho \in \mathcal{P}(S)$. For all $\pi',\pi \in \mathcal{P}(A|S),$
\begin{equation*}
\lim_{\varepsilon \searrow 0 }\tfrac{(V^{\pi + \varepsilon (\pi' - \pi)}-V_{\pi})(\rho)}{\varepsilon} = \left\langle A_\pi, \pi'-\pi \right\rangle_{d^{\pi}_{\rho}}\,.
\end{equation*}
\end{theorem}
Since the first-order variation of the value function is evaluated along the family of flat geodesics $\pi_\varepsilon = \pi + \varepsilon (\pi' - \pi),$ it is natural to expect that the map $\mathcal{P}(A|S) \ni \pi' \mapsto \left\langle A_\pi, \pi'-\pi \right\rangle_{d^{\pi}_{\rho}}$ is flat geodesically convex.
\subsection{TRPO}

Lemma~\ref{lemma perf diff} almost provides a first-order expansion of $V_{\pi}(\rho)$ along flat geodesics but with the dependence on $d_\rho^{\pi'},$ making \eqref{eqn perf diff} difficult to optimize directly and no simpler than the original objective \eqref{eq:objective-for-pi}.

As an initial step toward addressing this difficulty, \cite{schulman-15} introduced the following lower bound.
\begin{theorem}
\label{thm:perf diff lower bdd max TV^2}
Let $\rho \in \mathcal{P}(S)$. For all $\pi', \pi \in \mathcal{P}(A|S),$ 
\begin{align}
\label{eqn perf diff lower bdd max TV^2}
(V_{\pi'}-V_{\pi})(\rho) \geq \Big\langle \frac{\mathrm{d}\pi'}{\mathrm{d}\pi}A_\pi, \pi \Big\rangle_{d^{\pi}_{\rho}} - \frac{8\gamma\|r\|_{B_b(S\times A)}}{(1-\gamma)^3}\max_{s \in S} \operatorname{TV}^2(\pi'(\cdot|s),\pi(\cdot|s)).
\end{align}
\end{theorem}
From \eqref{eqn perf diff lower bdd max TV^2}, by Pinsker's inequality, i.e.,
\begin{equation*}
\operatorname{TV}^2(\pi'(\cdot|s),\pi(\cdot|s)) \leq \tfrac{1}{2}\operatorname{KL}(\pi'(\cdot|s),\pi(\cdot|s)),
\end{equation*}
it follows that
\begin{align}
\label{eqn perf diff lower bdd max KL}
(V_{\pi'}-V_{\pi})(\rho) \geq \Big\langle \frac{\mathrm{d}\pi'}{\mathrm{d}\pi}A_\pi, \pi \Big\rangle_{d^{\pi}_{\rho}}- \frac{4\gamma\|r\|_{B_b(S\times A)}}{(1-\gamma)^3}\max_{s \in S} \operatorname{KL}(\pi'(\cdot|s),\pi(\cdot|s)).
\end{align}
Bound \eqref{eqn perf diff lower bdd max KL} implies that increasing the right-hand side leads to an improvement in $V_{\pi}(\rho),$ provided that $\max_{s \in S} \operatorname{KL}(\pi'(\cdot|s),\pi(\cdot|s))$ is controlled. In practice, though, this objective is difficult to optimize, as evaluating the maximum KL divergence across all states is computationally demanding for even moderate $S$, and becomes intractable when $S$ is large or continuous.

In TRPO \cite{schulman-15}, the lower bound in \eqref{eqn perf diff lower bdd max KL} is indirectly optimized by solving the constrained optimization problem
\begin{equation}
\begin{split}
\label{TRPO}
\max_{\pi' \in \mathcal{P}(A|S)} \Big\langle \frac{\mathrm{d}\pi'}{\mathrm{d}\pi}A_\pi, \pi \Big\rangle_{d^{\pi}_{\rho}}\text{ such that } \int_S \operatorname{KL}(\pi'(\cdot|s),\pi(\cdot|s))d_\rho^{\pi}(\mathrm{d}s) \leq \varepsilon,
\end{split}
\end{equation}
where $\varepsilon > 0$ is a hyperparameter that limits the KL divergence between successive policies.

\cite{achiam17a} derived the following relaxation of \eqref{eqn perf diff lower bdd max TV^2} for finite state and action spaces. The result also holds when $S$ and $A$ are Polish.
\begin{theorem}
\label{thm:perf diff lower bdd integrated TV}
Let $\rho \in \mathcal{P}(S)$. For all $\pi', \pi \in \mathcal{P}(A),$ 
\begin{equation}
\begin{split}
\label{eqn perf diff lower bdd integrated TV}
(V_{\pi'}-V_{\pi})(\rho) \geq \Big\langle \frac{\mathrm{d}\pi'}{\mathrm{d}\pi}A_\pi, \pi \Big\rangle_{d^{\pi}_{\rho}}-\frac{4\gamma\|r\|_{B_b(S\times A)}}{(1-\gamma)^3}\int_S \operatorname{TV}(\pi'(\cdot|s),\pi(\cdot|s))d_\rho^{\pi}(\mathrm{d}s).
\end{split}
\end{equation}
\end{theorem}
\subsection{PPO}
\label{sec:ppo}
Because each TRPO update requires solving the constrained optimization problem \eqref{TRPO}, the method is computationally expensive and difficult to scale to large reinforcement learning tasks. To overcome this limitation, \cite{schulman2017} introduced the \textit{clipped surrogate objective}, which forms the basis of the PPO algorithm.

PPO is inspired by TRPO in that its clipping mechanism discourages updates that produce large ratios $\frac{\mathrm d \pi'}{\mathrm d\pi}$, while being far simpler to implement and relying only on first-order optimization. The clipped surrogate objective can be derived from \eqref{TRPO} by removing the KL constraint and applying the elementary inequality $a \geq \min\{a,b\}$:
\begin{equation*}
\label{clipped surrogate}
\begin{aligned}
\Big\langle \tfrac{\mathrm{d}\pi'}{\mathrm{d}\pi}A_\pi, \pi \Big\rangle_{d^{\pi}_{\rho}}\geq \Big\langle f_\varepsilon\Big(\tfrac{\mathrm{d}\pi'}{\mathrm{d}\pi},A_\pi\Big), \pi \Big\rangle_{d^{\pi}_{\rho}},
\end{aligned}
\end{equation*}
where $f_\varepsilon(x,y) \coloneqq \min\left(xy,\text{clip}_{1-\varepsilon}^{1+\varepsilon}(x)y\right)$ and $\varepsilon \in (0,1)$ is a hyperparameter that removes the incentive for pushing the new policy $\pi'$ away from the old policy $\pi$. Importantly, the clipped surrogate does \textit{not} generally guarantee policy improvement, since it is not a lower bound on $(V_{\pi'}-V_{\pi})(\rho)$. Although heuristic in PPO, the clipping technique is designed to effectively bound the TV distance, as noted by \cite{queeney2021generalized}. Indeed,
\begin{align*}
&\int_S \operatorname{TV}(\pi'(\cdot|s),\pi(\cdot|s))d_\rho^{\pi}(\mathrm{d}s)= \frac{1}{2} \int_S \int_A \left|\frac{\mathrm{d}\pi'}{\mathrm{d}\pi}(a|s) - 1\right|\pi(\mathrm{d}a|s)d_\rho^{\pi}(\mathrm{d}s).
\end{align*}
Therefore, the lower bound in \eqref{eqn perf diff lower bdd integrated TV} provides a motivation for PPO's strategy of constraining the probability ratio $\left|\frac{\mathrm{d}\pi'}{\mathrm{d}\pi}(a|s) - 1\right| \leq \varepsilon$, as doing so effectively imposes a penalty in terms of TV distance. 

Building on this observation, one could in principle define the update rule
\begin{align}
\label{TV PPO}
\pi^{n+1}= \argmax_{\pi \in \mathcal P(A|S)}\bigg[\bigg\langle \frac{\mathrm{d}\pi}{\mathrm{d}\pi^n}A_{\pi^n}, \pi^n \bigg\rangle_{d^{\pi^n}_{\rho}}- \frac{1}{\tau}\int_S \operatorname{TV}(\pi(\cdot|s),\pi^n(\cdot|s))d_\rho^{\pi^n}(\mathrm{d}s)\bigg],
\end{align}
which penalizes the policy ratio update through the TV distance with strength $\tau^{-1} > 0$. If $\tau > 0$, then \eqref{TV PPO} ensures monotonic improvement by applying Lemma \ref{lemma perf diff} to $\pi'=\pi^{n+1}$ and $\pi=\pi^n$, but it suffers from a key drawback. Determining $\pi^{n+1}$ requires prior knowledge of whether the ratio $\frac{\mathrm{d}\pi^{n+1}}{\mathrm{d}\pi^n}$ lies above or below $1$. 
This arises from the first order condition for~\eqref{TV PPO}, see Lemma~\ref{first order tv ppo} and further discussion in Appendix \ref{appendix:TV PPO}.
\section{Main results}
In this section, we introduce our proposed FR--PPO method and state its principal convergence guarantees.
\subsection{Convergence of FR--PPO for direct parametrization}
\label{sec:FR-PPO-direct-param}
We adopt a concentrability condition in the spirit of standard RL analyses (see, e.g., \cite{NIPS2007_da0d1111, JMLR:v9:munos08a,scherrer2013performanceboundspolicysearch}).
\begin{assumption}[Relative concentrability of the auxiliary distribution]
\label{ass:conc}
Let $\rho\in\mathcal P(S)$. There exists a constant $0<C_\rho<\infty$ such that for all $\pi',\pi\in\mathcal P(A|S)$, the auxiliary distribution $\tilde d_{\rho}^{\pi,\pi'}\in\mathcal P(S)$ satisfies $\tilde d_{\rho}^{\pi,\pi'} \ll d^\pi_\rho$ and
\begin{equation*}
    \left\|\frac{\mathrm d\tilde d_{\rho}^{\pi,\pi'}}{\mathrm d d^\pi_\rho}\right\|_{L^\infty(S,d^\pi_\rho)}
\le C_\rho.
\end{equation*}
\end{assumption}
The explicit expression of $\tilde d_{\rho}^{\pi,\pi'}$ is provided in the proof of Theorem~\ref{thm:tv2_mixed} (Appendix~\ref{proofs perf diff and tight bound}). 
\begin{remark}[Relation to classical concentrability assumptions]
Classical concentrability assumptions in, e.g., \cite{NIPS2007_da0d1111, JMLR:v9:munos08a,scherrer2013performanceboundspolicysearch}
are typically stated with respect to a fixed reference distribution $\nu\in\mathcal P(S)$. In contrast, Assumption~\ref{ass:conc} is a relative concentrability condition. It requires that the auxiliary distribution does not place excessive mass on states visited under the baseline occupancy $d^\pi_\rho$. Thus, it implies that $\int_S u(s)\,\tilde d_{\rho}^{\pi,\pi'}(\mathrm ds) \le C_\rho \int_S u(s)\,d^\pi_\rho(\mathrm ds)$, for any measurable $u\ge 0$.
\end{remark}

\begin{theorem}
\label{thm lower bound perf diff}
Let $\rho \in \mathcal{P}(S)$ and $\pi', \pi \in \mathcal{P}(A|S).$ Under Assumption~\ref{ass:conc}, 
\begin{align*}
(V_{\pi'}-V_\pi)(\rho) \ge 
\bigg\langle \frac{\mathrm{d}\pi'}{\mathrm{d}\pi}A_{\pi}, \pi \bigg\rangle_{d^{\pi}_{\rho}} - \frac{4\gamma\|r\|_{B_b(S \times A)}}{(1-\gamma)^3}\left(1+C_\rho\right)
\int_S \mathrm{TV}^2\left(\pi'(\cdot|s),\pi(\cdot|s)\right)d^\pi_\rho(\mathrm ds).
\end{align*}
\end{theorem}
Note that the lower bound in Theorem \ref{thm lower bound perf diff} is already tighter than Theorem \ref{thm:perf diff lower bdd max TV^2} since we do not maximize over all states in $\operatorname{TV},$ and compared to Theorem \ref{thm:perf diff lower bdd integrated TV} it employs the tighter $\operatorname{TV}^2$ metric. 

Building on the discussion in Subsection \ref{subsection:geod-convexity} about restricting to measures with finite $\operatorname{FR}^2$ distance between squared densities, we define the admissible policy class by
\begin{equation*}
\mathcal P_\lambda(A|S) := \left\{\pi \in \mathcal P(A|S): \pi(\cdot|s) \in \mathfrak C_\lambda, \text{ for all } s \in S\right\}. 
\end{equation*}
For example, if $A$ is of finite cardinality and $\lambda$ is chosen to be uniform on $A,$ then $\mathcal P_\lambda(A|S) = \mathcal P(A|S)$. If $\lambda \in \mathcal{P}(A),$ 
by the Cauchy-Schwarz inequality, 
\begin{align*}
\int_S \operatorname{TV}^2(\pi'(\cdot|s), \pi(\cdot|s))d_{\rho}^{\pi}(\mathrm{d}s)\leq \frac{1}{16}\int_S \operatorname{FR}^2(\pi'(\cdot|s)^2, \pi(\cdot|s)^2)d_{\rho}^{\pi}(\mathrm{d}s),
\end{align*}
for all $\pi, \pi' \in \mathcal{P}_{\lambda}(A|S)$. Hence, we obtain
\begin{corollary}
\label{cor lower bound FR^2}
Let $\rho \in \mathcal{P}(S)$ and $\pi', \pi \in \mathcal{P}_{\lambda}(A|S)$. Under Assumption \ref{ass:conc}, 
\begin{align*}
(V_{\pi'} - V_{\pi})(\rho) \geq \left\langle \frac{\mathrm{d}\pi'}{\mathrm{d}\pi}A_\pi, \pi \right\rangle_{d^{\pi}_{\rho}} - \frac{\gamma\|r\|_{B_b(S \times A)}}{4(1-\gamma)^3}\left(1+C_\rho\right)\int_S \operatorname{FR}^2(\pi'(\cdot|s)^2, \pi(\cdot|s)^2)d_{\rho}^{\pi}(\mathrm{d}s),
\end{align*}
where $\pi(\cdot|s)^2 := \left(\frac{\mathrm{d}\pi}{\mathrm{d}\lambda}\right)^2(\cdot|s),$ for all $s \in S$.
\end{corollary}
Motivated by Corollary \ref{cor lower bound FR^2}, we fix $\tau > 0$ and consider the scheme given by 
\begin{align}
\label{eq:pointwise_min}
\pi^{n+1}= \argmax_{\pi \in \mathcal P_\lambda(A|S)}\bigg[\Big\langle \frac{\mathrm{d}\pi}{\mathrm{d}\pi^n}A_{\pi^n}, \pi^n \Big\rangle_{d^{\pi^n}_{\rho}}- \frac{1}{8\tau}\int_S \operatorname{FR^2}(\pi(\cdot|s)^2,\pi^n(\cdot|s)^2)d_\rho^{\pi^n}(\mathrm{d}s)\bigg].
\end{align}
Lemma~\ref{lemma:well-posedness-FR-PPO} tells us that if $\pi^n \in \mathcal{P}_\lambda(A|S)$ then the argmax in~\eqref{eq:pointwise_min} is single valued and in $\mathcal{P}_\lambda(A|S)$.  
\begin{theorem}[Policy improvement]
\label{thm policy improvement}
Let $V_n := V_{\pi^n}$ for $n\in \mathbb N$ and $(\pi^n)_{n\in \mathbb N_0} \subset \mathcal{P}_\lambda(A|S)$ be given by \eqref{eq:pointwise_min}.
If $\tau > 0$, then for any $\rho \in \mathcal P(S)$,
\[
V_{n+1}(\rho) \geq  V_n(\rho), \text{ for all } n > 0.
\]
\end{theorem}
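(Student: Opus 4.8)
The plan is to derive the monotonicity directly from the sharpened performance-difference bound of Corollary~\ref{cor lower bound FR^2}, using the variational characterization of the update in \eqref{eq:pointwise_min}. First I would apply Corollary~\ref{cor lower bound FR^2} with $\pi = \pi^n$ and $\pi' = \pi^{n+1}$ (legitimate since $(\pi^n)_n\subset\mathcal P_\lambda(A|S)$), and pull the prefactor $\tfrac{1}{1-\gamma}$ inside the state integral to get
\[
V^{n+1}(\rho) - V^{n}(\rho) \;\ge\; \frac{1}{1-\gamma}\int_S \Phi_n(s)\, d_\rho^{\pi^n}(\mathrm{d}s),
\]
where $\Phi_n(s) := \int_A \tfrac{\mathrm{d}\pi^{n+1}}{\mathrm{d}\pi^n}(a|s)A_{\pi^n}(s,a)\,\pi^n(\mathrm{d}a|s) - \tfrac{\|r\|_{B_b(S\times A)}}{2(1-\gamma)^2}\operatorname{FR}^2(\pi^{n+1}(\cdot|s)^2,\pi^n(\cdot|s)^2)$. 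Since $d_\rho^{\pi^n}$ is a probability measure and $\tfrac{1}{1-\gamma}>0$, it then suffices to establish the pointwise inequality $\Phi_n(s)\ge 0$ for every $s\in S$.

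For that pointwise bound I would use the optimality in \eqref{eq:pointwise_min}. For each fixed $s$, write $F_s(m)$ for the objective maximized in \eqref{eq:pointwise_min} over $m\in\mathcal P_\lambda(A)$. The key observation is that $m=\pi^n(\cdot|s)$ is an admissible competitor — this is precisely the assumption $\pi^n\in\mathcal P_\lambda(A|S)$ — and that its objective value is
\[
F_s\!\big(\pi^n(\cdot|s)\big) \;=\; \int_A A_{\pi^n}(s,a)\,\pi^n(\mathrm{d}a|s) \;-\; \frac{1}{2\tau}\operatorname{FR}^2\!\big(\pi^n(\cdot|s)^2,\pi^n(\cdot|s)^2\big) \;=\; 0,
\]
using $\int_A A_{\pi^n}(s,a)\pi^n(\mathrm{d}a|s)=0$ and $\operatorname{FR}^2(\mu^2,\mu^2)=0$. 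Hence $F_s(\pi^{n+1}(\cdot|s)) \ge F_s(\pi^n(\cdot|s)) = 0$, which after identifying the ratio $\tfrac{\mathrm{d}\pi^{n+1}/\mathrm{d}\lambda(a)}{\mathrm{d}\pi^n/\mathrm{d}\lambda(a|s)}$ with $\tfrac{\mathrm{d}\pi^{n+1}}{\mathrm{d}\pi^n}(a|s)$ (on the support of $\pi^n(\cdot|s)$, which is all that matters since both integrals are taken against $\pi^n$) rearranges to
\[
\int_A \frac{\mathrm{d}\pi^{n+1}}{\mathrm{d}\pi^n}(a|s)A_{\pi^n}(s,a)\,\pi^n(\mathrm{d}a|s) \;\ge\; \frac{1}{2\tau}\operatorname{FR}^2\!\big(\pi^{n+1}(\cdot|s)^2,\pi^n(\cdot|s)^2\big).
\]
Finally, invoking the step-size hypothesis $\tfrac1\tau \ge \tfrac{\|r\|_{B_b(S\times A)}}{(1-\gamma)^2}$, i.e. $\tfrac{1}{2\tau}\ge \tfrac{\|r\|_{B_b(S\times A)}}{2(1-\gamma)^2}$, gives $\Phi_n(s)\ge 0$ for every $s$, and hence $V^{n+1}(\rho)\ge V^n(\rho)$.

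I do not expect a serious conceptual obstacle here: the argument is essentially ``the old policy is always a feasible point of the proximal subproblem with objective value zero.'' The only genuinely delicate points are measure-theoretic bookkeeping — checking that the Radon--Nikodym ratio appearing in \eqref{eq:pointwise_min} coincides with the one in Corollary~\ref{cor lower bound FR^2} (true because in both places the integrand is multiplied by $\pi^n(\mathrm{d}a|s)$, so one only needs the densities on $\{\tfrac{\mathrm d\pi^n}{\mathrm d\lambda}(\cdot|s)>0\}$, and no absolute continuity of $\pi^{n+1}$ w.r.t. $\pi^n$ is required), and ensuring $s\mapsto\pi^{n+1}(\cdot|s)$ is measurable so that $d_\rho^{\pi^n}$-integration and $V^{n+1}$ are well defined — which is handled by a measurable selection argument for the pointwise maximizer (or folded into the standing hypothesis $(\pi^n)_n\subset\mathcal P_\lambda(A|S)$).
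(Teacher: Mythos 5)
Your proof is correct and follows essentially the same route as the paper: apply the Fisher--Rao lower bound (Corollary~\ref{cor lower bound FR^2}) with $\pi=\pi^n$, $\pi'=\pi^{n+1}$, and then use that $\pi^n(\cdot|s)$ is a feasible competitor in \eqref{eq:pointwise_min} with objective value zero to get the pointwise nonnegativity, combined with the step-size condition $\tfrac{1}{\tau}\ge \tfrac{\|r\|_{B_b(S\times A)}}{(1-\gamma)^2}$. The only difference is cosmetic ordering (the paper folds the step-size bound into the surrogate first, you invoke it last), and your added remarks on the Radon--Nikodym identification and measurability are consistent with how the paper handles these via the standing hypothesis $(\pi^n)_n\subset\mathcal P_\lambda(A|S)$.
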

\begin{theorem}[Sub-linear convergence of FR--PPO under direct parametrization]
\label{thm:convergence_fr_steps}
Let $V_n := V_{\pi^n}$ for $n\in \mathbb N$, $\tau > 0$ and $(\pi^n)_{n\in \mathbb N_0} \subset \mathcal{P}_\lambda(A|S)$ be given by \eqref{eq:pointwise_min}. Then, for any $\rho \in \mathcal P(S)$ and $\pi \in \mathcal P_\lambda(A|S)$,
\begin{align}
\label{eq:sublinear_convergence}
(V_{\pi} - V_n)(\rho)\leq  
\frac 1{8n(1-\gamma)\tau}\int_S \Big(\operatorname{FR}^2(
(\pi(\cdot|s)^2,\pi^0(\cdot|s)^2 ) + 8\tau(V^*-V^0)(s)\Big)
d^{\pi}_\rho(\mathrm{d}s).
\end{align}
\end{theorem}
\subsection{Projected Natural Policy Gradient}
\label{sec:npg}
NPG, as introduced by \cite{kakadenatural}, is known to yield updates equivalent to policy mirror descent under softmax parametrization in Euclidean state-action spaces \cite[Lemma 15]{agarwal}. In this section, we introduce Projected NPG (Proj--NPG). We demonstrate that for a suitable class of parametrized policies, Proj--NPG recovers the FR--PPO update \eqref{eq:pointwise_min}. Furthermore, we establish a convergence rate for Proj--NPG under compatible function approximation.

Let $(\mathbb{H}, \langle\cdot, \cdot\rangle_{\mathbb{H}})$ be a Hilbert space representing the parameter space, and let $\phi:S \times A \to \mathbb H$ be a fixed feature basis. We define the feasible parameter set $\Theta \subset \mathbb H$ as
\begin{align*}
\Theta \coloneqq \Big\{\theta \in \mathbb H\Big\vert \langle \theta,\phi(s,a) \rangle_{\mathbb H} \geq 0 \ \lambda\text{-a.e.},\int_A \langle \theta,\phi(s,a) \rangle_{\mathbb H} \lambda(\mathrm{d}a) = 1, \text{ for all } s\in S\Big\}.
\end{align*}
Correspondingly, we define the class of parametrized policies $\Pi$ as
\begin{equation}
\label{def:Pi_class}
\Pi \coloneqq \left\{\langle\theta, \phi(s,a)\rangle_{\mathbb{H}}\big\vert \theta \in \Theta\right\} \subset \mathcal{P}_\lambda(A|S).    
\end{equation}
Restricting policies to this class naturally leads to linear function approximation for the advantage function, $A_{\pi_\theta}(s,a) - \langle w, \phi(s,a)\rangle_{\mathbb H}$ for some $w \in \mathbb H$, a standard formulation in RL \cite[Sec. 9.3]{sutton2018reinforcement}. This class includes common policy parametrizations.
\begin{example}[Tabular policies]
For finite state and action spaces $S,A$, let $\mathbb H = \mathbb R^{|S|\times|A|}$ and $\lambda$ be the uniform measure over $A$. We consider one-hot feature encodings $\phi_{(s',a')}(s,a) = \mathbbm{1}{\{(s',a') = (s,a)\}}$. In this setting, the policy space $\Pi$ corresponds exactly to the product of probability simplices $\Delta(A)^{|S|}$. A policy $\pi_\theta$ is defined via direct parametrization $\pi_\theta(a|s) \coloneqq \theta_{s,a}$, subject to the constraints $\theta_{s,a} \geq 0$ and $\sum_{a \in A}\theta_{s,a} = 1$, for all $a \in A$ and $s \in S$.
\end{example}
\begin{example}[Gaussian mixture policies]
For continuous state and action spaces $S,A$, let $\mathbb H = \mathbb R^d$ and $\lambda \in \mathcal{P}(A)$. We define the features using normalized Radial Basis Functions (RBFs) (e.g., \cite{lagoudakis}). Each feature $\phi_i$ is a Gaussian kernel centered at fixed points $\{(c_i^s,c_i^a)\}_{i=1}^d$ with variance $\sigma^2$, that is $\phi_i(s,a) \propto \exp{\left(\frac{|s-c_i^s|^2 + |a-c_i^a|^2}{2\sigma^2}\right)}$ normalized such that $\int_A \phi_i(s,a) \lambda(\mathrm{d}a) = 1$. The resulting policy takes the form of a Gaussian Mixture Model (GMM) $\pi_\theta(a|s) \coloneqq \sum_{i=1}^d \theta_i \phi_i(s,a)$ where the parameters satisfy $\theta_i \geq 0$ and $\sum_{i=1}^d \theta_i = 1$.
\end{example}
The Proj--NPG algorithm operates within the geometry induced by the positive semi-definite operator $G: \mathbb H \to \mathbb H$ defined as
\begin{equation*}
G(\theta) \coloneqq \int_S\int_A \phi(s,a) \otimes \phi^*(s,a)\lambda(\mathrm{d}a)d_\rho^{\pi_\theta}(\mathrm{d}s).
\end{equation*}
Here, $\phi^* \in \mathbb H^*$ acts on any $u \in \mathbb H$ via the inner product $\phi^*(s,a)u = \langle u, \phi(s,a)\rangle_{\mathbb{H}}$. Thus, the tensor product action is given by $\left(\phi(s,a) \otimes \phi^*(s,a)\right)u = \phi(s,a)\langle u, \phi(s,a)\rangle_{\mathbb{H}}$.

The update rule of the Proj--NPG algorithm is given by
\begin{equation}
\label{eq:proj-NPG-parameters}
\theta^{n+1} \!\!= \Gamma_{\Theta}^{G(\theta^n)} \!\left(\theta^n \!+\! \eta G^{\dagger}(\theta^n)\nabla_\theta V_{\pi_{\theta^n}}(\rho)\right),  \theta^0 \in \Theta,
\end{equation}
where $\eta > 0$ is the step size, $G^\dagger$ denotes the Moore--Penrose pseudo-inverse of $G$, and $\Gamma_{\Theta}^{G(\theta)}:\mathbb H \to \Theta$ is the projection operator under the semi-norm $\|\cdot\|_{G(\theta)}^2 \coloneqq \langle \cdot,G(\theta) \cdot \rangle_{\mathbb H},$ defined as 
\begin{equation*}
\Gamma_{\Theta}^{G(\theta)}(\Tilde{\theta}) = \argmin_{\bar{\theta} \in \Theta}\|\bar{\theta}-\Tilde{\theta}\|_{G(\theta)}.
\end{equation*}
A key property of this formulation is the isometry between the parameter space $\Theta$ and the policy space $\Pi$ (Lemma~\ref{lemma:policy-isometry}).
Specifically, the map $\Theta \ni \hat\theta \mapsto \pi_{\hat\theta} \in \Pi$ satisfies
\begin{equation*}
\Big\|\frac{\mathrm{d}\pi_{\hat\theta}}{\mathrm{d}\lambda}\Big\|_{{L^2_\lambda(A)}\times d_\rho^{\pi_\theta}} = \big\|\hat\theta\big\|_{G(\theta)},
\end{equation*}
where $\|\cdot\|_{{L^2_\lambda(A)}\times d_\rho^{\pi_\theta}}^2 :=\int_S \int_A |\cdot|^2 \lambda(\mathrm{d}a) d_\rho^{\pi_\theta}(\mathrm{ds})$. Leveraging this isometry, we show the connection between \eqref{eq:proj-NPG-parameters} and the FR--PPO update \eqref{eq:pointwise_min} by adopting the notion of compatible function approximation \cite{sutton1999policy}. Consider $L^{\pi_{\theta}}: \mathbb H \rightarrow \mathbb R$ defined by
\begin{equation}
\label{eq:loss_A}
\begin{split}
L^{\pi_\theta}(w) := \frac{1}{2}\int_S \int_A \left|A_{\pi_\theta}(s,a) - \langle w,  \phi(s,a)\rangle_{\mathbb H}\right|^2\lambda(\mathrm{d}a)d_{\rho}^{\pi_{\theta}}(\mathrm{d}s).
\end{split}
\end{equation}
The first-order optimality condition $\nabla_w L^{\pi_\theta}(w) = 0$ for $\theta=\theta^n$ yields the minimizer
\begin{equation*}
w^*(\theta^n) = (1-\gamma)G^\dagger(\theta^n)\nabla_\theta V_{\pi_{\theta^n}}(\rho).
\end{equation*}
If the approximation of $A_{\pi_{\theta^n}}$ by the features $\phi$ is exact (i.e., $w^*(\theta^n)$ achieves zero in \eqref{eq:loss_A}), then setting $\tau = \eta(1-\gamma)^{-1}$ yields the following result.
\begin{proposition}[Variational characterization of Proj--NPG]
\label{prop:variational-characterization-Proj--NPG}
Let $\theta^n \in \Theta$ be the parameter with corresponding policy $\pi_{\theta^n} \in \Pi$. If $\theta^{n+1}$ is generated by the Proj--NPG update \eqref{eq:proj-NPG-parameters}, then the corresponding policy $\pi_{\theta^{n+1}}$ is given by
\begin{align}
\label{eq:pointwise_min_parametrized}
\pi_{\theta^{n+1}}&= \argmax_{\pi \in \Pi}\bigg[\Big\langle \frac{\mathrm{d}\pi}{\mathrm{d}\pi_{\theta^{n}}}A_{\pi_{\theta^n}}, \pi_{\theta^n} \Big\rangle_{d^{\pi_{\theta^n}}_{\rho}} - \frac{1}{8\tau}\int_S \operatorname{FR^2}(\pi(\cdot|s)^2,\pi_{\theta^n}(\cdot|s)^2)d_\rho^{\pi_{\theta^n}}(\mathrm{d}s)\bigg].
\end{align}
\end{proposition}
Therefore, using Proposition \ref{prop:variational-characterization-Proj--NPG} and following the same reasoning as in Theorem \ref{thm:convergence_fr_steps}, we obtain the following convergence result.
\begin{theorem}[Sub-linear convergence of Proj--NPG]
\label{thm:sublinear_convergence_proj_NPG}
Let $V_n := V_{\pi_{\theta^n}}$ for $n\in \mathbb N$ and $(\pi_{\theta^n})_{n\in \mathbb N_0} \subset \Pi$ be given by \eqref{eq:pointwise_min_parametrized}. If $\tau > 0$, then for any $\rho \in \mathcal P(S)$ and $\pi \in \mathcal P_\lambda(A|S),$
\begin{align*}
(V_{\pi} - V_n)(\rho)\leq  
\frac 1{8n(1-\gamma)\tau}\int_S \Big(\operatorname{FR}^2(
(\pi(\cdot|s)^2,\pi_{\theta^0}(\cdot|s)^2 ) + 8\tau(V^*-V^0)(s)\Big)
d^{\pi}_\rho(\mathrm{d}s).
\end{align*}
\end{theorem}
In the case where compatible function approximation is not exact (i.e., $w^*(\theta^n)$ does not achieve zero in \eqref{eq:loss_A}), then we employ Approximate Proj--NPG (AProj--NPG):
\begin{align}
\label{eq:aproj-NPG-parameters}
\theta^{n+1} = \Gamma_{\Theta}^{G(\theta^n)}\left(\theta^n + \tau \hat{w}(\theta^n)\right),\ \hat{w}(\theta^n) \in \argmin_{w \in \mathbb H} L^{\pi_{\theta^n}}(w),\ \theta^0 \in \Theta.
\end{align}
By applying the same argument as in Proposition \ref{prop:variational-characterization-Proj--NPG}, we arrive at the analogous result.
\begin{proposition}[Variational characterization of AProj--NPG]
\label{prop:variational-characterization-AProj--NPG}
Let $\theta^n \in \Theta$ be the parameter with corresponding policy $\pi_{\theta^n} \in \Pi$. If $\theta^{n+1}$ is generated by the AProj--NPG update \eqref{eq:aproj-NPG-parameters}, then the corresponding policy $\pi_{\theta^{n+1}}$ is given by
\begin{align}
\label{eq:approx-max_parametrized}
 \pi_{\theta^{n+1}} \! = \! \argmax_{\pi \in \Pi}\bigg[\Big\langle \frac{\mathrm{d}\pi}{\mathrm{d}\pi_{\theta^{n}}}\big\langle \hat{w}(\theta^n), \phi \big\rangle_{\mathbb H}, \pi_{\theta^n} \Big\rangle_{d^{\pi_{\theta^n}}_{\rho}} - \frac{1}{8\tau}\int_S \operatorname{FR^2}(\pi(\cdot|s)^2,\pi_{\theta^n}(\cdot|s)^2)d_\rho^{\pi_{\theta^n}}(\mathrm{d}s)\bigg]\,.
\end{align}
\end{proposition}
Note that the exact advantage $A_{\pi_{\theta^n}}$ in~\eqref{eq:pointwise_min_parametrized} is replaced by its approximation $\left\langle \hat{w}(\theta^n), \phi \right\rangle_{\mathbb H}$ in~\eqref{eq:approx-max_parametrized}.

Finally, using Proposition \ref{prop:variational-characterization-AProj--NPG}, we obtain the following convergence result.
\begin{theorem}[Sub-linear convergence of AProj--NPG]
\label{thm:sublinear_convergence_approx_proj_NPG}
Let $V_n := V_{\pi_{\theta^n}}$ for $n\in \mathbb N$ and $(\pi_{\theta^n})_{n\in \mathbb N_0} \subset \Pi$ be given by \eqref{eq:approx-max_parametrized}. If $\tau > 0$, then for any $\rho \in \mathcal P(S)$ and $\pi \in \mathcal P_\lambda(A|S),$
\begin{multline*}
\begin{aligned}
\min_{k < n}(V_{\pi} - V_k)(\rho) &\leq  
\frac 1{8n(1-\gamma)\tau}\int_S \Big(\operatorname{FR}^2(
(\pi(\cdot|s)^2,\pi_{\theta^0}(\cdot|s)^2 )+ 8\tau(V^*-V^0)(s)\Big)
d^{\pi}_\rho(\mathrm{d}s)\\ 
&+ \frac{1}{n}\sum_{k=0}^{n-1}\mathcal{E}_k,
\end{aligned}
\end{multline*}
where $\mathcal{E}_n \coloneqq \left\langle A_{\pi_{\theta^n}}-\left\langle \hat{w}(\theta^n), \phi \right\rangle_{\mathbb H}, \pi-\pi_{\theta^n} \right\rangle_{d^{\pi}_{\rho}}$.
\end{theorem}
Theorem \ref{thm:sublinear_convergence_approx_proj_NPG} establishes a convergence rate that improves upon the $\mathcal{O}(1/\sqrt{n})$ bounds obtained by \cite{Huang2023PPOClipAG} and \cite{agarwal} for Neural--PPO--Clip and policy mirror descent under compatible function approximation.

Our proof technique differs fundamentally from these prior works. Standard analyses typically operate directly on the parameter updates via NPG, requiring smoothness assumptions on the policy with respect to its parameters (e.g., \cite[Lemma 34]{agarwal}). This approach invariably introduces error terms that scale with the step-size of the NPG update. In contrast, we demonstrate that Proj--NPG and its approximated variant AProj--NPG admit a variational representation. This allows us to apply the three-point lemma directly, thereby bypassing the smoothness requirement and eliminating the associated step-size dependent errors.

\subsection{FR-PPO as a drop in replacement for PPO}

\label{sec frppo practical objective}
To obtain a practical implementation of FR--PPO we note that we can re-use any implementation of PPO changing only the policy loss.
To fix ideas say we have an arbitrary parametrization of the policy $\pi_{\theta_\text{old}}$. 
Recall that $f_\varepsilon$ is defined in Section~\ref{sec:ppo}.
We replace the PPO loss 
\begin{equation}
\label{eq:PPO_objective}
\begin{split}
L_{\text{PPO}}(\theta)
& := \mathbb E^{s\sim d_\rho^{\pi_{\theta_\text{old}}}}_{a\sim \pi_{\theta_\text{old}}(\cdot|s)} \Big( f_{\varepsilon}\left(\tfrac{\mathrm d \pi_{\theta}}{\mathrm d\pi_{\theta_{\text{old}}}}, A_{\pi_{\theta_{\text{old}}}}\right)(s,a)\Big)\,
\end{split}
\end{equation}
with the FR--PPO loss given by
\begin{equation}
\label{eq:FRPPO_objective}
\begin{split}
L_{\text{FR-PPO}}(\theta) & = \mathbb E^{s\sim d_\rho^{\pi_{\theta_\text{old}}}}_{a\sim \pi_{\theta_\text{old}}(\cdot|s)}\Big(A_{\pi_{\theta_{\text{old}}}} \tfrac{\mathrm d \pi_{\theta}}{\mathrm d\pi_{\theta_{\text{old}}}}(a|s) - \tfrac{1}{2\tau}\Big|\tfrac{\mathrm d \pi_{\theta}}{\mathrm d\pi_{\theta_{\text{old}}}}(a|s)-1\Big|^2 \tfrac{\mathrm d\pi_{\theta_{\text{old}}}}{\mathrm d \lambda}(a|s) \Big)\,.
\end{split}
\end{equation}
Note that $L_{\text{FR-PPO}}(\theta)$ implements the minimization objective~\eqref{eq:pointwise_min} for policies parametrized by $\theta$ in the sense that  
\[
\begin{split}
& \big\langle \tfrac{\mathrm{d}\pi_\theta}{\mathrm{d}\pi_{\theta_\text{old}}}A_{\pi_{\theta_\text{old}}}, \pi_{\theta_\text{old}} \big\rangle_{d^{\pi_{\theta_\text{old}}}_{\rho}}  - \frac{1}{8\tau}\int_S \operatorname{FR^2}(\pi_\theta(\cdot|s)^2,\pi_{\theta_\text{old}}(\cdot|s)^2)d_\rho^{\pi_{\theta_\text{old}}}(\mathrm{d}s) = L_{\text{FR-PPO}}(\theta)\,.
\end{split}
\]
We also note that the FR--PPO objective~\eqref{eq:FRPPO_objective} can be estimated from samples collected during the rollout of policy $\pi_{\theta_{\text{old}}}$ and that it is no more or less costly computationally than~\eqref{eq:PPO_objective}. For completeness we present the entire FR--PPO algorithm with a critic in the form a value function parametrized by $\varphi$ as Algorithm~\ref{algo:frppo} in Appendix~\ref{sec:frpoo_algorithm}.

\section{Numerical experiments}
We have taken the PPO implementation in Stable Baselines\footnote{\href{https://github.com/DLR-RM/stable-baselines3}{\texttt{https://github.com/DLR-RM/stable-baselines3}}} and implemented\footnote{\href{https://github.com/deterministicdavid/stable-baselinesanonymous.4open.science/r/sb3-contrib-with-frppo/}{\texttt{https://github.com/deterministicdavid/stable-baselinesanonymous.4open.science/r/sb3-contrib-with-frppo/}}} FR--PPO by creating a class which duplicates the PPO implementation except that the policy loss is now an empirical estimate over each minibatch of~\eqref{eq:FRPPO_objective}, just as in Algorithm~\ref{algo:frppo}.

We now choose a number of test environments from Atari and Mujoco. We choose all meta-parameters and network architectures following~\cite{huanga} allowing us to match state-of-the-art results with PPO.

\begin{table}
\centering
\begin{tabular}{lcccc}
\hline
\textbf{Domain} & \multicolumn{2}{c}{\textbf{Norm. Reward}} & \multicolumn{2}{c}{\textbf{Envs. Won (max, sum)}} \\
& \textbf{PPO} & \textbf{FR--PPO} & \textbf{PPO} & \textbf{FR--PPO} \\
\hline
Mujoco & 5.4 & 4.4 & (3,2) & (2,3) \\
Atari  & 5.8 & 5.6 & (3,3) & (3,3) \\
\hline
\end{tabular}
\caption{Comparison of PPO vs. FR--PPO on Mujoco and Atari.}
\label{tab:ppo_vs_frppo_summary}
\end{table}

In Table~\ref{tab:ppo_vs_frppo_summary} we provide aggregate normalized results on 5 Mujoco environments (Hopper, HalfCheetah, Reacher, Swimmer, Walker) and 6 Atari environments (BeamRider, Breakout, KungFuMaster, MsPacman, Pong, SpaceInvaders).

\begin{figure}
\begin{center}
\includegraphics[width=0.7\columnwidth]{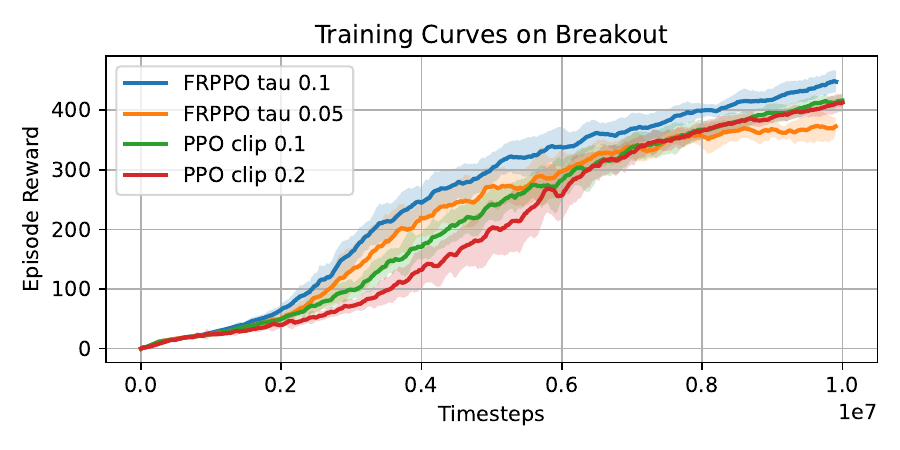}
\end{center}
\caption{Training curve for Atari Breakout using PPO and FR-PPO with various clipping / penalty parameters.}
\label{fig:breakout_ppo_frppo}
\end{figure}

In Figure~\ref{fig:breakout_ppo_frppo} we plot training reward curves for the Breakout environment to show that we reach the same state-of-the-art scores as in~\cite{huanga}.
Further details, results and discussion of numerical experiments are in Appendix~\ref{sec:frpoo_algorithm}.
The picture that emerges is that FR--PPO can perform comparably to PPO but the penalty parameter $\tau>0$ in FR--PPO seems to play a stronger role than the clipping parameter $\varepsilon\in(0,1)$ in PPO. The code for reproducing the presented results is available.\footnote{\href{https://github.com/deterministicdavidanonymous.4open.science/r/frppo-validation-FB9B/}{\texttt{https://github.com/deterministicdavidanonymous.4open.science/r/frppo-validation-FB9B/}}}

\section{Conclusion}
We have introduced FR--PPO, an algorithm that formalizes the intuition behind PPO's clipping mechanism using Fisher--Rao geometry.
Our theoretical analysis relies on a key shift: replacing the non-smooth TV penalty with the $\operatorname{FR}^2$ distance on squared densities.
Because this metric induces a Bregman divergence with relative smoothness of the value function,
we obtain a penalty amenable to analysis.
Consequently, we provide strong theoretical guarantees, proving sub-linear convergence for both tabular and compatible linear approximation settings. 
Empirical results show that FR-PPO can match the performance of PPO in a number of test environments.
This work represents the first KL-free trust-region analysis for PPO-style algorithms, offering a rigorous foundation for future developments in information-geometric methods for RL.


\appendix
\section{Background}
\subsection{Notation and definitions on the measure space}
\label{sec:notation}
Let $\mathbb N_0=\mathbb N\cup\{0\}$. For given normed vector spaces $(X,\|\cdot\|_X)$ and $(Y,\|\cdot\|_Y)$,
we denote by $\mathcal{L}(X,Y)$ the normed vector space of bounded linear operators $T: X\to Y$, equipped with the operator norm
$\|T\|_{\mathcal{L}(X,Y)}=\sup_{\|x\|_X\le 1} \|Tx \|_Y=1$.  For simplicity, we write $\mathcal{L}(X)=\mathcal{L}(X,X)$.  

Let $(E,d)$ denote a Polish space (i.e., a complete separable metric space). We always equip a Polish space with its Borel sigma-field $\mathcal{B}(E)$. For a given measure $\mu$ in $E$, denote by $L^p(E,\mu)$, $p\in [1,\infty)$, the Lebesgue spaces of integrable functions, and by $L^\infty(E,\mu)$ the $\mu$-essential supremum norm. Denote by $B_b(E)$ the space of bounded measurable functions $f:E\rightarrow \mathbb{R}$ endowed with the supremum norm $\|f\|_{B_b(E)}=\sup_{x\in E}|f(x)|$. Denote by $\mathcal{M}(E)$ the Banach space of finite signed measures $\mu$ on $E$ endowed with the total variation norm $\|\mu\|_{\mathcal{M}(A)}=|\mu|(E)$, where $|\mu|$ is the total variation measure. 
Recall that if $\mu=f d\rho$, where $\rho \in \mathcal{M}_+(E)$ is a nonnegative measure and $f\in L^1(E,\rho)$, then $\|\mu\|_{\mathcal{M}(E)}=\|f\|_{L^1(E,\rho)}$.  Denote by $\mathcal{P}(E)\subset\mathcal{M}(E)$ the set of probability measures on $E$. For given $\mu,\mu'\in \mathcal{P}(E)$, we write   $\mu\ll \mu'$ if $\mu$ is absolutely continuous with respect to $\mu'$.
Following \cite[Definition $5.43$]{Carmona2018ProbabilisticTO}, we recall the notion of differentiability on the space of probability measures.
\begin{definition}[Flat differentiability on $\mathcal{P}(E)$]
\label{def:flat-derivative}
We say a function $F:\mathcal{P}(E)\to \mathbb R$ has a flat derivative, if there exists a measurable function $\frac{\delta F}{\delta \mu}:\mathcal{P}(E)\times \mathbb R^d \to \mathbb{R},$ called the flat derivative of $F,$ for which there exists $\kappa > 0$ such that for all $(\mu,x) \in \mathcal{P}(E) \times \mathbb R^d,$ $\left|\frac{\delta F}{\delta \mu}(\mu,x)\right| \leq \kappa,$ and for all $\mu' \in \mathcal{P}(E),$
\begin{equation*}
\lim_{\varepsilon \searrow 0 }\frac{F(\mu^\varepsilon)- F(\mu)}{\varepsilon} =
\int_{E} \frac{\delta F}{\delta \mu} (\mu, x) (\mu'-\mu) (\mathrm{d}x), \quad \textnormal{with $\mu^\varepsilon =\mu + \varepsilon (\mu' - \mu)$\,,}
\end{equation*}
and $\int_{E} \frac{\delta F}{\delta \mu} (\mu, x) \mu(\mathrm{d}x)=0$.
\end{definition}
We now introduce the concept of Bregman divergence on the space of probability measures (cf. \cite{kerimkulov2024mirrordescentstochasticcontrol}).
\begin{definition}[Bregman divergence]
For all $\mu,\mu' \in \mathcal{P}(E),$ the Bregman divergence $D_h:\mathcal{P}(E) \times \mathcal{P}(E) \to [0,\infty)$ generated by a convex function $h:\mathcal{P}(E) \to \mathbb R$ is defined by
\begin{equation*}
D_h(\mu'|\mu)\coloneqq h(\mu') - h(\mu) - \int_E \frac{\delta h}{\delta \mu}(\mu,x)(\mu'-\mu)(\mathrm{d}x).
\end{equation*}
\end{definition}
\subsection{Notation and auxiliary results for MDPs}
\label{appendix:more-notation-MDP}
Given Polish spaces $(E_1,d_1)$ and $(E_2,d_2)$, we introduce notation for measurable functions $k:E_1\rightarrow \mathcal{M}(E_2)$. 
Denote by $b\mathcal{M}(E_1|E_2)$ the Banach space of bounded signed kernels $k: E_2\rightarrow \mathcal{M}(E_1)$ endowed with the norm $\|k\|_{b\mathcal{M}(E_1|E_2)}=\sup_{x\in E_2}\|k(x)\|_{\mathcal{M}(E_1)}$; that is, $k(U|\cdot): E_2\rightarrow \mathbb{R}$ is measurable for all $U\in \mathcal{M}(E_1)$ and $k(\cdot|x)\in \mathcal{M}(E_1)$ for all $x\in E_2$. For a fixed positive measure $\mu\in \mathcal{M}(E_1)$ and $k\in  b\mathcal{M}(E_1|E_2)$, we write $\mu\ll k$ (resp.~$ k\ll \mu$) if the associated kernel $k_\mu \in b\mathcal{M}(E_1|E_2)$ given by $k_\mu(dy|x)=\mu(dy)$ for all $x\in E_2$ satisfies  $ k_\mu\ll k$ (resp.~$  k\ll k_\mu$). We denote by $b\mathcal{M}_{\mu}(E_1|E_2)$ the space of kernels $k\in  b\mathcal{M}(E_1|E_2)$ such that $k\ll \mu$. We denote by $\mathcal{P}(E_1|E_2)$ (resp.~$\mathcal{P}_\mu(E_1|E_2)$) the  collection  of $P\in b\mathcal{M}(E_1|E_2)$ (resp.~$P\in b\mathcal{M}_\mu(E_1|E_2)$) such that $P(\cdot|x)\in \mathcal{P}(E_1)$ for all $x\in E_2$. For given $\mu\in \mathcal{P}(E_2)$ and $k\in \mathcal{P}(E_1|E_2)$, we define the semidirect product $\mu\otimes k\in \mathcal{P}(E_2\times E_1) $ of $\mu$ with $k$ by $ (\mu\otimes k)(A\times B)=\int_A k(B|x)\mu(dx)$, for all $A\in \mathcal{B}(E_2),$ $B\in \mathcal{B}(E_1)$. 

Every kernel $k\in b\mathcal{M}(E_1|E_2)$ induces bounded linear operators $T_k \in \mathcal{L}(\mathcal{M}(E_2),\mathcal{M}(E_1))$ and $S_k \in \mathcal{L}(B_b(E_1),B_b(E_2))$ defined by
\begin{equation}
\label{eq:kernel-T}
    T_k\mu(\mathrm{d}y)=\mu k(\mathrm{d}y)=\int_{E_2}\mu(\mathrm{d}x)k(\mathrm{d}y|x)
\end{equation}
and 
\begin{equation}
\label{eq:kernel-S}
    S_kf(x)=\int_{E_1}k(\mathrm{d}y|x)f(y),
\end{equation}
respectively. Moreover, by \cite[Ex.2.3 and Prop.3.1]{kunze2011pettis}, we have
\begin{align*}
\|k\|_{b\mathcal{M}(E_1|E_2)}=\sup_{x\in E_2}\underset{\|h\|_{B_b(E_1)}\le 1}{\sup_{h\in B_b(E_1)}}\int_{E_1}h(y)k(\mathrm{d}y|x) =\|S_k\|_{\mathcal{L}(B_b(E_1),B_b(E_2))} =\|T_k\|_{\mathcal{L}(\mathcal{M}(E_2),\mathcal{M}(E_1))},
\end{align*}
where the latter are operator norms. Thus, $b\mathcal{M}(E|E)$ is a Banach algebra with the product defined via composition of the corresponding linear operators; in particular, for a given $k\in b\mathcal{M}(E|E)$,
\begin{equation*}
T_k^n\mu(\mathrm{d}y)=\mu k^n(\mathrm{d}y) = \int_{E^{n}}\mu(\mathrm{d}x_0)k(\mathrm{d}x_1|x_0)\cdots k(\mathrm{d}x_{n-1}|x_{n-2}) k(\mathrm{d}y|x_{n-1}).
\end{equation*}
Notice that if $f\in L^\infty(E_1,\mu)$ and $k\in b\mathcal{M}_{\mu}(E_1|E_2)$, then for all $x\in E_2$,
\begin{equation*}
S_kf(x)=\int_{E_1}\mu(\mathrm{d}y)\frac{\mathrm{d}k}{\mathrm{d}\mu}(y|x)f(y) \le \|f\|_{L^\infty(E_1,\mu)} \left\|\frac{\mathrm{d}k}{\mathrm{d}\mu}(\cdot|x)\right\|_{L^1(E_1,\mu)}\le \|f\|_{L^\infty(E_1,\mu)} \|k\|_{b\mathcal{M}(E_1|E_2)}. \label{ineq:essup_kernel}
\end{equation*}

\begin{lemma}[Adjoint relationship between $T_k$ and $S_k$]
\label{lem:adjoint_T_S}
Let $E_1,E_2$ be Polish spaces and let $k\in b\mathcal M(E_1|E_2)$ be a bounded signed kernel from $E_2$ to $\mathcal{M}(E_1)$. Then for every $\mu\in\mathcal M(E_2)$ and every $f\in B_b(E_1)$,
\begin{equation*}
\label{eq:adjoint_pairing}
\int_{E_1} f(y)\,(T_k\mu)(\mathrm dy) = \int_{E_2} (S_k f)(x)\,\mu(\mathrm dx).
\end{equation*}
Equivalently, with the duality pairing $\langle \mu,f\rangle\coloneqq \int f\,\mathrm d\mu$,
\[
\langle T_k\mu,f\rangle=\langle \mu,S_k f\rangle.
\]
\end{lemma}
\begin{proof}
By the definition of $T_k$ and Fubini's theorem (which holds since $f$ is bounded and $k$ is a bounded kernel),
\begin{align*}
\int_{E_1} f(y)(T_k\mu)(\mathrm dy) &= \int_{E_1} f(y)\left(\int_{E_2}\mu(\mathrm dx)k(\mathrm dy|x)\right)\\ 
&= \int_{E_2}\mu(\mathrm dx)\left(\int_{E_1} f(y)k(\mathrm dy|x)\right) = \int_{E_2} (S_k f)(x)\mu(\mathrm dx).
\end{align*}
\end{proof}

\begin{lemma}[Adjoint relationship for products of kernels]
\label{lem:adjoint_products}
Let $E_0,E_1,\dots,E_n$ be Polish spaces and for each $i=1,\dots,n$ let
$k_i\in b\mathcal M(E_i|E_{i-1})$ be a bounded signed kernel. Let $T_{k_i}:\mathcal M(E_{i-1})\to \mathcal M(E_i)$ and
$S_{k_i}:B_b(E_i)\to B_b(E_{i-1})$ be the induced bounded linear operators as in Lemma~\ref{lem:adjoint_T_S}. Then for any $\mu\in\mathcal M(E_0)$ and any $f\in B_b(E_n)$,
\begin{equation}
\label{eq:adjoint_products}
\Big\langle T_{k_n}\cdots T_{k_2}T_{k_1}\mu, f\Big\rangle =
\Big\langle \mu, S_{k_1}S_{k_2}\cdots S_{k_{n-1}}S_{k_n}f\Big\rangle.
\end{equation}
In particular, if $E_0=\cdots=E_n=E$ and $k_i=k$ for all $i$, then for every $n\in\mathbb N$,
\begin{equation}
\label{eq:adjoint_powers}
\langle T_k^{\,n}\mu, f\rangle = \langle \mu, S_k^{\,n}f\rangle.
\end{equation}
\end{lemma}
\begin{proof}
We prove \eqref{eq:adjoint_products} by repeated application of
Lemma~\ref{lem:adjoint_T_S}. Set $\nu_0\coloneqq \mu$ and for $i=1,\dots,n$
define $\nu_i\coloneqq T_{k_i}\nu_{i-1}$, so that $\nu_n=T_{k_n}\cdots T_{k_1}\mu$.
Then by Lemma~\ref{lem:adjoint_T_S},
\[ 
\langle \nu_n,f\rangle = \langle T_{k_n}\nu_{n-1},f\rangle= \langle \nu_{n-1},S_{k_n}f\rangle.
\]
Apply Lemma~\ref{lem:adjoint_T_S} again to $\nu_{n-1}=T_{k_{n-1}}\nu_{n-2}$:
\[
\langle \nu_{n-1},S_{k_n}f\rangle = \langle T_{k_{n-1}}\nu_{n-2},S_{k_n}f\rangle = \langle \nu_{n-2},S_{k_{n-1}}S_{k_n}f\rangle.
\]
Continuing iteratively yields
\[
\langle \nu_n,f\rangle = \langle \nu_0,S_{k_1}\cdots S_{k_n}f\rangle =
\langle \mu,S_{k_1}\cdots S_{k_n}f\rangle,
\]
which is \eqref{eq:adjoint_products}. The power identity \eqref{eq:adjoint_powers} is the special case $k_1=\cdots=k_n=k$ on a single space $E$.
\end{proof}

We are ready to introduce MDPs. Let $S$ and $A$ denote Polish spaces, $P\in \mathcal{P}(S|S\times A)$, $r\in B_b(S\times A)$ and $\gamma\in [0,1)$. The five-tuple $\mathcal{M}=(S,A,P,r,\gamma)$ determines an infinite horizon $\gamma$-discounted Markov decision model. We further assume that $A$ is compact, $P(\cdot|\cdot,a)\in \mathcal{P}(S|S)$ is strongly Feller for all $a\in A$, and that $r(s,\cdot):A\rightarrow \mathbb{R}$ is upper semi-continuous for every $s\in S$ so that \cite[Condition 3.3.3]{hernandez2012discrete} holds, and thus measurable selection condition holds. 

Let $((S\times A)^{\mathbb N},\mathcal{F})$ denote the canonical sample space, where $\mathcal{F}$ is the corresponding $\sigma$-algebra. Elements of $(S\times A)^{\mathbb N}$ are of the form $(s_0,a_0,s_1,a_1,\ldots)$ with $s_n\in S$ and $a_n\in A$ denoting the projections and called the state and action variables, at time $n\in \mathbb{N}_0$, respectively. By \cite[Proposition 7.28]{bertsekas2004stochastic}, for an arbitrarily given initial distribution $\rho\in \mathcal{P}(S)$ and randomized stationary policy $\pi \in \mathcal{P}(A|S),$ there exists a unique product probability measure $\mathbb P^{\pi}_{\rho}$ on the canonical sample space with expectation denoted $\mathbb{E}^{\pi}_{\rho}$ such that for every time $n\in \mathbb{N}_0$,  $\mathbb P^{\pi}_{\rho}(s_0\in \mathcal{S})=\rho(\mathcal{S})$, $\mathbb P^{\pi}(a_n\in \mathcal{A}|(s_0,a_0,\ldots,s_n))=\pi(a_n|s_n)$, and $$\mathbb P^{\pi}_{\rho}(s_{n+1}\in \mathcal{S}|(s_0,a_0,\ldots,s_n,a_n))=\mathcal{P}(\mathcal{S}|s_n,a_n)$$ for all $\mathcal{S}\in \mathcal{B}(S)$ and $\mathcal{A}\in \mathcal{B}(A)$. In particular, $\{s_n\}_{n\in \mathbb{N}_0}$ is a Markov chain with kernel $P_{\pi}\in \mathcal{P}(S|S)$ given by
\begin{equation}
\label{markov chain kernel}
P_{\pi}(\mathrm{d}s'|s)=\int_{A}P(\mathrm{d}s'|s,a')\pi(\mathrm{d}a'|s).
\end{equation}
For given $s\in S$, we denote $\mathbb{E}^{\pi}_{s}=\mathbb{E}^{\pi}_{\delta_s}$, where $\delta_s\in \mathcal{P}(S)$ denotes the Dirac measure at $s\in S$.
For a given policy $\pi\in\mathcal{P}(A|S)$, we also define the state-action value function $Q_{\pi}\in B_b(S\times A)$ by
\begin{equation}
\label{def:Qpi}
Q_{\pi}(s,a)=r(s,a)+\gamma\int_{S}V_{\pi}(s')P(\mathrm{d}s'|s,a),
\end{equation}
and the advantage function $A_{\pi}\in B_b(S\times A)$ by
\begin{equation*}\label{def:Api}
A_{\pi}(s,a)=Q_{\pi}(s,a)-V_{\pi}(s).
\end{equation*}
The occupancy kernel $d^{\pi}\in \mathcal{P}(S|S)$ is defined by
\begin{equation}
\label{occupancy kernel}
d^{\pi}(\mathrm{d}s'|s)=(1-\gamma)\sum_{n=0}^{\infty}\gamma^n P^n_{\pi}(\mathrm{d}s'|s),
\end{equation}
where $P^0_{\pi}(\mathrm{d}s'|s):=\delta_s(\mathrm{d}s')$, $P^n_{\pi}$ is understood as a product of kernels, and convergence is understood in $b\mathcal{M}(S|S)$. For a given initial distribution $\rho\in\mathcal{P}(S)$,  we define
\begin{equation*}
V_{\pi}(\rho)=\int_{S}V_{\pi}(s)\rho(\mathrm{d}s), \; d^{\pi}_{\rho}(\mathrm{d}s)=\int_{S}d^{\pi}(\mathrm{d}s|s')\rho(\mathrm{d}s').
\end{equation*}
For each $(s,a)\in S\times A$, we define the measurable optimal value and state-action value functions by $$V^{\ast}(s)=\sup_{\pi \in \mathcal{P}(A|S)}V_{\pi}(s) \quad \textnormal{and} \quad  Q^*(s,a)=\sup_{\pi \in \mathcal{P}(A|S)}Q_{\pi}(s,a).$$
By virtue of \cite[Theorem 4.2.3]{hernandez2012discrete}, we have the following dynamic programming principle.
See, also, \cite[Theorems 1 and 2]{haarnoja2017reinforcement}. In order for~\cite[Theorem 4.2.3]{hernandez2012discrete} to apply we need the following technical assumption.
\begin{assumption}
\label{ass for dpp}
\begin{enumerate}
\item The kernel $P \in \mathcal P(S|S\times A)$ is strongly continuous, that is: for every $v\in B_b(S)$ (bounded and measurable) the function  $w(s,a) = \int_S v(s') P(\mathrm{d}s'|s,a)$ is  bounded and measurable as a function from $S\times A$ to $\mathbb R$.  
\item The reward function $r\in B_b(S\times A)$ is upper semi-continuous and sup-compact on $S\times A$ i.e. for any $s\in S$ and any $l \in \mathbb R$ the set $\{a\in A:r(s,a) \geq l\}$ is compact.  
\end{enumerate}
\end{assumption}

\begin{theorem}[Dynamic programming principle]\label{thm:DPP} 
Let Assumption~\ref{ass for dpp} hold. Then the optimal value function $V^{\ast}\in B_b(S)$ is the unique solution of the Bellman equation given by
\begin{equation*}
\begin{aligned}\label{eq:BellmanDPP}
V^{\ast}(s) &= \max_{a \in A}\left[r(s,a)+\gamma\int_{S}V^{\ast}(s')P(\mathrm{d}s'|s,a)\right].
\end{aligned}
\end{equation*}
Moreover, writing $Q^*(s,a)=r(s,a)+\gamma\int_{S}V^*(s')P(\mathrm{d}s'|s,a),$ there exists a measurable function  $f^{\ast}:S\rightarrow A$ called a selector such that $f^{\ast}(s)\in \operatorname{argmax}_{a\in A}Q^*(s,a)$ and the induced policy $\pi^* \in \mathcal{P}(A|S)$ defined by $\pi^*(\mathrm{d}a|s)=\delta_{f^*(s)}(\mathrm{d}a)$ for all $s\in S$ satisfies $V^*=V_{\pi^*}$.
\end{theorem}
\section{Proof of Lemma~\ref{lemma perf diff} and Theorem \ref{thm lower bound perf diff}}
\label{proofs perf diff and tight bound}
As in~\cite{kerimkulov2024fisherraogradientflowentropyregularised} we first recall that a kernel $k\in b\mathcal{M}(S|S)$ induces a linear operator $L_k\in \mathcal{L}(B_b(S))$ such that for all $h\in B_b(S)$, $L_k h(s)=\int_S h(s')k(\mathrm{d}s'|s)$. Consider the kernel $\gamma P_{\pi} \in b\mathcal{M}(S|S)$  defined by $(\gamma P_{\pi})(B)=\gamma\int_B \int_A P(\mathrm{d}s'|s,a) \pi(\mathrm{d}a|s)$ for all $B\in \mathcal{B}(S)$. Then  as $P_\pi\in \mathcal{P}(S|S)$ and $\|  P_{\pi}\|_{b\mathcal{M}(S|S)}=1$,
\[
\|L_{\gamma P_{\pi}}  \|_{\mathcal{L}(B_b(S))} \le \|\gamma P_{\pi}\|_{b\mathcal{M}(S|S)}=\gamma \| P_{\pi}\|_{b\mathcal{M}(S|S)}=\gamma.
\] 
Let $\operatorname{id} $ be the identity operator on $B_b(S)$. As $\|L_{\gamma P_{\pi}}\|_{\mathcal{L}(B_b(S))}\le \gamma <1$, the operator $\operatorname{id}-L_{\gamma P_{\pi}} \in \mathcal{L}(B_b(S))$ is invertible, and the inverse operator is given by the Neumann series $(\operatorname{id}-L_{\gamma P_{\pi}})^{-1}=\sum_{n=0}^\infty L_{\gamma P_{\pi}}^n$. Observe that $L_{\gamma P_{\pi}}^n= L_{\gamma^n P^n_{\pi}}$ for all $n\in \mathbb N_0$, where $P^n_{\pi}$ is the $n$-times  product of the kernel $P_{\pi}$ with $P^0_{\pi}(\mathrm{d}s'|s)\coloneqq \delta_s(\mathrm{d}s')$. Therefore, 
\begin{equation}
\label{eq-occupation-measures-eq}
(\operatorname{id}-L_{\gamma P_{\pi}})^{-1}= L_{(1-\gamma)^{-1}d^{\pi}}\,.    
\end{equation}
Now let $\pi\in P(A|S)$ and $f,g\in B_b(S)$ be such that for all  $s\in S$, 
\begin{equation}
\label{eq fk assumption}
f(s) - \gamma  \int_S \int_A f(s') P(\mathrm{d}s'|s,a)\pi(\mathrm{d}a|s) = g(s).    
\end{equation}
Then from this and~\eqref{eq-occupation-measures-eq} we have $L_{(1-\gamma)^{-1}d^{\pi}} g = (\operatorname{id}-L_{\gamma P_{\pi}})^{-1}g = (\operatorname{id}-L_{\gamma P_{\pi}})^{-1} (\operatorname{id}-L_{\gamma P_{\pi}})f$. 
In other words
\begin{equation}
\label{eq fk conclusion}
f(s) = \frac{1}{1-\gamma}\int_S  g(s') d^{\pi}(\mathrm{d}s'|s) \,\,\text{for all $s\in S$}.    
\end{equation}
\begin{proof}[Proof of Lemma~\ref{lemma perf diff}]
The on-policy Bellman equation is
\[
V_{\pi}(s) = \int_A \bigg( r(s,a) + \gamma \int_S V_{\pi}(s')P(\mathrm{d}s'|s,a)\bigg) \pi(\mathrm{d}a|s)
\]
which together with~\eqref{def:Qpi} means that $V_{\pi}(s) = \int_A Q_\pi(s,a) \pi(\mathrm{d}a|s)$.
Hence
\[
\begin{split}
&V_{\pi'}(s) - V_{\pi}(s) = \int_A  Q_{\pi'}(s,a) \pi'(\mathrm{d}a|s) - \int_A Q_\pi(s,a) \pi(\mathrm{d}a|s) \\
& = \int_A  Q_\pi(s,a) (\pi'-\pi)(\mathrm{d}a|s) + \int_A  (Q_{\pi'} - Q_\pi)(s,a) \pi'(\mathrm{d}a|s).
\end{split}
\]
Hence, using~\eqref{def:Qpi} again, we have  
\[
V_{\pi'}(s) - V_{\pi}(s) - \gamma \int_A \int_S (V_{\pi'} - V_{\pi})(s')P(\mathrm{d}s'|s,a)\pi'(\mathrm{d}a|s) =  \int_A  Q_\pi(s,a) (\pi'-\pi)(\mathrm{d}a|s).
\]
Since $s\mapsto V_{\pi}(s)$ does not depend on $a$ we have 
\[
(V_{\pi'} - V_{\pi})(s) - \gamma \int_A \int_S (V_{\pi'} - V_{\pi})(s')P(\mathrm{d}s'|s,a)\pi'(\mathrm{d}a|s) =  \int_A  (Q_\pi(s,a) - V_{\pi}(s)) (\pi'-\pi)(\mathrm{d}a|s)
\]
which is of the form~\eqref{eq fk assumption} and so~\eqref{eq fk conclusion} leads to
\[
V_{\pi'}(s) - V_{\pi}(s) = \frac{1}{1-\gamma}\int_S \int_A  (Q_\pi(s',a) - V_{\pi}(s')) (\pi'-\pi)(\mathrm{d}a|s') d^\pi(\mathrm{d}s'|s)\,.
\]
Integrating over $\rho$ and noting that $A_\pi = Q_\pi - V_{\pi}$ concludes the proof.
\end{proof}

\begin{theorem}[TV$^2$ lower bound with auxiliary occupancy measure]
\label{thm:tv2_mixed}
For any $\rho\in\mathcal P(S)$ and any $\pi,\pi'\in\mathcal P(A|S)$,
\begin{align}
\label{eq:tv2_mixed_reward}
(V_{\pi'}-V_\pi)(\rho) &\ge\
\frac{1}{1-\gamma}\int_S\int_A A_\pi(s,a)\,(\pi'-\pi)(\mathrm da|s)\;d^\pi_\rho(\mathrm ds)\\
\nonumber
&-\frac{4\gamma\|r\|_{B_b(S \times A)}}{(1-\gamma)^3}
\left[
\int_S \mathrm{TV}^2\big(\pi'(\cdot|s),\pi(\cdot|s)\big)d^\pi_\rho(\mathrm ds)
+
\int_S \mathrm{TV}^2\big(\pi'(\cdot|s),\pi(\cdot|s)\big)\tilde d_{\rho}^{\pi,\pi'}(\mathrm ds)
\right].
\end{align}
\end{theorem}

\begin{proof}
For simplicity, for any $\pi,\pi'\in\mathcal P(A|S)$, let us denote
\[
g(s)\coloneqq \int_A A_\pi(s,a)\,(\pi'-\pi)(\mathrm da|s).
\]
The performance difference lemma (Lemma \ref{lemma perf diff}) gives
\[
(V_{\pi'}-V_\pi)(\rho)=\frac{1}{1-\gamma}\int_S g(s)\,d^{\pi'}_\rho(\mathrm ds).
\]
Adding and subtracting $d^\pi_\rho$ and lower bounding by subtracting an absolute value yields
\begin{equation}
\label{eq:decomp_start}
(V_{\pi'}-V_\pi)(\rho)
\ge
\frac{1}{1-\gamma}\int_S g(s)\ d^\pi_\rho(\mathrm{d}s)
-\frac{1}{1-\gamma}\left|\int_S g(s)\ (d^{\pi'}_\rho-d^\pi_\rho)(\mathrm{d}s)\right|.
\end{equation}
The goal is to lower bound the second term in \eqref{eq:decomp_start}.

Integrating \eqref{occupancy kernel} with respect to $\rho \in \mathcal{P}(S)$ gives
\begin{equation*}
    d^\pi_\rho(\mathrm{d}s)=(1-\gamma)\sum_{n\ge 0}\gamma^n\rho P_\pi^n(\mathrm{ds}),
\end{equation*}
where we used the shorthand notation $\rho P_\pi^n$ from \eqref{eq:kernel-T} since $P_\pi^n$ is a product of kernels. Hence, using the duality pairing in Lemma \ref{lem:adjoint_T_S},
\[
\int_S g(s)\ (d^{\pi'}_\rho-d^\pi_\rho)(\mathrm{d}s)
=(1-\gamma)\sum_{n\ge 0}\gamma^n\left[\langle\rho P_{\pi'}^n, g\rangle-\langle\rho P_\pi^n, g\rangle\right].
\]
Note that the $n=0$ term cancels. Therefore, by the linearity of the duality pairing,
\begin{equation}
\label{eq:time_expand}
\left|\int_S g(s)\ (d^{\pi'}_\rho-d^\pi_\rho)(\mathrm{d}s)\right|
\le
(1-\gamma)\sum_{n\ge 1}\gamma^n\left|\langle\rho(P_{\pi'}^n-P_\pi^n), g\rangle\right|.
\end{equation}

For $n\ge 1$, it can be showed by induction that
\begin{equation}
\label{eq:telescoping}
P_{\pi'}^n-P_\pi^n=\sum_{k=0}^{n-1} P_\pi^k(P_{\pi'}-P_\pi)P_{\pi'}^{n-1-k}.
\end{equation}
Plugging \eqref{eq:telescoping} into \eqref{eq:time_expand} and using the triangle inequality gives
\begin{align}
\label{eq:double_sum}
\left|\int_S g(s)\ (d^{\pi'}_\rho-d^\pi_\rho)(\mathrm{d}s)\right|
&\le
(1-\gamma)\sum_{n\ge 1}\gamma^n\sum_{k=0}^{n-1}
\left|\langle\rho P_\pi^k (P_{\pi'}-P_\pi) P_{\pi'}^{n-1-k}, g\rangle\right|\\
\nonumber
&= (1-\gamma)\sum_{n\ge 1}\gamma^n\sum_{k=0}^{n-1}
\left|\langle\rho P_{\pi'}^k, (P_{\pi'}-P_\pi)P_{\pi'}^{n-1-k} g\rangle\right|,
\end{align}
where the equality follows from Lemma \ref{lem:adjoint_products}. For $0\leq k \leq n-1$, let $\mu_k \coloneqq \rho P_\pi^k\in\mathcal P(S)$ and $m\coloneqq n-1-k\ge 0$. First, since $|\int \varphi\ \mathrm{d}(\pi'-\pi)|\le 2\|\varphi\|_{B_b} \mathrm{TV}(\pi',\pi)$,
\begin{equation}
\label{eq:g_bound}
|g(s)|\le 2\|A_\pi\|_{B_b(S \times A)}\,\mathrm{TV}\big(\pi'(\cdot|s),\pi(\cdot|s)\big).
\end{equation}
For each $s\in S$, let $\xi_s\coloneqq (\pi'-\pi)(\cdot|s) \in \mathcal{M}(A)$ and let $|\xi_s|(A)$ denote its total variation measure. Note that $|\xi_s|(A)=2\mathrm{TV}\big(\pi'(\cdot|s),\pi(\cdot|s)\big)$.
Define the probability kernel $\hat\pi(\cdot|s)\in\mathcal P(A)$ by
\[
\hat\pi(\mathrm da|s)\coloneqq 
\begin{cases}
\dfrac{|\xi_s|(\mathrm da)}{|\xi_s|(A)}, &\text{if }|\xi_s|(A)>0,\\
p, &\text{if }|\xi_s|(A)=0,
\end{cases}
\]
where $p$ is any probability measure in $\mathcal P(A)$. Define the  kernel $\hat P\in\mathcal P(S|S)$ by
\[
\hat P(\mathrm ds'|s)\coloneqq \int_A P(\mathrm ds'|s,a)\,\hat\pi(\mathrm da|s).
\]
Then, for any $h\in B_b(S)$, using $|\int q\,\mathrm{d}\xi|\le\int |q|\,\mathrm{d}|\xi|$ and the definition of $\hat\pi$,
\begin{align}
\label{eq:D_bound}
|(P_{\pi'}-P_\pi)h(s)|
&=
\left|\int_A \xi_s(\mathrm da)\int_S h(s')P(\mathrm ds'|s,a)\right| \\
\nonumber
&\le
\int_A |\xi_s|(\mathrm da)\int_S |h(s')|P(\mathrm ds'|s,a)\\
&=
2\mathrm{TV}\big(\pi'(\cdot|s),\pi(\cdot|s)\big)\int_A \hat\pi(\mathrm da|s)\int_S |h(s')|P(\mathrm ds'|s,a) \\
\nonumber
&=
2\mathrm{TV}\big(\pi'(\cdot|s),\pi(\cdot|s)\big)(\hat P |h|)(s),
\end{align}
where $\hat P |h|$ is the shorthand notation from \eqref{eq:kernel-S}.  By positivity of $P_{\pi'}^{m}$ and \eqref{eq:g_bound},
\[
|(P_{\pi'}^{m}g)(s)|
\le (P_{\pi'}^{m}|g|)(s)
\le 2\|A_\pi\|_{B_b(S \times A)}\,(P_{\pi'}^{m}\Delta)(s),
\]
where, for simplicity, we denoted $\Delta(s) \coloneqq \mathrm{TV}\big(\pi'(\cdot|s),\pi(\cdot|s)\big) \in [0,1]$. 
For $m\ge 0$ define
\[
K_m \coloneqq \hat PP_{\pi'}^{m}\in\mathcal P(S|S),
\]
since it is a composition of probability kernels. Therefore, applying \eqref{eq:D_bound} with $h=P_{\pi'}^{m}g$ gives
\begin{equation*}
\label{eq:DPg_bound}
|(P_{\pi'}-P_\pi)P_{\pi'}^{m}g(s)|
\le 4\|A_\pi\|_{B_b(S \times A)}\,\Delta(s)\,(K_m\Delta)(s),
\end{equation*}
and hence,
\begin{equation}
\label{eq:integrated_prod}
\left|\langle \mu_k, (P_{\pi'}-P_\pi) P_{\pi'}^{m}g\rangle\right|
\le
4\|A_\pi\|_{B_b(S \times A)}\int_S \Delta(s)(K_m\Delta)(s)\mu_k(\mathrm ds).
\end{equation}

Next, we use the following fact. If $\mu\in\mathcal P(S)$, $K\in\mathcal P(S|S)$ and $u \in B_b(S)$ is non-negative, then
\begin{equation}
\label{eq:amgm_lemma}
\int_S u(s)(Ku)(s)\mu(\mathrm{d}s) \le \frac12\int_S u^2(s)\mu(\mathrm{d}s)+\frac12\int u^2(s) (\mu K)(\mathrm{d}s),
\end{equation}
where $Ku \in B_b(S)$ is non-negative. Indeed, we have the pointwise inequality $u(s)(Ku)(s)\le \frac12(u^2(s)+(Ku)^2(s))$. Jensen's inequality gives $(Ku)^2(s)\le (Ku^2)(s)$ and integrating yields
\begin{equation*}
    \int_S (Ku)^2(s)\mu(\mathrm{d}s)\le \int_S (Ku^2)(s)\mu(\mathrm{d}s)=\int_S u^2(s)(\mu K)(\mathrm{d}s),    
\end{equation*}
where the equality follows from Lemma \ref{lem:adjoint_T_S}. Applying \eqref{eq:amgm_lemma} in \eqref{eq:integrated_prod} with $u=\Delta$, $K = K_m$, $\mu=\mu_k$ gives
\begin{equation}
\label{eq:mu_k_bound}
\left|\langle \mu_k, (P_{\pi'}-P_\pi) P_{\pi'}^{m}g\rangle\right|
\le 2\|A_\pi\|_{B_b(S \times A)}\left[
\int_S \Delta^2(s) \mu_k(\mathrm ds) + \int_S \Delta^2(s) (\mu_k K_m)(\mathrm ds) \right].
\end{equation}
Define
\[
\mathcal S \coloneqq (1-\gamma)\sum_{n\ge 1}\gamma^n\sum_{k=0}^{n-1}\left|\langle\mu_k, (P_{\pi'}-P_\pi) P_{\pi'}^{n-1-k}g\rangle\right|.
\]
Plugging \eqref{eq:mu_k_bound} into \eqref{eq:double_sum} gives
\[
\mathcal S \le 2\|A_\pi\|_{B_b(S \times A)}(\mathcal S_1+\mathcal S_2),
\]
where
\[
\mathcal S_1\coloneqq (1-\gamma)\sum_{n\ge 1}\gamma^n\sum_{k=0}^{n-1}\int_S \Delta^2(s)\mu_k(\mathrm{d}s),
\quad
\mathcal S_2\coloneqq (1-\gamma)\sum_{n\ge 1}\gamma^n\sum_{k=0}^{n-1}\int \Delta^2(s)(\mu_k K_{n-1-k})(\mathrm{d}s).
\]
For $\mathcal{S}_1$, recall $\mu_k=\rho P_\pi^k$ and so
\[
\mathcal S_1 = (1-\gamma)\sum_{n\ge 1}\gamma^n\sum_{k=0}^{n-1}\int_S \Delta^2(s)\mu_k(\mathrm ds) = (1-\gamma)\sum_{n\ge 1}\gamma^n\sum_{k=0}^{n-1}\int_S \Delta^2(s)(\rho P_\pi^k)(\mathrm ds).
\]
We first exchange the order of summation. For a fixed $k\ge 0$, the index $n$ ranges over $n\ge k+1$, hence
\begin{align*}
\mathcal S_1
&=(1-\gamma)\sum_{k=0}^{\infty}\sum_{n=k+1}^{\infty}\gamma^n\int_S \Delta^2(s)(\rho P_\pi^k)(\mathrm ds) =(1-\gamma)\sum_{k=0}^{\infty}\left(\sum_{n=k+1}^{\infty}\gamma^n\right)\int_S \Delta^2(s)(\rho P_\pi^k)(\mathrm ds).
\end{align*}
The inner sum is the geometric series
\[
\sum_{n=k+1}^{\infty}\gamma^n = \gamma^{n+1}\sum_{j=0}^{\infty}\gamma^j = \frac{\gamma^{k+1}}{1-\gamma}.
\]
Substituting this into the preceding equation yields
\begin{align*}
\mathcal S_1 &=(1-\gamma)\sum_{k=0}^{\infty}\frac{\gamma^{k+1}}{1-\gamma}\int_S \Delta^2(s)(\rho P_\pi^k)(\mathrm ds) =\sum_{k=0}^{\infty}\gamma^{k+1}\int_S \Delta^2(s)(\rho P_\pi^k)(\mathrm ds)\\ 
&=\gamma\sum_{k=0}^{\infty}\gamma^{k}\int_S \Delta^2(s)(\rho P_\pi^k)(\mathrm ds).
\end{align*}
Now using the definition of the occupancy measure, i.e., $d^\pi_\rho = (1-\gamma)\sum_{k=0}^{\infty}\gamma^k\rho P_\pi^k$, we get
\begin{equation}
\label{eq:S1_detailed}
\mathcal S_1
=\gamma\int_S \Delta^2(s)\left(\sum_{k=0}^{\infty}\gamma^k\,\rho P_\pi^k\right)(\mathrm ds)
=\frac{\gamma}{1-\gamma}\int_S \Delta^2(s)d^\pi_\rho(\mathrm ds),
\end{equation}
For $\mathcal{S}_2$, define the mixed occupancy measure $\tilde d_{\rho}^{\pi,\pi'}\in\mathcal P(S)$ by
\begin{equation}
\label{eq:mixocc}
\tilde d_{\rho}^{\pi,\pi'} \coloneqq (1-\gamma)^2\sum_{k=0}^{\infty}\sum_{m=0}^{\infty}\gamma^{k+m}\;
\rho P_\pi^{k}K_m.
\end{equation}
Since $(1-\gamma)^2\sum_{k,m\ge 0}\gamma^{k+m}=1$, it follows that $\tilde d_{\rho}^{\pi,\pi'}$ is a probability measure.
Recall
\[
\mathcal S_2 = (1-\gamma)\sum_{n\ge 1}\gamma^n\sum_{k=0}^{n-1}\int_S \Delta^2(s)(\mu_k K_{n-1-k})(\mathrm ds)
= (1-\gamma)\sum_{n\ge 1}\gamma^n\sum_{k=0}^{n-1}\int_S \Delta^2(s)(\rho P_\pi^k K_{n-1-k})(\mathrm ds).
\]
With the change of variables $m = n-1-k$, we get $n=k+1+m$. For each fixed $n$, the allowed $k \in \{0,1,...,n-1\}$ uniquely determines $m = n-1-k \geq 0$, and conversely any $k,m \geq 0$ uniquely give $n = m+k+1 \geq 1$. Thus, the map
\[
(n,k) \mapsto (k,m=n-1-k)
\]
is a bijection between $\{(n,k):n\ge 1,\ 0\le k\le n-1\}$ and $\{(k,m):k\ge 0, m\ge 0\}$. Hence,
\begin{align*}
\mathcal S_2
&=(1-\gamma)\sum_{k=0}^{\infty}\sum_{m=0}^{\infty}\gamma^{k+1+m}
\int_S \Delta^2(s)(\rho P_\pi^k K_{m})(\mathrm ds) =\gamma(1-\gamma)\sum_{k=0}^{\infty}\sum_{m=0}^{\infty}\gamma^{k+m}
\int_S \Delta^2(s)(\rho P_\pi^k K_{m})(\mathrm ds).
\end{align*}
Substituting the definition of the mixed occupancy measure $\tilde d_{\rho}^{\pi,\pi'}$ in \eqref{eq:mixocc} into the previous equation gives
\begin{equation}
\label{eq:S2_detailed}
\mathcal S_2 =\gamma(1-\gamma)\int_S \Delta^2(s)\left(\sum_{k,m\ge 0}\gamma^{k+m}\rho P_\pi^k K_m\right)(\mathrm ds) =\frac{\gamma}{1-\gamma}\int_S \Delta^2(s)\tilde d_{\rho}^{\pi,\pi'}(\mathrm ds).
\end{equation}
Combining \eqref{eq:S1_detailed} and \eqref{eq:S2_detailed} gives
\begin{equation}
\label{eq:master_bound}
\mathcal S \le \frac{2\gamma\|A_\pi\|_{B_b(S \times A)}}{1-\gamma}
\left[\int_S \mathrm{TV}^2\big(\pi'(\cdot|s),\pi(\cdot|s)\big)d^\pi_\rho(\mathrm ds)
+ \int_S \mathrm{TV}^2\big(\pi'(\cdot|s),\pi(\cdot|s)\big)\tilde d_{\rho}^{\pi,\pi'}(\mathrm ds)
\right].
\end{equation}
Since $\left|\int_s g(s)(d^{\pi'}_\rho-d^\pi_\rho)(\mathrm{d}s)\right|\le \mathcal S$, plugging \eqref{eq:master_bound} into \eqref{eq:decomp_start} and using $\|A_\pi\|_{B_b(S \times A)}\le \frac{2\|r\|_{B_b(S \times A)}}{1-\gamma}$ implies \eqref{eq:tv2_mixed_reward}.
\end{proof}

\begin{proof}[Proof of Theorem \ref{thm lower bound perf diff}]
Under Assumption~\ref{ass:conc}, for any non-negative $u \in B_b(S)$,
\[
\int_S u(s)\tilde d_{\rho}^{\pi,\pi'}(\mathrm ds)
\le
\left\|\frac{\mathrm d\tilde d_{\rho}^{\pi,\pi'}}{\mathrm d d^\pi_\rho}\right\|_{L^\infty(S,\rho)}
\int_S u(s)d^\pi_\rho(\mathrm ds)
\le
C_\rho\int_S u(s)d^\pi_\rho(\mathrm{d}s).
\]
Apply this with $u(s)=\mathrm{TV}^2\big(\pi'(\cdot|s),\pi(\cdot|s)\big)$ in Theorem \ref{thm:tv2_mixed} to obtain the conclusion.
\end{proof}

\section{Proof of Theorem \ref{thm policy improvement}}
Before proving Theorem \ref{thm policy improvement}, we first show that the update rule in \eqref{eq:pointwise_min} is well-defined.
\begin{lemma}[Well-posedness of the update \eqref{eq:pointwise_min}]
\label{lemma:well-posedness-FR-PPO}
Let $\pi^0 \in \mathcal{P}_\lambda(A| S)$, $\tau > 0$, and $\lambda \in \mathcal{P}(A)$. Then the sequence of iterates $(\pi^n)_{n \in \mathbb{N}}$ generated by the update \eqref{eq:pointwise_min} is well-defined and remains in $\mathcal{P}_\lambda(A|S)$ for all $n$.
Moreover
\begin{equation*}
    \frac{\mathrm d \pi^{n+1}}{\mathrm d \lambda}(\cdot|s) = \argmin_{\phi \in C}\left\| \phi - \left(\frac{\mathrm d \pi^n }{\mathrm d\lambda}(\cdot|s)  + \tau A_{\pi^n}(s,\cdot)\right)\right\|_{L^2_\lambda(A)},
\end{equation*} 
where $C=\left\{\phi\in L^2_\lambda(A) \vert \phi\geq 0\ \lambda\text{-a.e. on } A \text{ and } \int_A \phi(a)\lambda(\mathrm{d}a)=1\right\}$.

\end{lemma}
\begin{proof}
The maximum in \eqref{eq:pointwise_min} can be achieved by the following pointwise optimization in $s \in S$:
\begin{equation}
\label{eq:scheme_pointwise}
\pi^{n+1}(\cdot|s)= \argmax_{m \in \mathfrak C_\lambda}\left[\int_A A_{\pi^n}(s,a)(m(\mathrm{d}a)-\pi^n(\mathrm{d}a|s)) - \frac{1}{8\tau}\operatorname{FR^2}(m^2,\pi^n(\cdot|s)^2)\right].
\end{equation}
If $\pi^0 \in \mathcal{P}_\lambda(A|S),$ then $\pi^0(\cdot|s) \in \mathfrak C_\lambda,$ and it suffices to show that $\pi^1(\cdot|s) \in \mathfrak C_\lambda$. The first-order condition (see e.g., \cite[Section 5.1.1]{Bonnans2000PerturbationAO}) shows that for a.e. $a \in A,$
\begin{equation*} 
\label{eq:first-order_L2}
\left\langle A_{\pi^0}(s,\cdot) - \frac{1}{\tau}
\left(\frac{\mathrm d \pi^1}{\mathrm d\lambda }(\cdot|s)  -\frac{\mathrm d \pi^0 }{\mathrm d\lambda}(\cdot|s)  \right),
\phi-  \frac{\mathrm d \pi^1}{\mathrm d\lambda}(\cdot|s) 
\right\rangle_{L^2_\lambda(A)}\le 0,  
\quad \forall \phi\in C, 
\end{equation*}
where $\left\langle \cdot, \cdot \right\rangle_{L^2_\lambda(A)} $ is the inner product on $L^2_\lambda(A)$ and we note that $C$ is nonempty, closed and convex.
Define the projection map $\Gamma_{C}: L^2_\lambda(A) \to C$ such that 
$\Gamma_{C}(\varphi)=\argmin_{\phi\in C}\|\phi-\varphi\|_{L^2_\lambda(A)}$ for all $\varphi \in L^2_\lambda(A)$, 
which satisfies 
\begin{equation*}
\left\langle \Gamma_C(\varphi)-\varphi, \phi-  \Gamma_C(\varphi)\right\rangle_{L^2_\lambda(A)}\ge 0,  \quad \forall \phi\in C. 
\end{equation*}
Then 
\begin{equation*}
\frac{\mathrm d \pi^1}{\mathrm d \lambda}(\cdot|s) = \Gamma_{C}\left( \frac{\mathrm d \pi^0 }{\mathrm d\lambda}(\cdot|s)  + \tau A_{\pi^0}(s,\cdot)  \right).
\end{equation*}
Note $\|\Gamma_{C}(\varphi_1)-\Gamma_{C}(\varphi_2)\|_{L^2_\lambda(A)}
\le \|\varphi_1-\varphi_2\|_{L^2_\lambda(A)}$ for all $\varphi_1,\varphi_2\in L^2_\lambda(A)$ (see e.g., \cite[Theorem 4.3-1]{ciarlet2013linear}). Moreover, since $\frac{\mathrm d \pi^0}{\mathrm d \lambda}(\cdot|s) = \Gamma_{C}\left(\frac{\mathrm d \pi^0}{\mathrm d \lambda}(\cdot|s)\right),$ for a.e. $a \in A,$
\begin{align*}
\left\|\frac{\mathrm d \pi^1}{\mathrm d \lambda}(\cdot|s) \right\|_{L^2_\lambda(A)}
& \le  
\left \| \frac{\mathrm d \pi^0 }{\mathrm d\lambda }(\cdot|s)\right\|_{L^2_\lambda(A)} + \tau
\left \|A_{\pi^0}(s, \cdot)\right\|_{L^2_\lambda(A)}
\leq \left \| \frac{\mathrm d \pi^0 }{\mathrm d\lambda }(\cdot|s)\right\|_{L^2_\lambda(A)} + \frac{2 \tau \|r\|_{B_b(S\times A)}}{1-\gamma} ,
\end{align*}
so $\pi^1(\cdot|s) \in \mathfrak C_\lambda$, and inductively, $\left(\pi^n(\cdot|s)\right)_{n \in \mathbb N} \subset \mathfrak C_\lambda$.
\end{proof}
\begin{proof}[Proof of Theorem \ref{thm policy improvement}]

Since $\pi^{n+1}$ is given by~\eqref{eq:scheme_pointwise} we have for any $\pi\in \mathcal P_\lambda(A|S)$ and any $s\in S$ that 
\begin{equation*}
\begin{split}
& \int_A A_{\pi^n}(s,a)(\pi^{n+1}-\pi^n)(\mathrm{d}a|s) - \frac{1}{8\tau} \int_S\operatorname{FR}^2(\pi^{n+1}(\cdot|s)^2,\pi^n(\cdot|s)^2) \\ 
&\geq \int_A A_{\pi^n}(s,a)(\pi-\pi^n)(\mathrm{d}a|s) - \frac{1}{8\tau} \int_S\operatorname{FR}^2(\pi(\cdot|s)^2,\pi^n(\cdot|s)^2)\,.
\end{split}
\end{equation*}
In particular with $\pi=\pi^n$ we get for all $s\in S$ that
\[
\int_A A_{\pi^n}(s,a)(\pi^{n+1}-\pi^n)(\mathrm{d}a|s) - \frac{1}{8\tau} \int_S\operatorname{FR}^2(\pi^{n+1}(\cdot|s)^2,\pi^n(\cdot|s)^2) \geq 0\,.
\]
From this, the fact that $\operatorname{FR}^2(\pi^{n+1}(\cdot|s)^2,\pi^n(\cdot|s)^2)\geq 0$ and Lemma~\ref{lemma perf diff} we get
\[
\begin{split}
&V_{n+1}(\rho) - V_{n}(\rho)\\ & = \frac{1}{1-\gamma} \int_S \int_A A_{\pi^n}(s,a)(\pi^{n+1}-\pi^n)(\mathrm{d}a|s)d_{\rho}^{\pi^{n+1}}(\mathrm{d}s)\\
& \geq \frac{1}{1-\gamma} \int_S \bigg( \int_A A_{\pi^n}(s,a)(\pi^{n+1}-\pi^n)(\mathrm{d}a|s) - \frac{1}{8\tau}\operatorname{FR}^2(\pi^{n+1}(\cdot|s)^2,\pi^n(\cdot|s)^2)\bigg)d_{\rho}^{\pi^{n+1}}(\mathrm{d}s)\\
&\geq 0\,.
\end{split}
\]
Thus $V_{n+1}(\rho) \geq V_{n}(\rho)$.
\end{proof}

\section{Proof of Theorem~\ref{thm:convergence_fr_steps}}
\label{sec:proof_fr_descent} 
To prove Theorem \ref{thm:convergence_fr_steps}, we will need the following three-point lemma. For a proof see \cite{korba} and \cite{kerimkulov2024mirrordescentstochasticcontrol}.

\begin{lemma}[Bregman proximal inequality]
\label{lem three point}
Fix $\nu\in \mathcal P(A)$ and let $G:\mathcal P(A) \rightarrow \mathbb R$ be concave and have a flat derivative $\frac{\delta G}{\delta m}$. Suppose that there exists  $\overline m\in \mathcal P(A)$ such that 
\begin{equation*}
\overline m \in \argmax_{m\in \mathcal P(A)}
\left\{ G(m) - D_h(m,\nu) \right\}.
\end{equation*}
Then, for all  $ m'\in \mathcal P(A)$,
\begin{equation}
G(\overline m ) - D_h(\overline m,  \nu) \geq G(m') - D_h(m',\nu) + D_h(m',\overline m ).
\end{equation}
\end{lemma}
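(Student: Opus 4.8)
The plan is to mimic the classical three-point (Bregman proximal) inequality, transported to the space of measures using flat derivatives in the sense of Definition~\ref{def:flat-derivative}. The three ingredients I would assemble are: the first-order optimality condition satisfied by $\overline m$, the supergradient inequality coming from concavity of $G$, and an exact three-point identity for the Bregman divergence $D_h$.

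First I would extract the first-order condition. Fixing $m'\in\mathcal P_\lambda(A)$ and writing $m_\varepsilon:=\overline m+\varepsilon(m'-\overline m)$, the set $\mathcal P_\lambda(A)$ is convex so $m_\varepsilon\in\mathcal P_\lambda(A)$ for $\varepsilon\in[0,1]$; since $\overline m$ maximizes $m\mapsto G(m)-D_h(m|\nu)$, the right derivative at $\varepsilon=0$ of $\varepsilon\mapsto G(m_\varepsilon)-D_h(m_\varepsilon|\nu)$ is nonpositive. Using the flat derivative of $G$ (hypothesis) and of $h=\tfrac12\chi^2(\cdot|\lambda)$ — so that $D_h(\cdot|\nu)$ has flat derivative $x\mapsto\tfrac{\delta h}{\delta m}(\overline m,x)-\tfrac{\delta h}{\delta m}(\nu,x)$ — this derivative equals
\[
\int_A\Big(\tfrac{\delta G}{\delta m}(\overline m,x)-\tfrac{\delta h}{\delta m}(\overline m,x)+\tfrac{\delta h}{\delta m}(\nu,x)\Big)(m'-\overline m)(\mathrm dx),
\]
so I would conclude this integral is $\le 0$.

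Next I would record two purely algebraic facts. Concavity of $G$ gives the supergradient bound $G(m')-G(\overline m)\le\int_A\tfrac{\delta G}{\delta m}(\overline m,x)(m'-\overline m)(\mathrm dx)$. And expanding $D_h(m'|\nu)$, $D_h(m'|\overline m)$, $D_h(\overline m|\nu)$ from their definition and cancelling the $h(m')$, $h(\overline m)$, $h(\nu)$ terms yields the three-point identity
\[
D_h(m'|\nu)-D_h(m'|\overline m)-D_h(\overline m|\nu)=\int_A\Big(\tfrac{\delta h}{\delta m}(\overline m,x)-\tfrac{\delta h}{\delta m}(\nu,x)\Big)(m'-\overline m)(\mathrm dx).
\]
Finally, the claimed inequality is equivalent to $G(m')-G(\overline m)\le D_h(m'|\nu)-D_h(m'|\overline m)-D_h(\overline m|\nu)$; I would bound the left-hand side by the supergradient inequality, replace the right-hand side by the identity above, and observe that what remains to be proved is precisely the first-order condition — which is already established.

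The hard part will be making the first step fully rigorous, i.e.\ justifying that $\varepsilon\mapsto G(m_\varepsilon)-D_h(m_\varepsilon|\nu)$ is right-differentiable at $0$ with the stated derivative, and that optimality of $\overline m$ forces this derivative to be $\le 0$. This rests on the flat-differentiability hypothesis on $G$ together with the explicit quadratic form of $h=\tfrac12\chi^2(\cdot|\lambda)$: along the segment the densities $\tfrac{\mathrm dm_\varepsilon}{\mathrm d\lambda}$ stay in $L^2_\lambda(A)$, so $\chi^2(m_\varepsilon|\lambda)$ is a finite quadratic (hence smooth) function of $\varepsilon$, which legitimizes both the differentiation and the identification of $\tfrac{\delta h}{\delta m}$ at $\overline m$ and at $\nu$. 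Everything else is elementary algebra.
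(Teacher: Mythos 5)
Your proof is correct, and there is nothing in the paper to compare it against line by line: the paper states Lemma~\ref{lem three point} without proof, implicitly importing the standard three-point (Bregman proximal) inequality from the mirror-descent literature (cf.\ \cite{korba,kerimkulov2024mirrordescentstochasticcontrol}). Your route is exactly the canonical one: first-order optimality of $\overline m$ along the flat segment $m_\varepsilon=\overline m+\varepsilon(m'-\overline m)$ (feasible because $\mathcal P_\lambda(A)$ is convex), the supergradient inequality from concavity of $G$, and the exact three-point identity obtained by expanding the three Bregman divergences and cancelling $h(m')$, $h(\overline m)$, $h(\nu)$; summing these three facts gives the claim. The one technical point worth making explicit is the one you already flag: the paper's Definition~\ref{def:flat-derivative} demands a \emph{bounded} flat derivative, whereas $\tfrac{\delta h}{\delta m}(m,\cdot)=\tfrac{\mathrm dm}{\mathrm d\lambda}$ for $h=\tfrac12\chi^2(\cdot|\lambda)$ is in general only in $L^2_\lambda(A)$, so one should not invoke that definition verbatim for $D_h(\cdot|\nu)$. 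Your direct argument sidesteps this correctly: $h(m_\varepsilon)$ is a finite quadratic polynomial in $\varepsilon$ because all densities involved lie in $L^2_\lambda(A)$ (this is where the hypothesis $\nu\in\mathcal P_\lambda$ enters, making the cross terms against $\tfrac{\mathrm d\nu}{\mathrm d\lambda}$ finite by Cauchy--Schwarz), which justifies both the right-differentiation at $\varepsilon=0$ and the finiteness of every term cancelled in the three-point identity. With that caveat spelled out, the argument is complete.
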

Before proving the theorem, we observe from \cite[Proposition 2.20]{kerimkulov2024mirrordescentstochasticcontrol} that
\begin{equation*}
\frac{1}{8}\operatorname{FR}^2(\pi'(\cdot|s)^2, \pi(\cdot|s)^2) = D_h(\pi'(\cdot|s),\pi(\cdot|s)),
\end{equation*}
for $h(\pi) = \frac{1}{2}\chi^2(\pi|\lambda)$. Therefore, the lower bound in Corollary \ref{cor lower bound FR^2} becomes
\begin{align*}
(V_{\pi'} - V_{\pi})(\rho) &\geq \frac{1}{1-\gamma} \int_S \int_A \frac{\mathrm{d}\pi'}{\mathrm{d}\pi}(a|s)A_{\pi}(s,a)\pi(\mathrm{d}a|s)d_{\rho}^{\pi}(\mathrm{d}s)\\&- \frac{4\gamma\|r\|_{B_b(S \times A)}}{(1-\gamma)^3}\int_S D_h(\pi'(\cdot|s), \pi(\cdot|s))d_{\rho}^{\pi}(\mathrm{d}s),
\end{align*}
which shows that $V_{\pi}$ is smooth relative to the Bregman divergence generated by $\chi^2(\pi|\lambda)$. We will also need the following ``pointwise estimate''.
\begin{lemma}[Pointwise estimate]
\label{lemma:pointwise_estimate}
Let $V_n := V_{\pi^n}$ for $n\in \mathbb N$, $\tau > 0$ and $\pi^n \in \mathcal P_\lambda(A|S)$ given by~\eqref{eq:pointwise_min}.
Then for any $s\in S$ we have 
\begin{equation*}
(V_{n+1} - V_n)(s) \geq \int_A A_{\pi^n}(s,a)(\pi^{n+1}-\pi^n)(\mathrm{d}a|s) - \frac{1}{\tau} D_h(\pi^{n+1}|\pi^n)(s).
\end{equation*}	
\end{lemma}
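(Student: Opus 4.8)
The plan is to deduce the pointwise inequality from Corollary~\ref{cor lower bound FR^2} by taking the initial distribution to be the Dirac mass $\delta_s$, and then isolating, inside the resulting discounted-occupancy average on the right-hand side, the contribution of the state $s$ itself. The step-size hypothesis $\tfrac1\tau\ge\tfrac{\|r\|_{B_b(S\times A)}}{(1-\gamma)^2}$ will be used to guarantee that the surrogate-plus-penalty integrand in that average is nonnegative at every state, which is what makes this isolation legitimate.

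Concretely, fix $s\in S$ and apply Corollary~\ref{cor lower bound FR^2} with $\pi'=\pi^{n+1}$, $\pi=\pi^n$ and $\rho=\delta_s$. Using $V^{\pi}(\delta_s)=V^{\pi}(s)$ and $d^{\pi^n}_{\delta_s}=d^{\pi^n}(\cdot|s)$ and factoring $\tfrac1{1-\gamma}$ out of both terms, the bound becomes
\[
(V^{n+1}-V^n)(s)\ \ge\ \frac1{1-\gamma}\int_S G(s')\,d^{\pi^n}(\mathrm{d}s'|s),
\]
where
\begin{multline*}
G(s'):=\int_A \frac{\mathrm{d}\pi^{n+1}}{\mathrm{d}\pi^n}(a|s')A_{\pi^n}(s',a)\pi^n(\mathrm{d}a|s')\\
-\frac{\|r\|_{B_b(S\times A)}}{2(1-\gamma)^2}\operatorname{FR}^2\big(\pi^{n+1}(\cdot|s')^2,\pi^n(\cdot|s')^2\big).
\end{multline*}
Now expand $d^{\pi^n}(\mathrm{d}s'|s)=(1-\gamma)\sum_{k\ge 0}\gamma^k P^k_{\pi^n}(\mathrm{d}s'|s)$ with $P^0_{\pi^n}(\mathrm{d}s'|s)=\delta_s(\mathrm{d}s')$, so that $\tfrac1{1-\gamma}\int_S G\,d^{\pi^n}(\cdot|s)=\sum_{k\ge 0}\gamma^k\int_S G(s')P^k_{\pi^n}(\mathrm{d}s'|s)$. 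Provided $G\ge 0$ pointwise, every $k\ge 1$ term is nonnegative, so the series is bounded below by its $k=0$ term $G(s)$, giving $(V^{n+1}-V^n)(s)\ge G(s)$.

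It remains to verify $G\ge 0$ and to rewrite $G(s)$. Nonnegativity of $G$ is exactly the computation behind~\eqref{eq:imp_proof_2}: for each $s'$, testing the maximiser $\pi^{n+1}(\cdot|s')$ in~\eqref{eq:pointwise_min} against the competitor $\pi^n(\cdot|s')$ and using $\operatorname{FR}^2(\nu^2,\nu^2)=0$ yields $\int_A \tfrac{\mathrm{d}\pi^{n+1}}{\mathrm{d}\pi^n}(a|s')A_{\pi^n}(s',a)\pi^n(\mathrm{d}a|s')\ge \tfrac1{2\tau}\operatorname{FR}^2(\pi^{n+1}(\cdot|s')^2,\pi^n(\cdot|s')^2)$, and the step-size hypothesis makes this right-hand side dominate the penalty subtracted in $G(s')$, so $G(s')\ge 0$. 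For the rewriting of $G(s)$, the identity $\int_A A_{\pi^n}(s,a)\pi^n(\mathrm{d}a|s)=0$ turns its first term into $\int_A A_{\pi^n}(s,a)(\pi^{n+1}-\pi^n)(\mathrm{d}a|s)$, while the identification $\tfrac12\operatorname{FR}^2(m^2,\nu^2)=D_h(m|\nu)$ with $h=\tfrac12\chi^2(\cdot|\lambda)$ together with the step-size hypothesis gives $\tfrac{\|r\|_{B_b(S\times A)}}{2(1-\gamma)^2}\operatorname{FR}^2(\pi^{n+1}(\cdot|s)^2,\pi^n(\cdot|s)^2)\le \tfrac1\tau D_h(\pi^{n+1}|\pi^n)(s)$. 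Combining these gives precisely the stated inequality.

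I do not anticipate a genuine obstacle; the one step requiring a little care is the passage from the integrated estimate of Corollary~\ref{cor lower bound FR^2} to a truly pointwise one, which rests on the state-wise nonnegativity of $G$, and the step-size restriction is used only to control the penalty terms (both in ensuring $G\ge 0$ and in bounding the penalty at $s$ by $\tfrac1\tau D_h(\pi^{n+1}|\pi^n)(s)$). A slightly shorter alternative avoids Corollary~\ref{cor lower bound FR^2} entirely: derive, exactly as in the proof of Lemma~\ref{lemma perf diff}, the one-step identity
\[
(V^{n+1}-V^n)(s)=\int_A A_{\pi^n}(s,a)(\pi^{n+1}-\pi^n)(\mathrm{d}a|s)+\gamma\int_A\!\int_S (V^{n+1}-V^n)(s')P(\mathrm{d}s'|s,a)\pi^{n+1}(\mathrm{d}a|s),
\]
and discard the last, nonnegative term using the monotonicity $V^{n+1}\ge V^n$ from Theorem~\ref{thm policy improvement}; this already yields $(V^{n+1}-V^n)(s)\ge\int_A A_{\pi^n}(s,a)(\pi^{n+1}-\pi^n)(\mathrm{d}a|s)$, from which the stated inequality follows since $D_h\ge 0$.
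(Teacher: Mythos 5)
Your main argument is correct and is essentially the paper's own proof, with one cosmetic difference: the paper starts from the exact performance-difference identity at $\rho=\delta_s$, which puts the occupancy kernel $d^{\pi^{n+1}}(\cdot|s)$ of the \emph{new} policy on the right-hand side, subtracts the penalty $\tfrac1\tau D_h(\pi^{n+1}|\pi^n)(s')$ directly into the integrand (legitimate since $D_h\ge 0$), and then truncates the series $d^{\pi^{n+1}}=(1-\gamma)\sum_k\gamma^k P^k_{\pi^{n+1}}$ at its $k=0$ Dirac term using the state-wise nonnegativity \eqref{eq:imp_proof_2}; you instead start from the surrogate lower bound of Corollary~\ref{cor lower bound FR^2} at $\rho=\delta_s$, so the \emph{old}-policy occupancy $d^{\pi^n}(\cdot|s)$ and the reward-weighted $\operatorname{FR}^2$ penalty appear, and you invoke the step-size condition twice (once to get $G\ge 0$ from the argmax inequality, once to dominate the penalty at $s$ by $\tfrac1\tau D_h$). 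Both versions rest on the same key mechanism — pointwise nonnegativity of the surrogate-minus-penalty from the update's optimality, plus truncation of the occupancy expansion at the Dirac term — so the content is the same; the paper's variant is marginally leaner in that it only needs \eqref{eq:imp_proof_2} and not the quantitative step-size bound inside this lemma. Your proposed shortcut is also valid and genuinely different: the one-step identity is exactly the intermediate equation in the proof of Lemma~\ref{lemma perf diff}, and discarding its last term via the pointwise monotonicity $V^{n+1}\ge V^n$ (Theorem~\ref{thm policy improvement} applied with Dirac initial laws) yields the even stronger bound $(V^{n+1}-V^n)(s)\ge\int_A A_{\pi^n}(s,a)(\pi^{n+1}-\pi^n)(da|s)$, which implies the lemma since $D_h\ge 0$ and would suffice for its downstream use in \eqref{eq:proof_of_convergence_after_3_mdp_step}; there is no circularity, as Theorem~\ref{thm policy improvement} does not depend on this lemma.
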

\begin{proof}
Fix $n\in \mathbb N$. Observe that the pointwise optimization in $s \in S$ in update \eqref{eq:pointwise_min} can be equivalently written as
\begin{equation}
\label{eq pointwise}
\pi^{n+1}(\cdot|s) =\argmax_{m \in \mathfrak C_\lambda}\left[\int_A A_{\pi^n}(s,a)(m(\mathrm{d}a)-\pi^n(\mathrm{d}a|s)) - \frac{1}{\tau}D_h(m|\pi^n(\cdot|s))\right].
\end{equation}
Using Lemma~\ref{lemma perf diff} (performance difference) we get for all $s\in S$ that 
\begin{equation}
\label{eq:sharper_proof_1}
\begin{split}
(V_{n+1} - V_n)(s) & = \frac{1}{1-\gamma} \int_S \int_A A_{\pi^n}(s,a)(\pi^{n+1}-\pi^n)(\mathrm{d}a|s) \, d^{\pi^{n+1}}_s (\mathrm{d}s')\\
& \geq \frac{1}{1-\gamma} \int_S F(s) \, d^{\pi^{n+1}}_s (\mathrm{d}s'),    
\end{split}
\end{equation}
where 
\begin{equation*}
F(s) := \int_A A_{\pi^n}(s,a)(\pi^{n+1}-\pi^n)(\mathrm{d}a|s) - \frac{1}{\tau} D_h(\pi^{n+1}|\pi^n)(s) \geq 0,
\end{equation*}
with the inequality coming from \eqref{eq pointwise}. From~\eqref{occupancy kernel} and~\eqref{eq:sharper_proof_1} we have  for all $s\in S$ that
\begin{equation*}
\!\!(V_{n+1} - V_n)(s) = \int_S F(s') P^0_{\pi^{n+1}}(\mathrm{d}s'|s) + \sum_{k=1}^\infty \int_S \gamma^k F(s') P^k_{\pi^{n+1}}(\mathrm{d}s'|s)	 
\geq \int_S F(s') \delta_s(\mathrm{d}s').
\end{equation*} 
This concludes the proof.
\end{proof}
\begin{proof}[Proof of Theorem \ref{thm:convergence_fr_steps}]
Let $s\in S$, $\pi^n \in \mathcal P_\lambda(A|S)$ and $G:\mathcal P_\lambda(A) \to \mathbb R$ be given by 
\begin{equation*}
G(m) = \tau \int_A A_{\pi^n}(s,a)(m(\mathrm{d}a)-\pi^n(\mathrm{d}a|s)).
\end{equation*}
It is linear and thus clearly convex in the flat geometry. By Lemma \ref{lem three point} and update~\eqref{eq pointwise} we have, for all $\pi \in \mathcal{P}_\lambda(A|S)$, $s\in S$ and $n\in \mathbb N$ that 
\begin{equation*}
\begin{split}
& \tau\int_A A_{\pi^n}(s,a)(\pi-\pi^n)(\mathrm{d}a|s) -  D_h(\pi,\pi^n)(s)\\
& \leq \tau\int_A A_{\pi^n}(s,a)(\pi^{n+1}-\pi^n)(\mathrm{d}a|s) - D_h(\pi,\pi^{n+1})(s) - D_h(\pi^{n+1},\pi^n)(s)\,.	
\end{split}
\end{equation*} 
Re-arranging this leads to 
\begin{equation*}
\label{eq:proof_of_convergence_after_3_point}
\begin{split}
& D_h(\pi,\pi^{n+1})(s) - D_h(\pi,\pi^n)(s)\\
& \leq -\tau \int_A A_{\pi^n}(s,a)(\pi-\pi^n)(\mathrm{d}a|s) + \tau \int_A A_{\pi^n}(s,a)(\pi^{n+1}-\pi^n)(\mathrm{d}a|s)  - D_h(\pi^{n+1},\pi^n)(s).
\end{split}
\end{equation*} 
From Lemma~\ref{lemma:pointwise_estimate} we thus have, for all $s\in S$, that 
\begin{equation*}
\label{eq:proof_of_convergence_after_3_mdp_step}
\begin{split}
\tau \int_A A_{\pi^n}(s,a)(\pi^{n+1} -\pi^n)(\mathrm{d}a|s) - D_h(\pi^{n+1},\pi^n)(s) \leq \tau (V_{n+1} - V_n)(s).
\end{split}
\end{equation*} 
Taking the sum from $n=0$ to $N-1$ we see (spotting the telescoping sums) that for all $s\in S$,
\begin{equation}
\label{eq:convergence_proof_1}
\begin{split}
& D_h(\pi,\pi^{N})(s) - D_h(\pi,\pi^0)(s)
\leq -\sum_{n=0}^{N-1}\tau \int_A A_{\pi^n}(s,a)(\pi-\pi^n)(\mathrm{d}a|s) + \tau (V_N - V^0)(s).
\end{split}
\end{equation} 
Notice that $(V_n-V^0)(s) \leq (V^\ast-V^0)(s)$ for all $N\in \mathbb N$. Let 
\begin{equation*}
y^n := \int_S D_h(\pi,\pi^n)(s) d^{\pi}_\rho(\mathrm{d}s)\,\,\,\text{and}\,\,\, \alpha := \int_S (V^\ast-V^0)(s)	d^{\pi}_\rho(\mathrm{d}s)
\end{equation*}
so that, after integrating~\eqref{eq:convergence_proof_1} over $d^{\pi}_\rho$ we have
\begin{equation}
\label{eq:convergence_proof_2}
y^N - y^0
\leq -\tau\sum_{n=0}^{N-1}\int_S \int_A A_{\pi^n}(s,a)(\pi-\pi^n)(\mathrm{d}a|s)d^{\pi}_\rho(\mathrm{d}s) + \tau\alpha\,.
\end{equation} 
Using the performance difference lemma (Lemma~\ref{lemma perf diff}), we get
\begin{equation*}
\label{eq:convergence_proof_3}
y^N - y^0
\leq -\sum_{n=0}^{N-1}\tau(1-\gamma)(V_{\pi} - V_{\pi^n})(\rho)  + \tau\alpha\,.
\end{equation*} 
Since $V_{\pi^n}(\rho) \leq V_{\pi^N}(\rho)$ for all $n = 0,1,\ldots,N$ we get that  
\begin{equation*}
y^N - y^0
\leq -N\tau(1-\gamma)(V_{\pi} - V_{\pi^N})(\rho) + \tau\alpha\,.	
\end{equation*}
Moreover, since $y^N \geq 0$, we get
\begin{equation*}
N\tau(1-\gamma)(V_{\pi}-V_{\pi^N})(\rho) 
\leq  \tau\alpha + y^0	
\end{equation*}
and so
\begin{equation*}
(V_{\pi} - V_{\pi^N})(\rho) \leq (1-\gamma)^{-1}\left(\alpha + \frac{1}{\tau} y^0\right)N^{-1}\,.
\end{equation*}
This completes the proof.
\end{proof}
\section{Proof of Theorem \ref{thm:sublinear_convergence_proj_NPG}}
The proof of Theorem \ref{thm:sublinear_convergence_proj_NPG} follows directly from the argument used in Theorem \ref{thm:convergence_fr_steps}, so it remains only to establish Proposition \ref{prop:variational-characterization-Proj--NPG}. We begin by proving an auxiliary lemma showing that the policy class $\Pi$ and the parameter space $\Theta$ are isometric under the mapping $\Theta \ni \theta \mapsto \pi_\theta \in \Pi$.
\begin{lemma}[The policy $\pi_\theta \in \Pi$ is an isometry]
\label{lemma:policy-isometry}
The map $\Theta \ni \hat{\theta} \mapsto \pi_{\hat\theta} \in \Pi$ is an isometry, i.e., $\pi_{\hat\theta}$ satisfies
\begin{equation*}
\left\|\frac{\mathrm{d}\pi_{\hat\theta}}{\mathrm{d}\lambda}\right\|_{{L^2_\lambda(A)}\times d_\rho^{\pi_\theta}} = \left\|\hat\theta\right\|_{G(\theta)}.
\end{equation*}
\end{lemma}
\begin{proof}
By the definition of the semi-norm $\|\cdot\|_{G(\theta)},$ we have
\begin{align*}
\left\|\hat\theta\right\|_{G(\theta)}^2 &= \left\langle \hat\theta, G(\theta)\hat\theta\right\rangle_{\mathbb H}= \left\langle \hat\theta, \left(\int_S\int_A \phi(s,a) \otimes \phi^*(s,a)\lambda(\mathrm{d}a)d_\rho^{\pi_\theta}(\mathrm{d}s)\right) \hat\theta\right\rangle_{\mathbb H}\\ 
&= \int_S\int_A \langle \hat\theta, \left(\phi(s,a) \otimes \phi^*(s,a)\right)\hat\theta\rangle_{\mathbb H}\lambda(\mathrm{d}a)d_\rho^{\pi_\theta}(\mathrm{d}s)\\ 
&= \int_S\int_A \langle \hat\theta, \phi(s,a)\langle \hat\theta, \phi(s,a)\rangle_{\mathbb{H}}\rangle_{\mathbb H}\lambda(\mathrm{d}a)d_\rho^{\pi_\theta}(\mathrm{d}s)\\ 
&= \int_S\int_A \left|\langle \hat\theta, \phi(s,a)\rangle_{\mathbb H}\right|^2\lambda(\mathrm{d}a)d_\rho^{\pi_\theta}(\mathrm{d}s)\\ 
&= \int_S\int_A \left|\frac{\mathrm{d}\pi_{\hat\theta}}{\mathrm{d}\lambda}(a|s)\right|^2\lambda(\mathrm{d}a)d_\rho^{\pi_\theta}(\mathrm{d}s) = \left\|\frac{\mathrm{d}\pi_{\hat\theta}}{\mathrm{d}\lambda}\right\|^2_{{L^2_\lambda(A)}\times d_\rho^{\pi_\theta}}.
\end{align*}
\end{proof}
\begin{proof}[Proof of Proposition \ref{prop:variational-characterization-Proj--NPG}]
Consider the quadratic error $L^{\pi_{\theta}}: \mathbb H \rightarrow \mathbb R$ defined by
\begin{equation*}
L^{\pi_\theta}(w) = \frac{1}{2}\int_S \int_A \left|A_{\pi_\theta}(s,a) - \langle w,  \phi(s,a)\rangle_{\mathbb H}\right|^2\lambda(\mathrm{d}a)d_{\rho}^{\pi_{\theta}}(\mathrm{d}s).
\end{equation*}
The first-order optimality condition $\nabla_w L^{\pi_\theta}(w) = 0$ implies
\begin{equation*}
\int_S \int_A A_{\pi_\theta}(s,a) \phi(s,a)\lambda(\mathrm{d}a)d_{\rho}^{\pi_{\theta}}(\mathrm{d}s) = \int_S \int_A \langle w,  \phi(s,a)\rangle_{\mathbb H} \phi(s,a)\lambda(\mathrm{d}a)d_{\rho}^{\pi_{\theta}}(\mathrm{d}s).
\end{equation*}
By the policy gradient theorem (Theorem \ref{thm:policy-grad-theorem}) and the chain rule, we have
\begin{multline*}
\begin{aligned}
\nabla_{\theta} V_{\pi_{\theta}}(\rho) = \frac{1}{1-\gamma}\int_{S}\int_{A} A_{\pi_{\theta}}(s,a)\nabla_\theta\pi_{\theta}(\mathrm{d}a|s)d_{\rho}^{\pi_{\theta}}(\mathrm{d}s) =\frac{1}{1-\gamma}\int_{S}\int_{A} A_{\pi_{\theta}}(s,a)\phi(s,a)\lambda(\mathrm{d}a)d_{\rho}^{\pi_{\theta}}(\mathrm{d}s).
\end{aligned}
\end{multline*}
Using the definition of the tensor product action, $\left(\phi(s,a) \otimes \phi^*(s,a)\right)w = \phi(s,a)\langle w, \phi(s,a)\rangle_{\mathbb{H}} = \langle w, \phi(s,a)\rangle_{\mathbb{H}}\phi(s,a),$ we conclude that the minimizer $w^*(\theta)$ satisfies
\begin{equation*}
G(\theta)w^*(\theta) = (1-\gamma)\nabla_\theta V_{\pi_{\theta}}(\rho).
\end{equation*}
The Moore--Penrose pseudo-inverse $G^\dagger$ then gives the minimal-norm solution
\begin{equation*}
w^*(\theta) = (1-\gamma)G(\theta)^\dagger\nabla_\theta V_{\pi_{\theta}}(\rho).
\end{equation*}
Setting $\tau = \eta(1-\gamma)^{-1}$ transforms \eqref{eq:proj-NPG-parameters} into 
\begin{equation*}
\theta^{n+1} = \Gamma_{\Theta}^{G(\theta^n)}\left(\theta^n + \tau w^*(\theta^n)\right),\ \theta^0 \in \Theta.
\end{equation*}
Define $\Tilde{\theta}^n = \theta^n + \tau w^*(\theta^n)$. By the definition of the projection map $\Gamma_{\Theta}^{G(\theta)},$
\begin{equation*}
\|\theta^{n+1}-\Tilde{\theta}^n\|_{G(\theta^n)} = \min_{\bar{\theta} \in \Theta}\|\bar{\theta} - \Tilde{\theta}^n\|_{G(\theta^n)}.
\end{equation*}
For policies, the projection map $\Gamma_\Pi:\mathcal{P}_{\lambda}(A|S) \to \Pi$ satisfies 
\begin{equation*}
\Gamma_\Pi\left(\frac{\mathrm{d}\pi_{\Tilde{\theta}^n}}{\mathrm{d}\lambda}\right) = \argmin_{f \in \Pi}\left\|f-\frac{\mathrm{d}\pi_{\Tilde{\theta}^n}}{\mathrm{d}\lambda}\right\|_{{L_\lambda^2 (A)} \otimes d_\rho^{\pi_{\theta^n}}}.
\end{equation*}
Recalling~\eqref{def:Pi_class}, which is the definition of $\Pi$, there exists $\hat\theta \in \Theta$ such that
\begin{align*}
\min_{f \in \Pi}\left\|f-\frac{\mathrm{d}\pi_{\Tilde{\theta}^n}}{\mathrm{d}\lambda}\right\|_{{L_\lambda^2 (A)} \times d_\rho^{\pi_{\theta^n}}} 
& = \left\|\frac{\mathrm{d}\pi_{\hat{\theta}}}{\mathrm{d}\lambda}-\frac{\mathrm{d}\pi_{\Tilde{\theta}^n}}{\mathrm{d}\lambda}\right\|_{{L_\lambda^2 (A)} \times d_\rho^{\pi_{\theta^n}}} 
= \left\|\frac{\mathrm{d}\pi_{\hat{\theta}-\Tilde{\theta}^n}}{\mathrm{d}\lambda}\right\|_{{L_\lambda^2 (A)} \times d_\rho^{\pi_{\theta^n}}} =\left\|\hat{\theta}-\Tilde{\theta}^n\right\|_{G(\theta^n)}\\ 
&\geq \min_{\theta \in \Theta}\|\theta - \Tilde{\theta}^n\|_{G(\theta^n)} = \|\theta^{n+1}-\Tilde{\theta}^n\|_{G(\theta^n)} = \left\|\frac{\mathrm{d}\pi_{\theta^{n+1}-\Tilde{\theta}^n}}{\mathrm{d}\lambda}\right\|_{{L_\lambda^2 (A)} \times d_\rho^{\pi_{\theta^n}}}\\
&= \left\|\frac{\mathrm{d}\pi_{\theta^{n+1}}}{\mathrm{d}\lambda}-\frac{\mathrm{d}\pi_{\tilde \theta^n}}{\mathrm{d}\lambda}\right\|_{{L_\lambda^2 (A)} \times d_\rho^{\pi_{\theta^n}}} \geq \min_{f \in \Pi}\left\|f-\frac{\mathrm{d}\pi_{\tilde \theta^n}}{\mathrm{d}\lambda}\right\|_{{L_\lambda^2 (A)} \times d_\rho^{\pi_{\theta^n}}}.
\end{align*}
Thus,
\begin{equation*}
\left\|\frac{\mathrm{d}\pi_{\theta^{n+1}}}{\mathrm{d}\lambda}-\frac{\mathrm{d}\pi_{\Tilde{\theta}^n}}{\mathrm{d}\lambda}\right\|_{{L_\lambda^2 (A)} \times d_\rho^{\pi_{\theta^n}}} = \min_{f \in \Pi}\left\|f-\frac{\mathrm{d}\pi_{\Tilde{\theta}^n}}{\mathrm{d}\lambda}\right\|_{{L_\lambda^2 (A)} \times d_\rho^{\pi_{\theta^n}}},
\end{equation*}
and so
\begin{equation*}
\frac{\mathrm{d}\pi_{\theta^{n+1}}}{\mathrm{d}\lambda} = \Gamma_\Pi\left(\frac{\mathrm{d}\pi_{\Tilde{\theta}^n}}{\mathrm{d}\lambda}\right) = \Gamma_{\Pi}\left(\frac{\mathrm{d}\pi_{\theta^n}}{\mathrm{d}\lambda} + \tau  \langle w^*(\theta^n),\phi\rangle_{\mathbb H}\right).
\end{equation*}
If the approximation of $A_{\pi_{\theta^n}}$ by the features $\phi$ is exact (i.e., $L^{\pi_{\theta^n}}(w^*(\theta^n)) = 0$), then $A_{\pi_{\theta^n}}(s,a) = \langle w^*(\theta^n),\phi(s,a)\rangle_{\mathbb H},$ and therefore
\begin{equation*}
\frac{\mathrm{d}\pi_{\theta^{n+1}}}{\mathrm{d}\lambda} =\Gamma_{\Pi}\left(\frac{\mathrm{d}\pi_{\theta^n}}{\mathrm{d}\lambda} + \tau A_{\pi_{\theta^n}}\right).
\end{equation*}
From~\eqref{def:Pi_class} the definition of the policy class we know that it is equivalently
\begin{align*}
\Pi = \Big\{&\pi_\theta \in \mathcal{P}_\lambda(A|S)|\pi_\theta(\mathrm{d}a|s) = \langle\theta, \phi(s,a)\rangle_{\mathbb{H}}\lambda(\mathrm{d}a)\,\,\text{for some}\,\,\theta \in \Theta\\ 
&\text{ such that } \langle \theta,\phi(s,a) \rangle_{\mathbb H} \geq 0 \ \lambda\text{-a.e.},\int_A \langle \theta,\phi(s,a) \rangle_{\mathbb H} \lambda(\mathrm{d}a) = 1, \text{ for all } s\in S \Big\}.
\end{align*}
Finally, by Lemma \ref{lemma:well-posedness-FR-PPO}, 
\begin{equation*}
\pi_{\theta^{n+1}}= \argmax_{\pi \in \Pi}\left[\left\langle \frac{\mathrm{d}\pi}{\mathrm{d}\pi_{\theta^{n}}}A_{\pi_{\theta^n}}, \pi_{\theta^n} \right\rangle_{d^{\pi_{\theta^n}}_{\rho}} - \frac{1}{8\tau}\int_S \operatorname{FR^2}(\pi(\cdot|s)^2,\pi_{\theta^n}(\cdot|s)^2)d_\rho^{\pi_{\theta^n}}(\mathrm{d}s)\right].
\end{equation*}
\end{proof}
\section{Proof of Theorem \ref{thm:sublinear_convergence_approx_proj_NPG}}
Before proving Theorem \ref{thm:sublinear_convergence_approx_proj_NPG}, we observe that Proposition \ref{prop:variational-characterization-AProj--NPG} follows directly from the proof of Proposition \ref{prop:variational-characterization-Proj--NPG} after substituting $A_{\pi_{\theta^n}}$ with $\left\langle \hat{w}(\theta^n), \phi \right\rangle_{\mathbb H}$.
\begin{proof}[Proof of Theorem \ref{thm:sublinear_convergence_approx_proj_NPG}]
Recall that $\pi \in \mathcal P_\lambda(A|S)$ is a fixed policy.
We follow the argument used in Theorem \ref{thm:convergence_fr_steps}. The maximum in \eqref{eq:approx-max_parametrized} can be achieved by the following pointwise optimization in $s \in S$:
\begin{equation}
\label{eq:approx-pointwise_max_parametrized}
\pi_{\theta^{n+1}}(\cdot|s)= \argmax_{m \in \mathfrak C_\lambda}\left[\int_A \left\langle \hat{w}(\theta^n), \phi(s,a) \right\rangle_{\mathbb H}(m(\mathrm{d}a)-\pi_{\theta^n}(\mathrm{d}a|s)) - \frac{1}{8\tau}\operatorname{FR^2}(m^2,\pi_{\theta^n}(\cdot|s)^2)\right].
\end{equation}
Since
\begin{equation*}
\frac{1}{8}\operatorname{FR}^2(\pi'(\cdot|s)^2, \pi(\cdot|s)^2) = D_h(\pi'(\cdot|s),\pi(\cdot|s)),
\end{equation*}
for $h = \frac{1}{2}\chi^2(\cdot|\lambda)$, we can apply Lemma \ref{lem three point} to the update in \eqref{eq:approx-pointwise_max_parametrized} with
\begin{equation*}
G(m) = \tau \int_A \left\langle \hat{w}(\theta^n), \phi(s,a) \right\rangle_{\mathbb H}(m(\mathrm{d}a)-\pi_{\theta^n}(\mathrm{d}a|s)).
\end{equation*}
Proceeding as in the proof of Theorem \ref{thm:convergence_fr_steps}, but with $\left\langle \hat{w}(\theta^n), \phi \right\rangle_{\mathbb H}$ in place of $A_{\pi_{\theta^n}}$, we obtain \eqref{eq:convergence_proof_2} in the form
\begin{align*}
&\int_S D_h(\pi,\pi_{\theta^N})(s) d^{\pi}_\rho(\mathrm{d}s) - \int_S D_h(\pi,\pi_{\theta^0})(s) d^{\pi}_\rho(\mathrm{d}s)\\
&\leq -\tau\sum_{n=0}^{N-1}\int_S \int_A \left\langle \hat{w}(\theta^n), \phi(s,a) \right\rangle_{\mathbb H}(\pi-\pi_{\theta^n})(\mathrm{d}a|s)d^{\pi}_\rho(\mathrm{d}s)\\ 
& + \tau\int_S (V^\ast-V^0)(s)	d^{\pi}_\rho(\mathrm{d}s).
\end{align*} 
Define
\begin{equation*}
\mathcal{E}_n \coloneqq \int_S \int_A \left(A_{\pi_{\theta^n}}(s,a)-\left\langle \hat{w}(\theta^n), \phi(s,a) \right\rangle_{\mathbb H}\right)(\pi-\pi_{\theta^n})(\mathrm{d}a|s)d^{\pi}_\rho(\mathrm{d}s).
\end{equation*}
Using the performance difference lemma (Lemma \ref{lemma perf diff}), we obtain
\begin{align*}
&\int_S D_h(\pi,\pi_{\theta^N})(s) d^{\pi}_\rho(\mathrm{d}s) - \int_S D_h(\pi,\pi_{\theta^0})(s) d^{\pi}_\rho(\mathrm{d}s)\\
&\leq -\tau\sum_{n=0}^{N-1}\int_S \int_A \left(\left\langle \hat{w}(\theta^n), \phi(s,a) \right\rangle_{\mathbb H} - A_{\pi_{\theta^n}}(s,a)\right)(\pi-\pi_{\theta^n})(\mathrm{d}a|s)d^{\pi}_\rho(\mathrm{d}s)\\ 
&-\tau\sum_{n=0}^{N-1}\int_S \int_A A_{\pi_{\theta^n}}(s,a)(\pi-\pi_{\theta^n})(\mathrm{d}a|s)d^{\pi}_\rho(\mathrm{d}s) + \tau\int_S (V^\ast-V^0)(s)	d^{\pi}_\rho(\mathrm{d}s)\\
&=\tau\sum_{n=0}^{N-1}\mathcal{E}_n -\sum_{n=0}^{N-1}\tau(1-\gamma)(V_{\pi} - V_n)(\rho) + \tau\int_S (V^\ast-V^0)(s)d^{\pi}_\rho(\mathrm{d}s).
\end{align*} 
Since $\int_S D_h(\pi,\pi_{\theta^N})(s) d^{\pi}_\rho(\mathrm{d}s) \geq 0$, it follows that
\begin{align*}
& \min_{n<N} (V_{\pi} - V_n)(\rho)\\ 
&\leq \frac{1}{\tau(1-\gamma)N}\left(\int_S D_h(\pi,\pi_{\theta^0})(s) d^{\pi}_\rho(\mathrm{d}s)+\tau\int_S (V^\ast-V^0)(s)d^{\pi}_\rho(\mathrm{d}s)\right) + \frac{1}{(1-\gamma)N}\sum_{n=0}^{N-1}\mathcal{E}_n\\
&=\frac{1}{8\tau(1-\gamma)N}\left(\int_S \operatorname{FR}^2(\pi(\cdot|s)^2, \pi_{\theta^0}(\cdot|s)^2) d^{\pi}_\rho(\mathrm{d}s)+8\tau\int_S (V^\ast-V^0)(s)d^{\pi}_\rho(\mathrm{d}s)\right) + \frac{1}{(1-\gamma)N}\sum_{n=0}^{N-1}\mathcal{E}_n.
\end{align*}
\end{proof}
\section{TV--PPO}
We recall that the maximum in the TV-PPO scheme \eqref{TV PPO} can be attained by the following pointwise optimization in $s \in S$:
\label{appendix:TV PPO}
\begin{equation*}
\pi^{n+1}(\cdot|s)= \underset{m \in \mathcal P(A)}{\operatorname{argmax}}\left[\int_A A_{\pi^n}(s,a)\frac{\mathrm{d}m}{\mathrm{d}\pi^n}(a|s)\pi^n(\mathrm{d}a|s) - \frac{1}{\tau}\operatorname{TV}(m, \pi^n(\cdot|s))\right],
\end{equation*}
where $\tau > 0$.
\begin{lemma}[Policy improvement for TV--PPO]
Let $V_n := V_{\pi^n}$ for $n\in \mathbb N$ and $(\pi^n)_{n\in \mathbb N_0} \subset \mathcal{P}(A)$ be given by \eqref{TV PPO}. If $\tau > 0$, then for any $\rho \in \mathcal P(S)$ we have 
\[
V_{n+1}(\rho) \geq  V_n(\rho), \text{ for all } n > 0.
\]
\end{lemma}
\begin{proof}
From the update \eqref{TV PPO} we have, for all $\pi(\cdot|s) \in \mathcal{P}(A)$ and $s\in S$ that
\begin{align*}
&\int_A A_{\pi^n}(s,a)\frac{\mathrm{d}\pi^{n+1}}{\mathrm{d}\pi^n}(a|s)\pi^n(\mathrm{d}a|s)- \frac{1}{\tau} \operatorname{TV}(\pi^{n+1}(\cdot|s),\pi^n(\cdot|s))\\ 
&\geq \int_A A_{\pi^n}(s,a)\frac{\mathrm{d}\pi}{\mathrm{d}\pi^n}(a|s)\pi^n(\mathrm{d}a|s)- \frac{1}{\tau} \operatorname{TV}(\pi(\cdot|s),\pi^n(\cdot|s)).
\end{align*} 
This with $\pi = \pi^n$ gives that for all $s\in S$ we have 
\begin{equation}
\label{eq:imp_proof_tv_ppo2}
\int_A A_{\pi^n}(s,a)\frac{\mathrm{d}\pi^{n+1}}{\mathrm{d}\pi^n}(a|s)\pi^n(\mathrm{d}a|s)- \frac{1}{\tau} \operatorname{TV}(\pi^{n+1}(\cdot|s),\pi^n(\cdot|s)) \geq 0.
\end{equation}
From this, the fact that $\operatorname{TV}(\pi^{n+1}(\cdot|s),\pi^n(\cdot|s))\geq 0$ and Lemma~\ref{lemma perf diff} we get
\[
\begin{split}
&V_{n+1}(\rho) - V_{n}(\rho)\\ 
&= \frac{1}{1-\gamma} \int_S \int_A A_{\pi^n}(s,a)(\pi^{n+1}-\pi^n)(\mathrm{d}a|s)d_{\rho}^{\pi^{n+1}}(\mathrm{d}s)\\
& \geq \frac{1}{1-\gamma} \int_S \bigg( \int_A A_{\pi^n}(s,a)(\pi^{n+1}-\pi^n)(\mathrm{d}a|s) - \frac{1}{\tau}\operatorname{TV}(\pi^{n+1}(\cdot|s),\pi^n(\cdot|s))\bigg)d_{\rho}^{\pi^{n+1}}(\mathrm{d}s)\\
&\geq 0\,.
\end{split}
\]
Thus $V_{n+1}(\rho) \geq V_{n}(\rho)$.
\end{proof}
\begin{lemma}[First-order condition for TV--PPO]
\label{first order tv ppo}
For any $s \in S,$ the maximizer $\pi^{n+1}(\cdot|s)$ of \eqref{TV PPO} satisfies the first-order condition
\begin{equation*}
A_{\pi^n}(s,a) - \frac{1}{2\tau}\operatorname{sign}\left(\frac{\mathrm{d}\pi^{n+1}}{\mathrm{d}\pi^n}(a|s) - 1\right) = C_n \in \mathbb R, \quad \pi^{n+1}(\cdot|s)\text{-a.e.}
\end{equation*}
where $C_n \coloneqq \int_A\left(A_{\pi^n}(s,a) - \frac{1}{2\tau}\operatorname{sign}\left(\frac{\mathrm{d}\pi^{n+1}}{\mathrm{d}\pi^n}(a|s) - 1\right)\right)\pi^{n+1}(\mathrm{d}a|s)$ is a normalizing constant, for each $n \in \mathbb N$.
\end{lemma}
\begin{proof}
For simplicity, we restrict the optimization in \eqref{TV PPO} to probability measures that are absolutely continuous with respect to Lebesgue measure, denoted by $\mathcal{P}_\text{Leb}(A)$. Let $s \in S$. 
Fix $\pi^{n}(a|s)$. For any $\pi' \in \mathcal{P}_\text{Leb}(A),$ any $\pi \neq \pi^{n}(\cdot|s)$ a.e., and any $\varepsilon \in (0,1),$ \cite[Lemma 2.1]{tsybakov2008introduction} gives
\begin{align*}
&\lim_{\varepsilon \to 0}\frac{1}{\varepsilon}\left(\operatorname{TV}(\pi + \varepsilon(\pi'-\pi), \pi^{n}(\cdot|s)) - \operatorname{TV}(\pi, \pi^{n}(\cdot|s))\right)\\&=\lim_{\varepsilon \to 0}\frac{1}{2\varepsilon}\int_{\mathbb R^d}\left(\left|\pi(a)-\pi^{n}(a|s) + \varepsilon(\pi'(a)-\pi(a))\right| - \left|\pi(a) - \pi^{n}(a|s)\right|\right)\mathrm{d}a.
\end{align*}
Since $|\cdot|$ is differentiable at every $v\neq 0$ with derivative $\operatorname{sign}(v),$ we obtain by dominated convergence 
\begin{align*}
&\lim_{\varepsilon \to 0}\frac{1}{\varepsilon}\left(\operatorname{TV}(\pi + \varepsilon(\pi'-\pi), \pi^{n}(\cdot|s)) - \operatorname{TV}(\pi, \pi^{n}(\cdot|s))\right)\\
&=\frac{1}{2}\int_{\mathbb R^d} \operatorname{sign}\left(\pi(a)-\pi^{n}(\cdot|s)(a)\right)(\pi'-\pi)(\mathrm{d}a).
\end{align*}
To justify dominated convergence, note that for every $a \in A,$ the reverse triangle inequality gives
\begin{align*}
\left|\frac{\left|\pi(a)-\pi^{n}(a|s) + \varepsilon(\pi'(a)-\pi(a))\right| - \left|\pi(a) -\pi^{n}(a|s)\right|}{\varepsilon}\right| \leq |\pi'(a)-\pi(a)| \in L^1(\mathbb R^d).
\end{align*}
Observe that for any $m \in \mathcal{P}(A),$
\begin{equation*}
\int_A A_{\pi^n}(s,a)\frac{\mathrm{d}m}{\mathrm{d}\pi^n}(a|s)\pi^n(\mathrm{d}a|s) = \int_A A_{\pi^n}(s,a)\left(m(\mathrm{d}a)- \pi^n(\mathrm{d}a|s)\right).
\end{equation*}
Define
\begin{equation*}
G(m) \coloneqq \int_A A_{\pi^n}(s,a)\left(m(\mathrm{d}a)- \pi^n(\mathrm{d}a|s)\right) - \frac{1}{\tau}\operatorname{TV}(m, \pi^n(\cdot|s)).
\end{equation*}
Since $\pi^{n+1}(\cdot|s)$ is the unique maximizer of $G$ by \eqref{TV PPO}, it follows that for any $\varepsilon \in (0,1),$
\begin{align*}
0&\geq \frac{1}{\varepsilon}\left(G(\pi^{n+1}(\cdot|s) +\varepsilon(m-\pi^{n+1}(\cdot|s)))-G(\pi^{n+1}(\cdot|s))\right)\\
&=\int_A A_{\pi^n}(s,a)\left(m(\mathrm{d}a)- \pi^{n+1}(\mathrm{d}a|s)\right)\\ 
&- \frac{1}{\tau \varepsilon}\left(\operatorname{TV}(\pi^{n+1}(\cdot|s) +\varepsilon(m-\pi^{n+1}(\cdot|s)), \pi^{n}(\cdot|s)) - \operatorname{TV}(\pi^{n+1}(\cdot|s), \pi^{n}(\cdot|s))\right).
\end{align*}
Taking the limit $\varepsilon \to 0$ yields
\begin{align*}
0&\geq \lim_{\varepsilon \to 0}\frac{1}{\varepsilon}\left(G(\pi^{n+1}(\cdot|s) +\varepsilon(m-\pi^{n+1}(\cdot|s)))-G(\pi^{n+1}(\cdot|s))\right)\\
&=\int_A \left(A_{\pi^n}(s,a)-\frac{1}{2\tau}\operatorname{sign}\left(\pi^{n+1}(a|s)-\pi^{n}(a|s)\right)\right)\left(m(\mathrm{d}a)- \pi^{n+1}(\mathrm{d}a|s)\right),
\end{align*}
for all $m \in \mathcal{P}(A)$.
The result then follows from \cite[Lemma 33]{jabir2021meanfieldneuralodesrelaxed}.
\end{proof}  
From Lemma~\ref{first order tv ppo}, we observe that a fundamental limitation of scheme \eqref{TV PPO} is the need to know in advance whether the ratio $\frac{\mathrm d\pi^{n+1}}{\mathrm d\pi^n}$ is above or below $1$ in order to compute $\pi^{n+1}$. Since this information is not available a priori, the scheme is implicit and thus impractical.

\section{FR--PPO algorithm and experiments}
\label{sec:frpoo_algorithm}

The only change needed to implement FR--PPO versus a standard PPO algorithm is highlighted in {\color{blue}blue} in Algorithm~\ref{algo:frppo}.
The pseudocode does not include a number of implementation details present in~\cite{schulman2017} that are known to be important~\cite{huanga} for achieving good empirical performance.
We chose not to include a discussion of all those details as they will just add clutter here while adding little value since to implement FR--PPO one can just change the loss and keep everything the same as any reference PPO implementation of their choice.
We remind the reader that we modified the SB3 implementation of PPO to obtain our FR--PPO code and the code has been made available.

\begin{algorithm}
\caption{FR-PPO}
\label{algo:frppo}
\begin{algorithmic}[1]

\State Set discount factor $\gamma$, GAE parameter $\lambda$, rollout length $N$, number of opt. epochs $K$, minibatch size $M$
\State Initialize actor network $\pi_{\theta}$ and critic network $V_{\varphi}$ 
\State Initialize optimizer (e.g., Adam)

\For{iteration $= 1, 2, \dots$}
\State Rollout (Algorithm~\ref{algo:rollout}) with policy  $\pi_\theta$ to get $\mathcal{D}$
\State GAE($\gamma, \lambda$) (Algorithm~\ref{algo:gae}) to get $((\hat A_t)_t, (R_t)_t)$

\For{epoch $k = 1$ to $K$}
\State Shuffle $\mathcal{D}$ and partition into minibatches size $M$
\For{each minibatch $B \subset \mathcal{D}$}
\State Compute $r_i(\theta) \leftarrow \frac{\pi_\theta}{\pi_{\theta_\text{old}}}(a_i|s_i)$ for $(s_i,a_i) \in B$
{\color{blue}\State Compute $L^{\text{pol}}(\theta) = \tfrac{1}{|B|}\sum_{(s_i,a_i) \in B} \big[ r_i(\theta)\hat{A}_i - (2\tau)^{-1} |r_i(\theta) - 1|^2\pi_{\theta_\text{old}}(a_i|s_i)\big]$}
\State Compute $L^{\text{val}}(\phi) = \frac{1}{|B|}\sum_{(s_i,R_i) \in B}(V_\varphi(s_i) - R_i)^2$            
\State Set $L(\theta,\varphi) = -L^{\text{pol}}(\theta) + L^{\text{val}}(\phi)$  and perform optimizer step to update $(\theta,\varphi)$ 
\EndFor
\EndFor
\EndFor
\end{algorithmic}
\end{algorithm}

\begin{algorithm}
\caption{Rollout}
\label{algo:rollout}
\begin{algorithmic}[1]

\State Input: Policy $\pi_\theta$, critic $V_\varphi$, environment, number of rollout steps $N$
\State Initialize buffer $\mathcal{D}$
\For{step $t = 1$ to $N$}
\State Sample $a_t \sim \pi_{\theta}(\cdot|s_t)$.
\State Use $a_t$ in the environment, get $(r_t, s_{t+1})$, append $(s_t, a_t, r_t, s_{t+1}, \ln \pi_{\theta_\text{old}}(a_t|s_t), V_{\varphi}(s_t))$ to $\mathcal{D}$
\EndFor
\State Output: $\mathcal D$
\end{algorithmic}
\end{algorithm}

\begin{algorithm}
\caption{GAE~\cite{schulman2015high}}
\label{algo:gae}
\begin{algorithmic}[1]

\State Input: Discount factor $\gamma$, GAE parameter $\lambda$, rollout buffer $\mathcal D$ of length $N$    
\State Compute advantages and returns using GAE($\gamma, \lambda$) for all $t \in \{1, \ldots, N\}$:
\State \quad $\delta_t = r_t + \gamma V_{\varphi}(s_{t+1}) - V_{\varphi}(s_t)$
\State \quad $\hat{A}_t = \sum_{l=0}^{N-t-1} (\gamma \lambda)^l \delta_{t+l}$, \quad $R_t = \hat{A}_t + V_{\varphi}(s_t)$
\State Output: $((\hat A_t)_t, (R_t)_t)$
\end{algorithmic}
\end{algorithm}

Providing a method with stronger theoretical guarantees which performs comparably to PPO is the ``raison d'\^etre'' of this paper.
Thus we only compare the performance of FR--PPO with PPO.
We believe that this is justified as~\cite{schulman2017} establishes that PPO compares favourably to many state-of-the art RL algorithms.

\subsection{Comparison between PPO and FR--PPO on Atari discrete action environments}

The results are provided in Figure~\ref{fig:atari_ppo_frppo}. The policy and value network is the default provided by SB3 with policy and value networks sharing convolutional neural network (CNN) layers as feature extractors. Let us remind the reader that this and other details follow~\cite{huanga}. Note in particular that the recommended value for PPO clipping parameter $\varepsilon$ is $0.1$. The results using the alternative value of $0.2$ are plotted to compare the sensitivity of the outcome to this metaparameter and also to contrast it to the sensitivity of FR--PPO to the penalty parameter $\tau$. 

\begin{figure}
\begin{center}
\includegraphics[width=0.95\textwidth]{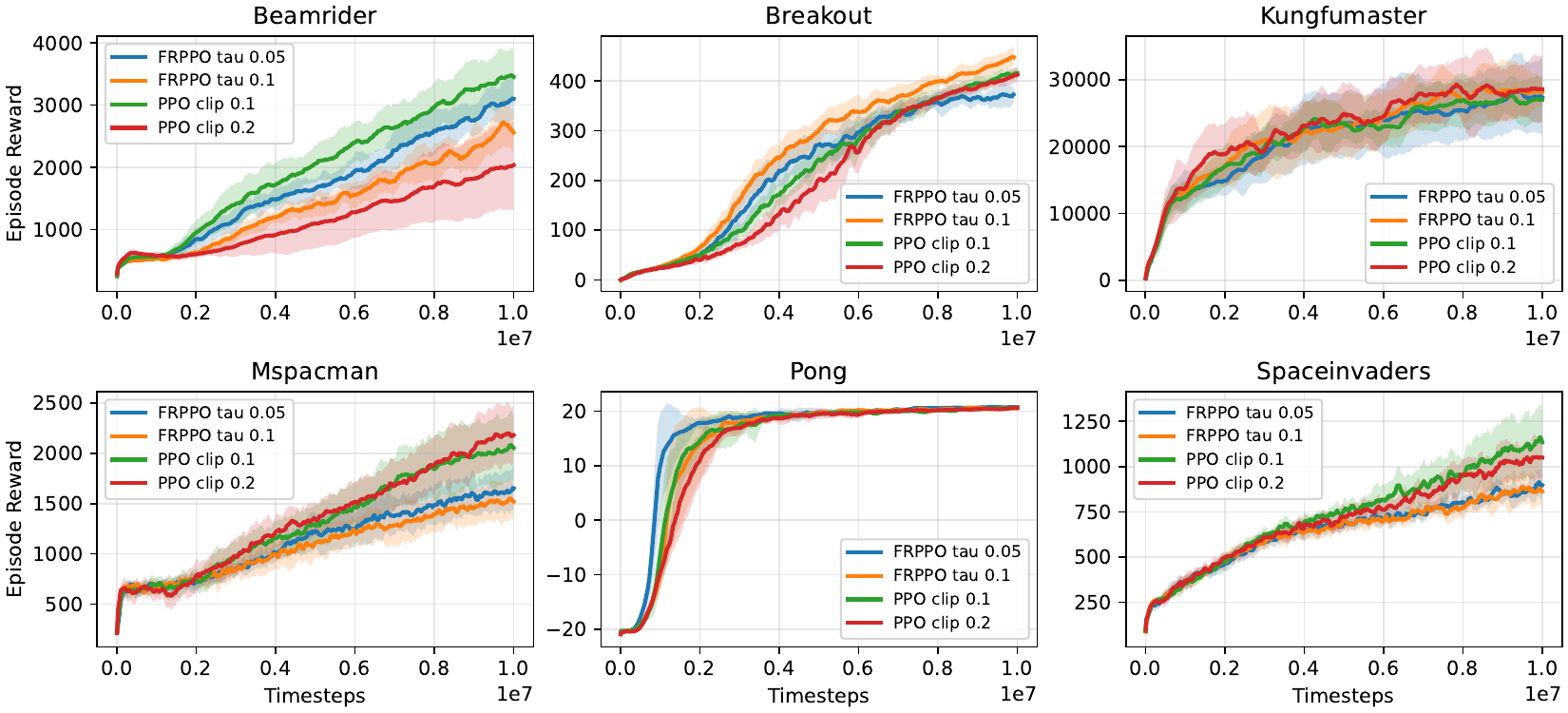}
\end{center}
\caption{Training curve for Atari environments with various clipping / penalty parameters.}
\label{fig:atari_ppo_frppo}
\end{figure}

Following training, we have run evaluation runs as follows.
On each environment 12 training runs with different random seeds.
For each trained policy we then run the environment with a random initialization 40 times.
We thus have an average reward for each of the 12 trained policies.
The minimum and maximum average reward across all 12 policies on a given environment was used to produce the normalized score and those recorded in Table~\ref{tab:atari_evaluation_results}.

\begin{table}
\centering
\begin{tabular}{lcccccc}
\hline
\textbf{Environment} & \multicolumn{3}{c}{\textbf{PPO}} & \multicolumn{3}{c}{\textbf{FR--PPO}} \\
\cmidrule(lr){2-4} \cmidrule(lr){5-7}
& $\varepsilon=0.1$ & $\varepsilon=0.2$ & Sum & $\tau=0.1$ & $\tau=0.05$ & Sum \\
\hline
BeamRiderNoFrameskip-v4     & {\em 0.65} & 0.24 & 0.89 & 0.44 & 0.52 & \textbf{0.96} \\
BreakoutNoFrameskip-v4      & 0.35 & 0.28 & \textbf{0.63} & {\em 0.39} & 0.21 & 0.60 \\
KungFuMasterNoFrameskip-v4  & 0.47 & {\em 0.48} & \textbf{0.95} & 0.46 & 0.31 & 0.77 \\
MsPacmanNoFrameskip-v4      & 0.44 & {\em 0.64} & \textbf{1.08} & 0.35 & 0.35 & 0.70 \\
PongNoFrameskip-v4          & 0.89 & 0.75 & 1.64 & 0.85 & {\em 0.90} & \textbf{1.75} \\
SpaceInvadersNoFrameskip-v4 & 0.28 & 0.35 & 0.63 & 0.28 & {\em 0.54} & \textbf{0.82} \\
\hline
\textbf{Total wins} & \multicolumn{2}{c}{Max wins {\em 3}} & Sum wins \textbf{3} & \multicolumn{2}{c}{Max wins {\em 3}} & Sum wins \textbf{3}\\
\hline
\end{tabular}%
\caption{Average normalized rewards on Atari environments with summed scores.}
\label{tab:atari_evaluation_results}
\end{table}

\subsection{Comparison between PPO and FR--PPO on Mujoco continuous action environments}

\begin{table}
\centering
\begin{tabular}{lcccccc}
\hline
\textbf{Environment} & \multicolumn{3}{c}{\textbf{PPO}} & \multicolumn{3}{c}{\textbf{FR--PPO}}  \\
\cmidrule(lr){2-4} \cmidrule(lr){5-7}
& $\varepsilon=0.1$ & $\varepsilon=0.2$ & Sum & $\tau=0.1$ & $\tau=0.05$ & Sum \\
\hline
HalfCheetah-v5      & 0.48 & 0.49 & 0.97 & 0.63 & {\em 0.67} & \textbf{1.30} \\
Hopper-v5           & {\em 0.73} & 0.62 & \textbf{1.35} & 0.43 & 0.40 & 0.83 \\
Reacher-v5          & 0.54 & {\em 0.72} & 1.26 & 0.68 & 0.70 & \textbf{1.38} \\
Swimmer-v5          & 0.30 & 0.27 & 0.57 & 0.15 & {\em 0.51} & \textbf{0.66} \\
Walker2d-v5         & {\em 0.71} & 0.55 & \textbf{1.26} & 0.10 & 0.11 & 0.21 \\
\hline
\textbf{Total wins} & \multicolumn{2}{c}{Max wins {\em 3}} & Sum wins \textbf{2} & \multicolumn{2}{c}{Max wins {\em 2}} & Sum wins \textbf{3}\\
\hline
\end{tabular}%
\caption{Average normalized rewards on Mujoco environments with summed scores.}
\label{tab:mujoco_evaluation_results}
\end{table}

The results are provided in Figure~\ref{fig:mujoco_ppo_frppo}. Here the recommended value for PPO clipping parameter $\varepsilon$ is $0.2$, see~\cite{huanga}. In this case the alternative value is $0.1$. These are plotted to compare the sensitivity of the outcome to this metaparameter and also to contrast it to the sensitivity of FR--PPO to the penalty parameter $\tau$. 

\begin{figure}
\begin{center}
\includegraphics[width=0.95\textwidth]{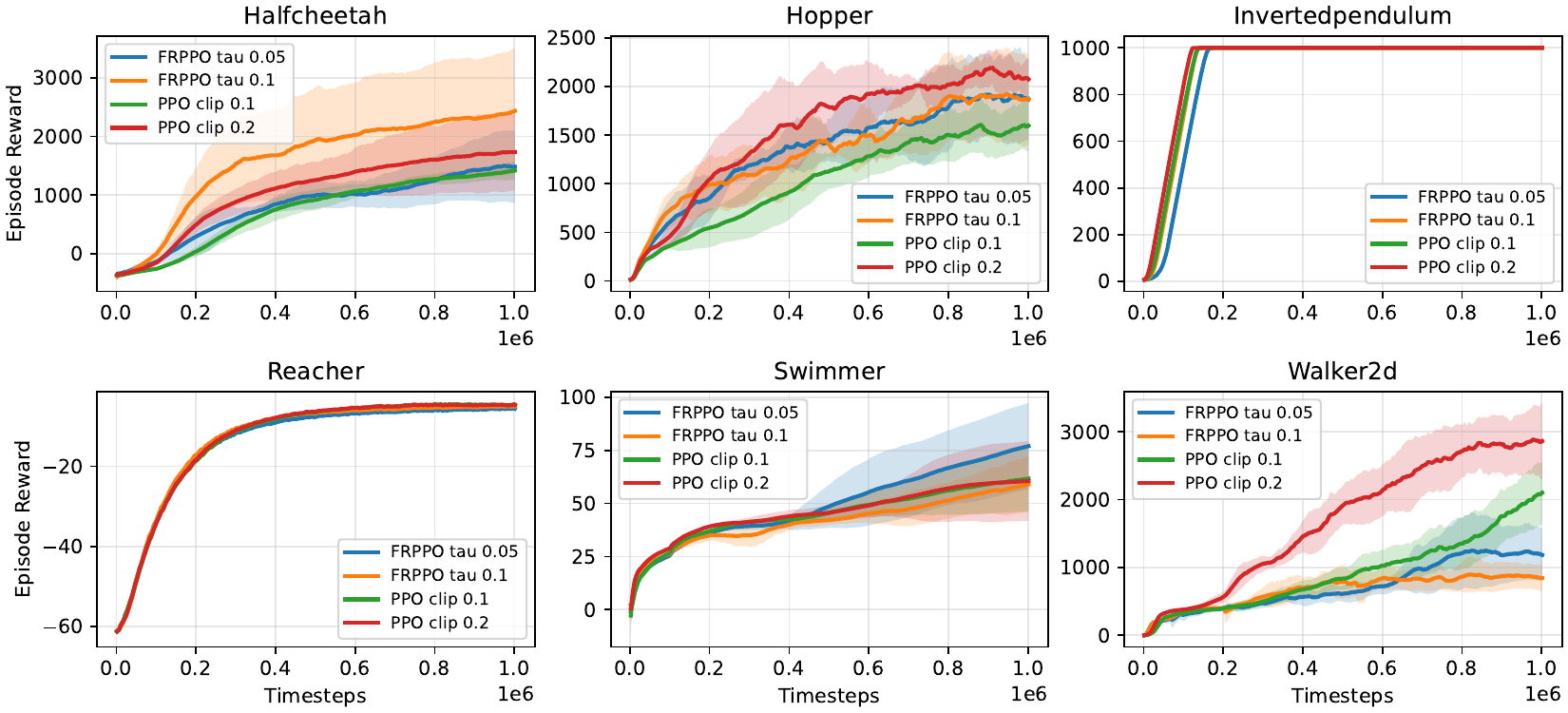}
\end{center}
\caption{Training curve for Mujoco environments with various clipping / penalty parameters.}
\label{fig:mujoco_ppo_frppo}
\end{figure}
Following training, we have run evaluation producing normalized scores in the same manner as for the Atari environments. 
These are recorded in Table~\ref{tab:mujoco_evaluation_results}.
Note that the summary Table~\ref{tab:ppo_vs_frppo_summary} and Table~\ref{tab:mujoco_evaluation_results} does not include results from the inverted pendulum environment because all trained policies were getting full score.

\begin{figure}
\begin{center}
\includegraphics[width=0.4\textwidth]{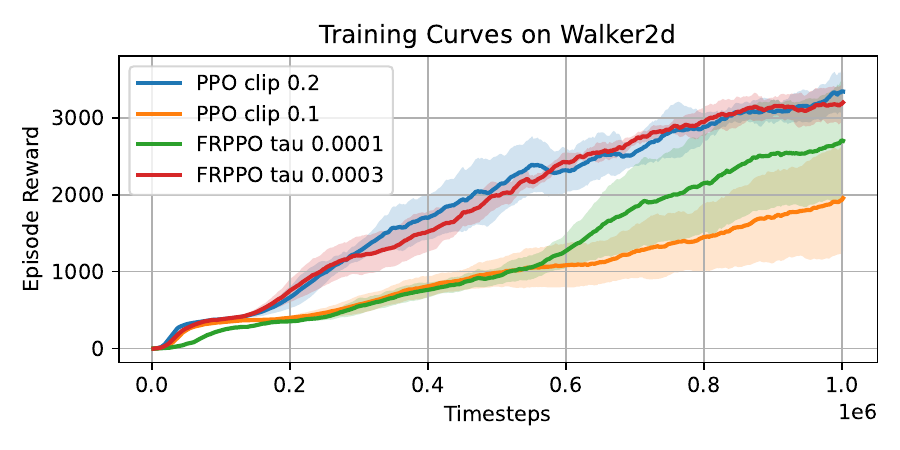}
\caption{Training curves for Walker 2d with alternative $\tau$ values.}
\label{fig:walker2d_alt}    
\end{center}
\end{figure}

\begin{table}
\centering
\begin{tabular}{lcccc}
\hline
\textbf{Environment} & \multicolumn{2}{c}{\textbf{PPO}} & \multicolumn{2}{c}{\textbf{FR--PPO}} \\
\cmidrule(lr){2-3} \cmidrule(lr){4-5}
& $\varepsilon=0.1$ & $\varepsilon=0.2$ & $\tau=1\times 10^{-4}$ & $\tau=3\times 10^{-4}$ \\
\hline
Walker2d-v5    & 0.86 & 0.29 & 0.44 & 0.84 \\
\hline
\end{tabular}%
\caption{Average normalized rewards on Walker2d with alternative $\tau$ values.}
\label{tab:walker2d_alt_evaluation_results}
\end{table}

For fairness of comparison we kept the PPO $\varepsilon$ and FR--PPO $\tau$ fixed. 
We see that for example in the Walker2d environment FR-PPO with both $\tau=0.1$ and $\tau=0.05$ performs noticeably worse than PPO.
But when took an order of smaller $\tau$ (larger penalty) in the Walker2d environment we obtained better results for FR--PPO (matching PPO), see Figure~\ref{fig:walker2d_alt} and Table~\ref{tab:walker2d_alt_evaluation_results}.
This is why we believe that FR-PPO has higher sensitivity to the choice of $\tau$ than PPO's sensitivity to choice of $\varepsilon$.
As the main focus of this paper is theoretical we leave further numerical experiments for future work and for those researchers with better access to compute resources.

We have only used 6 Atari and 6 Mujoco environments due to relatively limited access to compute. The results presented took on the order of days on a desktop-level GPU.


\section*{Acknowledgements} 
R-AL was partly supported by the EPSRC Centre for Doctoral Training in Mathematical Modelling, Analysis and Computation (MAC-MIGS) funded by the UK Engineering and Physical Sciences Research Council (grant EP/S023291/1), Heriot-Watt University and the University of Edinburgh. This work was initiated while R-AL was a PhD student at Heriot-Watt University and completed during a postdoctoral appointment at RIKEN
AIP.

\bibliographystyle{abbrv}
\bibliography{references}

\begin{thebibliography}{10}

\bibitem{achiam17a}
J.~Achiam, D.~Held, A.~Tamar, and P.~Abbeel.
\newblock Constrained policy optimization.
\newblock In {\em Proceedings of the 34th International Conference on Machine Learning}, volume~70 of {\em Proceedings of Machine Learning Research}, pages 22--31. PMLR, 06--11 Aug 2017.

\bibitem{agarwal}
A.~Agarwal, S.~M. Kakade, J.~D. Lee, and G.~Mahajan.
\newblock On the theory of policy gradient methods: optimality, approximation, and distribution shift.
\newblock {\em J. Mach. Learn. Res.}, 22(1), Jan. 2021.

\bibitem{NIPS2007_da0d1111}
A.~Antos, C.~Szepesv\'{a}ri, and R.~Munos.
\newblock Fitted {Q}-iteration in continuous action-space {MDP}s.
\newblock In {\em Advances in Neural Information Processing Systems}, volume~20. Curran Associates, Inc., 2007.

\bibitem{korba}
P.-C. Aubin-Frankowski, A.~Korba, and F.~L\'{e}ger.
\newblock Mirror descent with relative smoothness in measure spaces, with application to {S}inkhorn and {E}{M}.
\newblock In {\em Advances in Neural Information Processing Systems}, volume~35, pages 17263--17275. Curran Associates, Inc., 2022.

\bibitem{bertsekas2004stochastic}
D.~P. Bertsekas and S.~E. Shreve.
\newblock {\em Stochastic Optimal Control: The Discrete-Time Case}.
\newblock Academic Press, Inc., USA, 1978.

\bibitem{black2024training}
K.~Black, M.~Janner, Y.~Du, I.~Kostrikov, and S.~Levine.
\newblock Training diffusion models with reinforcement learning.
\newblock In {\em ICML 2023 Workshop on Structured Probabilistic Inference $\&$ Generative Modeling}, 2023.

\bibitem{Bonnans2000PerturbationAO}
J.~F. Bonnans and A.~Shapiro.
\newblock {\em Perturbation Analysis of Optimization Problems}.
\newblock Springer Series in Operations Research, 2000.

\bibitem{Carmona2018ProbabilisticTO}
R.~A. Carmona and F.~Delarue.
\newblock {\em Probabilistic Theory of Mean Field Games with Applications I: Mean Field FBSDEs, Control, and Games}.
\newblock Springer International Publishing, 2018.

\bibitem{ciarlet2013linear}
P.~G. Ciarlet.
\newblock {\em Linear and Nonlinear Functional Analysis with Applications}.
\newblock Society for Industrial and Applied Mathematics, 2013.

\bibitem{Engstrom2020Implementation}
L.~Engstrom, A.~Ilyas, S.~Santurkar, D.~Tsipras, F.~Janoos, L.~Rudolph, and A.~Madry.
\newblock Implementation matters in deep {R}{L}: A case study on {PPO} and {TRPO}.
\newblock In {\em International Conference on Learning Representations}, 2020.

\bibitem{gan2024reflective}
Y.~Gan, R.~Yan, Z.~Wu, and J.~Xing.
\newblock Reflective policy optimization, 2024.
\newblock {A}vailable at \url{https://openreview.net/forum?id=OlwW4ZG3Ta}.

\bibitem{haarnoja2017reinforcement}
T.~Haarnoja, H.~Tang, P.~Abbeel, and S.~Levine.
\newblock Reinforcement learning with deep energy-based policies.
\newblock In {\em International conference on machine learning}, pages 1352--1361. PMLR, 2017.

\bibitem{heess2017emergencelocomotionbehavioursrich}
N.~Heess, D.~TB, S.~Sriram, J.~Lemmon, J.~Merel, G.~Wayne, Y.~Tassa, T.~Erez, Z.~Wang, S.~M.~A. Eslami, M.~Riedmiller, and D.~Silver.
\newblock Emergence of locomotion behaviours in rich environments, 2017.
\newblock arXiv:1707.02286.

\bibitem{hernandez2012discrete}
O.~Hern{\'a}ndez-Lerma and J.~B. Lasserre.
\newblock {\em Discrete-time Markov control processes: basic optimality criteria}, volume~30.
\newblock Springer Science \& Business Media, 2012.

\bibitem{Holzleitner2021}
M.~Holzleitner, L.~Gruber, J.~Arjona-Medina, J.~Brandstetter, and S.~Hochreiter.
\newblock {\em Convergence Proof for {A}ctor-{C}ritic Methods Applied to {PPO} and {RUDDER}}, pages 105--130.
\newblock Springer Berlin Heidelberg, 2021.

\bibitem{howard1960dynamic}
R.~A. Howard.
\newblock {\em Dynamic programming and markov processes.}
\newblock John Wiley, 1960.

\bibitem{HU}
Y.~Hu, W.~Wang, H.~Liu, and L.~Liu.
\newblock Reinforcement learning tracking control for robotic manipulator with kernel-based dynamic model.
\newblock {\em IEEE Transactions on Neural Networks and Learning Systems}, 31(9):3570--3578, 2020.

\bibitem{huang2025correcting}
A.~Huang, W.~Zhan, T.~Xie, J.~D. Lee, W.~Sun, A.~Krishnamurthy, and D.~J. Foster.
\newblock Correcting the mythos of {KL}-regularization: Direct alignment without overoptimization via {C}hi-{S}quared preference optimization.
\newblock In {\em The Thirteenth International Conference on Learning Representations}, 2025.

\bibitem{Huang2023PPOClipAG}
N.-C. Huang, P.-C. Hsieh, K.-H. Ho, and I.-C. Wu.
\newblock {PPO-C}lip attains global optimality: Towards deeper understandings of clipping.
\newblock In {\em Proceedings of the 38th {AAAI} Conference on Artificial Intelligence (AAAI)}. AAAI Press, 2024.

\bibitem{huanga}
S.~Huang, R.~Dossa, A.~Raffin, A.~Kanervisto, and W.~Wang.
\newblock The 37 implementation details of {P}roximal {P}olicy {O}ptimization, 03 2022.
\newblock {A}vailable at \url{https://iclr-blog-track.github.io/2022/03/25/ppo-implementation-details/}.

\bibitem{Ilyas2018AreDP}
A.~Ilyas, L.~Engstrom, S.~Santurkar, D.~Tsipras, F.~Janoos, L.~Rudolph, and A.~Madry.
\newblock Are deep policy gradient algorithms truly policy gradient algorithms?, 2018.
\newblock arXiv:1811.02553.

\bibitem{jabir2021meanfieldneuralodesrelaxed}
J.-F. Jabir, D.~Šiška, and {\L}.~Szpruch.
\newblock Mean-field neural {ODE}s via relaxed optimal control, 2021.
\newblock arXiv:1912.05475.

\bibitem{jin2024on}
R.~Jin, S.~Li, and B.~Wang.
\newblock On stationary point convergence of {PPO}-clip.
\newblock In {\em The Twelfth International Conference on Learning Representations}, 2024.

\bibitem{kakade}
S.~Kakade and J.~Langford.
\newblock Approximately optimal approximate reinforcement learning.
\newblock In {\em Proceedings of the Nineteenth International Conference on Machine Learning}, ICML '02, page 267–274, 2002.

\bibitem{kakadenatural}
S.~M. Kakade.
\newblock A natural policy gradient.
\newblock In {\em Advances in Neural Information Processing Systems}, volume~14. MIT Press, 2001.

\bibitem{kerimkulov2024fisherraogradientflowentropyregularised}
B.~Kerimkulov, D.~Šiška, {\L}.~Szpruch, J.-M. Leahy, and Y.~Zhang.
\newblock A {F}isher–{R}ao gradient flow for entropy-regularised {M}arkov {D}ecision {P}rocesses in {P}olish spaces.
\newblock {\em Foundations of Computational Mathematics}, pages 1--75, 08 2025.

\bibitem{kerimkulov2024mirrordescentstochasticcontrol}
B.~Kerimkulov, D.~Šiška, Łukasz Szpruch, and Y.~Zhang.
\newblock Mirror descent for stochastic control problems with measure-valued controls.
\newblock {\em Stochastic Processes and their Applications}, 190:104765, 2025.

\bibitem{KOBAYASHI}
T.~Kobayashi.
\newblock Proximal policy optimization with adaptive threshold for symmetric relative density ratio.
\newblock {\em Results in Control and Optimization}, 10:100192, 2023.

\bibitem{kunze2011pettis}
M.~Kunze.
\newblock A pettis-type integral and applications to transition semigroups.
\newblock {\em Czechoslovak Mathematical Journal}, 61(2):437–459, June 2011.

\bibitem{lagoudakis}
M.~G. Lagoudakis and R.~Parr.
\newblock Least-squares policy iteration.
\newblock {\em J. Mach. Learn. Res.}, 4:1107–1149, 2003.

\bibitem{lan2022policymirrordescentreinforcement}
G.~Lan.
\newblock Policy {M}irror descent for reinforcement learning: Linear convergence, new sampling complexity, and generalized problem classes, 2022.
\newblock arXiv:2102.00135.

\bibitem{liuCaiYangWang}
B.~Liu, Q.~Cai, Z.~Yang, and Z.~Wang.
\newblock {N}eural {T}rust {R}egion/{P}roximal {P}olicy {O}ptimization attains globally optimal policy.
\newblock In {\em Advances in Neural Information Processing Systems}, volume~32. Curran Associates, Inc., 2019.

\bibitem{makoviychuk2021isaac}
V.~Makoviychuk, L.~Wawrzyniak, Y.~Guo, M.~Lu, K.~Storey, M.~Macklin, D.~Hoeller, N.~Rudin, A.~Allshire, A.~Handa, and G.~State.
\newblock Isaac {G}ym: High performance {GPU} based physics simulation for robot learning.
\newblock In {\em Thirty-fifth Conference on Neural Information Processing Systems Datasets and Benchmarks Track (Round 2)}, 2021.

\bibitem{mei20b}
J.~Mei, C.~Xiao, C.~Szepesvari, and D.~Schuurmans.
\newblock On the global convergence rates of softmax policy gradient methods.
\newblock In {\em Proceedings of the 37th International Conference on Machine Learning}, volume 119 of {\em Proceedings of Machine Learning Research}, pages 6820--6829. PMLR, 13--18 Jul 2020.

\bibitem{mnih2015humanlevel}
V.~Mnih, K.~Kavukcuoglu, D.~Silver, A.~A. Rusu, J.~Veness, M.~G. Bellemare, A.~Graves, M.~Riedmiller, A.~K. Fidjeland, G.~Ostrovski, S.~Petersen, C.~Beattie, A.~Sadik, I.~Antonoglou, H.~King, D.~Kumaran, D.~Wierstra, S.~Legg, and D.~Hassabis.
\newblock Human-level control through deep reinforcement learning.
\newblock {\em Nature}, 518(7540):529--533, Feb. 2015.

\bibitem{JMLR:v9:munos08a}
R.~Munos and C.~Szepesv{{\'a}}ri.
\newblock Finite-time bounds for fitted value iteration.
\newblock {\em Journal of Machine Learning Research}, 9(27):815--857, 2008.

\bibitem{ouyang}
L.~Ouyang, J.~Wu, X.~Jiang, D.~Almeida, C.~L. Wainwright, P.~Mishkin, C.~Zhang, S.~Agarwal, K.~Slama, A.~Ray, J.~Schulman, J.~Hilton, F.~Kelton, L.~Miller, M.~Simens, A.~Askell, P.~Welinder, P.~Christiano, J.~Leike, and R.~Lowe.
\newblock Training language models to follow instructions with human feedback.
\newblock In {\em Proceedings of the 36th International Conference on Neural Information Processing Systems}, NIPS '22, Red Hook, NY, USA, 2022. Curran Associates Inc.

\bibitem{PETERS2008682}
J.~Peters and S.~Schaal.
\newblock Reinforcement learning of motor skills with policy gradients.
\newblock {\em Neural Networks}, 21(4):682--697, 2008.
\newblock Robotics and Neuroscience.

\bibitem{queeney2021generalized}
J.~Queeney, I.~Paschalidis, and C.~Cassandras.
\newblock Generalized proximal policy optimization with sample reuse.
\newblock In {\em Advances in Neural Information Processing Systems}, 2021.

\bibitem{rudin2021learning}
N.~Rudin, D.~Hoeller, P.~Reist, and M.~Hutter.
\newblock Learning to walk in minutes using massively parallel deep reinforcement learning.
\newblock In {\em 5th Annual Conference on Robot Learning}, 2021.

\bibitem{scherrer2013performanceboundspolicysearch}
B.~Scherrer.
\newblock On the performance bounds of some policy search dynamic programming algorithms, 2013.
\newblock arXiv:1306.0539.

\bibitem{schulman-15}
J.~Schulman, S.~Levine, P.~Abbeel, M.~Jordan, and P.~Moritz.
\newblock Trust region policy optimization.
\newblock In {\em Proceedings of the 32nd International Conference on Machine Learning}, volume~37 of {\em Proceedings of Machine Learning Research}, pages 1889--1897. PMLR, Jul 2015.

\bibitem{schulman2015high}
J.~Schulman, P.~Moritz, S.~Levine, M.~Jordan, and P.~Abbeel.
\newblock High-dimensional continuous control using generalized advantage estimation.
\newblock {\em arXiv preprint arXiv:1506.02438}, 2015.

\bibitem{schulman2017}
J.~Schulman, F.~Wolski, P.~Dhariwal, A.~Radford, and O.~Klimov.
\newblock Proximal policy optimization algorithms, 2017.
\newblock arXiv:1707.06347.

\bibitem{silver}
D.~Silver, T.~Hubert, J.~Schrittwieser, I.~Antonoglou, M.~Lai, A.~Guez, M.~Lanctot, L.~Sifre, D.~Kumaran, T.~Graepel, T.~Lillicrap, K.~Simonyan, and D.~Hassabis.
\newblock A general reinforcement learning algorithm that masters {C}hess, {S}hogi, and {G}o through self-play.
\newblock {\em Science}, 362(6419):1140--1144, 2018.

\bibitem{sutton2018reinforcement}
R.~S. Sutton and A.~G. Barto.
\newblock {\em Reinforcement learning: {A}n introduction}.
\newblock MIT press, 2018.

\bibitem{sutton1999policy}
R.~S. Sutton, D.~McAllester, S.~Singh, and Y.~Mansour.
\newblock Policy gradient methods for reinforcement learning with function approximation.
\newblock In {\em Advances in Neural Information Processing Systems}, volume~12. MIT Press, 1999.

\bibitem{SUTTON1999181}
R.~S. Sutton, D.~Precup, and S.~Singh.
\newblock Between {MDP}s and semi-{MDP}s: A framework for temporal abstraction in reinforcement learning.
\newblock {\em Artificial Intelligence}, 112(1):181--211, 1999.

\bibitem{tomar2021mirrordescentpolicyoptimization}
M.~Tomar, L.~Shani, Y.~Efroni, and M.~Ghavamzadeh.
\newblock Mirror descent policy optimization, 2021.
\newblock arXiv:2005.09814.

\bibitem{tsybakov2008introduction}
A.~Tsybakov.
\newblock {\em Introduction to Nonparametric Estimation}.
\newblock Springer Series in Statistics. Springer New York, 2008.

\bibitem{Vinyals2019GrandmasterLI}
O.~Vinyals, I.~Babuschkin, W.~M. Czarnecki, M.~Mathieu, A.~Dudzik, J.~Chung, D.~Choi, R.~Powell, T.~Ewalds, P.~Georgiev, J.~Oh, D.~Horgan, M.~Kroiss, I.~Danihelka, A.~Huang, L.~Sifre, T.~Cai, J.~P. Agapiou, M.~Jaderberg, A.~S. Vezhnevets, R.~Leblond, T.~Pohlen, V.~Dalibard, D.~Budden, Y.~Sulsky, J.~Molloy, T.~L. Paine, C.~Gulcehre, Z.~Wang, T.~Pfaff, Y.~Wu, R.~Ring, D.~Yogatama, D.~W{\"u}nsch, K.~McKinney, O.~Smith, T.~Schaul, T.~P. Lillicrap, K.~Kavukcuoglu, D.~Hassabis, C.~Apps, and D.~Silver.
\newblock Grandmaster level in {S}tar{C}raft {II} using multi-agent reinforcement learning.
\newblock {\em Nature}, 575:350 -- 354, 2019.

\bibitem{wang20}
Y.~Wang, H.~He, and X.~Tan.
\newblock Truly proximal policy optimization.
\newblock In {\em Proceedings of The 35th Uncertainty in Artificial Intelligence Conference}, volume 115 of {\em Proceedings of Machine Learning Research}, pages 113--122. PMLR, 22--25 Jul 2020.

\bibitem{williams}
R.~J. Williams.
\newblock Simple statistical gradient-following algorithms for connectionist reinforcement learning.
\newblock {\em Mach. Learn.}, 8(3–4):229–256, May 1992.

\bibitem{xie2025simple}
Z.~Xie, Q.~Zhang, F.~Yang, and R.~Xu.
\newblock Simple policy optimization, 2025.
\newblock {A}vailable at \url{https://openreview.net/forum?id=MOEqbKoozj}.

\bibitem{ye}
D.~Ye, Z.~Liu, M.~Sun, B.~Shi, P.~Zhao, H.~Wu, H.~Yu, S.~Yang, X.~Wu, Q.~Guo, Q.~Chen, Y.~Yin, H.~Zhang, T.~Shi, L.~Wang, Q.~Fu, W.~Yang, and L.~Huang.
\newblock Mastering complex control in {MOBA} games with deep reinforcement learning, 2019.
\newblock arXiv:1912.09729.

\bibitem{zhan2023policymirrordescentregularized}
W.~Zhan, S.~Cen, B.~Huang, Y.~Chen, J.~D. Lee, and Y.~Chi.
\newblock Policy {M}irror descent for regularized reinforcement learning: A generalized framework with linear convergence, 2023.
\newblock arXiv:2105.11066.

\bibitem{zhong2023a}
H.~Zhong and T.~Zhang.
\newblock A theoretical analysis of {O}ptimistic {P}roximal {P}olicy {O}ptimization in {L}inear {M}arkov {D}ecision {P}rocesses.
\newblock In {\em Thirty-seventh Conference on Neural Information Processing Systems}, 2023.

\bibitem{zhuang2023behavior}
Z.~Zhuang, K.~Lei, J.~Liu, D.~Wang, and Y.~Guo.
\newblock Behavior proximal policy optimization.
\newblock In {\em The Eleventh International Conference on Learning Representations}, 2023.

\end{thebibliography}

\end{document}